\newtheorem{theorem}{Theorem}
\newtheorem*{theorem*}{Theorem}
\newtheorem{Lemma}{Lemma}
\DeclareMathOperator*{\argmin}{arg\,min}
\title{ Black-box Optimizer with Implicit Natural Gradient}
\author{%
 Yueming Lyu \\
  Australian Artificial Intelligence Institute\\
  University of Technology Sydney\\
  \texttt{yueminglyu@gmail.com} \\
   \And
   Ivor Tsang \\
   Australian Artificial Intelligence Institute \\
   University of Technology Sydney \\
   \texttt{Ivor.Tsang@uts.edu.au} \\
}
\begin{document}

\maketitle

\begin{abstract}
  Black-box optimization is primarily important for many compute-intensive applications, including reinforcement learning (RL), robot control, etc.
This paper presents a novel theoretical framework for black-box optimization, in which our method performs stochastic update with implicit natural gradient  of an exponential-family distribution.  Theoretically, we prove the convergence rate of our framework with full matrix update for convex functions. Our theoretical results also hold for continuous non-differentiable black-box functions. Our methods are very simple and contain less hyper-parameters than CMA-ES~\cite{hansen2006cma}.  Empirically, our method with full matrix update achieves  a competitive performance compared with one of the state-of-the-art method CMA-ES on  benchmark test problems. Moreover, our methods can achieve high optimization precision  on some challenging test functions (e.g., $l_1$-norm ellipsoid test problem and Levy test problem), while methods with explicit natural gradient, i.e., IGO~\cite{ollivier2017information} with full matrix update can not.  This shows the efficiency of our methods.
\end{abstract}

\section{Introduction}
Given a proper function $f({\boldsymbol{x}}): \mathbb{R}^d \rightarrow \mathbb{R}$ such that $f({\boldsymbol{x}}) > -\infty$, we aim at minimizing $f({\boldsymbol{x}})$ by using function queries only, which is known as black-box optimization.  It has a wide range of applications, such as automatic hyper-parameters tuning in machine learning and  computer vision problems \cite{Nips2012practical}, adjusting parameters for robot control and reinforcement learning~\cite{choromanskicomplexity,robot,BORL}, black-box architecture search in engineering design~\cite{engineerDisign} and drug discovery~\cite{drugdiscovery}.

 Several kinds of approaches have been widely studied for black-box optimization, including Bayesian optimization (BO) methods~\cite{gpucb,bull,lyu2019efficient},  evolution strategies (ES)~\cite{back1991survey,hansen2006cma} and genetic algorithms (GA)~\cite{srinivas1994genetic}.  Among them, Bayesian optimization  methods are good at dealing with low-dimensional expensive black-box optimization, while    ES methods  are better for relatively high-dimensional problems with cheaper evaluations compared with BO  methods. ES-type algorithms can well support parallel evaluation, and 
  have drawn more and more attention because of its success in reinforcement learning problems~\cite{choromanski2018structured,salimans2017evolution,Liu2019}, recently.  
  
CMA-ES~\cite{hansen2006cma} is one of state-of-the-art ES methods with many successful applications. It uses second-order information to search candidate solutions by 
updating the mean and covariance matrix of the likelihood  of candidate distributions.
Despite  its successful performance, the update rule combines several sophisticated components, which is not well understood. 
Wierstra et al. show that directly applying standard reinforce gradient descent is very sensitive to variance  in high precision search for  black-box optimization~\cite{NES}. Thus, they propose Natural evolution strategies (NES)~\cite{NES} to  estimate the natural gradient for black-box optimization. However, they use the Monte Carlo sampling to approximate the Fisher information matrix (FIM), which incurs additional error and computation cost unavoidably. Along this line,
\cite{akimoto2010bidirectional} show the connection between the rank-$\mu$ update of CMA-ES and NES~\cite{NES}. 
\cite{ollivier2017information}  further show that several ES methods can be included in an unified  framework. Despite  these theoretical attempts, the practical performance of these methods is still inferior to CMA-ES. Moreover, these works do not provide any convergence rate analysis, which is the key insight to expedite  black-box optimizations.

Another line of research for ES-type algorithms is to reduce the variance of gradient estimators. Choromanski et al.~\cite{choromanski2018structured} 
proposed to employ Quasi Monte Carlo (QMC) sampling to achieve more accurate gradient estimates. 
Recently,  they further proposed to construct gradient estimators based on active subspace techniques~\cite{choromanski2019complexity}. Although these works can reduce sample complexity, how does the variance of these estimators influence the convergence rate remains unclear.  

To take advantage of second-order information for the acceleration of  black-box optimizations, we propose a novel theoretical  framework: stochastic Implicit Natural Gradient Optimization (INGO) algorithms,  from the perspective of information geometry.   Raskutti et al.~\cite{raskutti2015information} give a method to compute the Fisher information matrix implicitly using exact gradients, which is impossible for black-box optimization; while our methods and analysis focus on  black-box optimization. To the best of our knowledge, we are the first to design stochastic implicit natural gradient algorithms w.r.t natural parameters for black-box optimization. Our methods take a stochastic black-box estimate instead of the exact gradient to update.  Theoretically, this update is equivalent to a stochastic  natural gradient step w.r.t. natural parameters of an  exponential-family distribution. 
 Our contributions are summarized as follows:
 \begin{itemize}
     \item   We propose a novel stochastic implicit natural gradient descent framework  for black-box optimization (INGO).  To the best of our knowledge, we are the first to design stochastic implicit natural gradient algorithms w.r.t natural parameters for black-box optimization. We propose efficient algorithms   for both continuous  and discrete black-box optimization. Our methods construct stochastic black-box update  without computing the FIM.   Our method can adaptively control the stochastic update by taking advantage of the second-order information, which is able to accelerate convergence and is primarily important for ill-conditioned problems. Moreover,  our methods have  fewer  hyperparameters and are much simpler than CMA-ES.

     \item Theoretically, we prove the convergence rate of our continuous optimization methods for  convex functions. Our theoretical results also hold for non-differentiable convex black-box functions. This is distinct from  most literature works that need Lipschitz continuous gradients ($L$-smooth)  assumption.  Our theoretical results can include many interesting problems with non-smooth structures. We also show that reducing variance of the black-box gradient estimators by orthogonal  sampling can lead to  a small regret bound.

     \item Empirically, our continuous optimization method achieves  a competitive performances compared with the state-of-the-art method CMA-ES on  benchmark problems. We find that our method with full matrix update can obtain higher optimization precision compared with IGO~\cite{ollivier2017information} on some challenging problems.    We further show the effectiveness of our methods  on RL control problems. Moreover, our discrete optimization algorithm outperforms  GA method.
 \end{itemize}

 \section{Notation and Symbols}
 
 Denote $\|\cdot\|_2$ and $\|\cdot\|_F$ as the spectral norm and Frobenius norm for matrices, respectively. Define $\|Y\|_{tr}:=\sum_i {|\lambda_i|}$, where $\lambda_i$ denotes the $i^{th}$ eigenvalue of matrix $Y$.  Notation $\|\cdot\|_2$ will also denote $l_2$-norm for vectors. Symbol $\left<\cdot, \cdot \right>$ denotes inner product under $l_2$-norm  for vectors   and inner product under Frobenius norm for matrices. Define $\|\boldsymbol{x}\|_C: = \sqrt{ \left<\boldsymbol{x},C\boldsymbol{x} \right> }$. Denote $\mathcal{S}^{+}$ and $\mathcal{S}^{++}$ as the set of positive semi-definite matrices and the set of positive definite matrices, respectively.  
 Denote  $\Sigma^{\frac{1}{2}}$  as the symmetric positive semi-definite matrix such that $\Sigma = \Sigma^{\frac{1}{2}}\Sigma^{\frac{1}{2}}$ for $\Sigma \in \mathcal{S}^{+}$.

\section{Implicit Natural Gradient Optimization}

\subsection{Optimization with  Exponential-family Sampling}
We aim at minimizing a proper  function $f({\boldsymbol{x}})$, ${\boldsymbol{x}} \in \mathcal{X}$ with only function  queries, which is known as black-box optimization. 
Due to the lack of gradient information for black-box optimization, we here present an exponential-family sampling trick to relax any black-box optimization problem. Specifically, 
the objective is relaxed as the expectation of $f({\boldsymbol{x}})$ under a parametric distribution $p({\boldsymbol{x}};\eta)$ with parameter $\eta$, i.e., $J({\boldsymbol{\eta}}):=\mathbb{E}_{p({\boldsymbol{x}};{\boldsymbol{\eta}})}[f({\boldsymbol{x}})]$~\cite{NES}. The optimal parameter ${\boldsymbol{\eta}}$ is found by minimizing  $J({\boldsymbol{\eta}})$ as $\min _{{\boldsymbol{\eta}} } \left\{ \mathbb{E}_{p({\boldsymbol{x}};{\boldsymbol{\eta}})}[f({\boldsymbol{x}})]  \right \}$
This relaxed problem is minimized when the probability mass is all assigned on the minimum of $f({\boldsymbol{x}})$. The distribution $p$ is the sampling distribution for black-box function queries. Note, $p$ can be  either  continuous  or discrete.

In this work, we assume that the distribution $p({\boldsymbol{x}};{\boldsymbol{\eta}})$ is an  exponential-family distribution:
\begin{align}
  p({\boldsymbol{x}};{\boldsymbol{\eta}}) = h({\boldsymbol{x}})\exp{\{\left < \phi({\boldsymbol{x}}),{\boldsymbol{\eta}} \right > -A({\boldsymbol{\eta}}) \}} ,  
\end{align} 
where ${\boldsymbol{\eta}}$ and $\phi({\boldsymbol{x}})$ are the natural parameter and sufficient statistic, respectively. And $A(\eta)$ is the log partition function defined as $
   A(\eta) = \log \int \exp{\{\left < \phi({\boldsymbol{x}}),{\boldsymbol{\eta}} \right >} h({\boldsymbol{x}})  {d\boldsymbol{x}}$.

It is named as minimal exponential-family distribution   when there is a one-to-one mapping between the mean parameter ${\boldsymbol{m}}:= \mathbb{E}_{p}[\phi({\boldsymbol{x}})]$ and natural parameter  ${\boldsymbol{\eta}}$. This one-to-one mapping ensures that we can reparameterize $J({\boldsymbol{\eta}})$ as
$ \tilde{J}({\boldsymbol{m}}) = J({\boldsymbol{\eta}})$~\cite{amari2016information,khan2017conjugate}. $ \tilde{J}$ is w.r.t parameter $\boldsymbol{m}$, while $J$ is w.r.t parameter~$\boldsymbol{\eta}$.

To minimize the objective $ \tilde{J}({\boldsymbol{m}}) $, we desire the updated distribution  lying in a trust region of the previous distribution at each step. Formally, we update the mean parameters by solving the following optimization problem.
\begin{align}
 {\boldsymbol{m}}_{t+1} \!\!=\!\! \arg\min _{{\boldsymbol{m}}}
 \!\left<{\boldsymbol{m}}\! ,\!  \nabla _ {\boldsymbol{m}} \tilde{J}({\boldsymbol{m}_t})\! \right > \!+\! \frac{1}{\beta_t} \text{KL}\left( p_m \| p_{m_t}  \right),
 \label{Mop}
\end{align}
where $\nabla _ {\boldsymbol{m}} \tilde{J}({\boldsymbol{m}}_t)$ denotes the gradient at ${\boldsymbol{m}} = {\boldsymbol{m}}_t$.

The KL-divergence term measures how close the updated distribution and the previous distribution. For an  exponential-family distribution, the KL-divergence term in (\ref{Mop}) is equal to Bregman divergence between ${\boldsymbol{m}}$ and ${\boldsymbol{m}}_t$~\cite{azoury2001relative}:
\begin{equation}
\text{KL}\left( p_m \| p_{m_t}  \right) \!=\!    A^*({\boldsymbol{m}}) \!-\! A^*({\boldsymbol{m}}_t) \!-\!  \left<{\boldsymbol{m}}\!-\!{\boldsymbol{m}}_t, \nabla _{\boldsymbol{m}}A^*({\boldsymbol{m}}_t)  \right>,
\end{equation}
where $A^*({\boldsymbol{m}})$ is the convex conjugate of $A({\boldsymbol{\eta}})$. Thus,
the problem (\ref{Mop}) is a convex optimization problem, and it has a closed-form solution.

\subsection{Implicit Natural Gradient}

\textbf{Intractability of Natural Gradient for Black-box Optimization:} Natural gradient~\cite{amari1998natural}  can capture information geometry structure during optimization, which  enables us to take advantage of the second-order information to accelerate convergence. Direct computation of natural gradient needs the inverse of Fisher information matrix (FIM), which needs to estimate the FIM. The method in \cite{raskutti2015information} provides an alternative way to compute natural gradient without computation of FIM.  However, it relies on the exact gradient, which is  impossible for black-box optimization. 

 Hereafter, we propose a novel stochastic implicit natural gradient algorithms for black-box optimization of continuous and discrete variables in Section~\ref{OpGaussian} and Section~A (in the supplement), respectively.
We first show how to compute the implicit natural gradient. In problem Eq.(\ref{Mop}), we
take the derivative w.r.t ${\boldsymbol{m}}$, and set it to zero, also note that $\nabla _{\boldsymbol{m}}A^*({\boldsymbol{m}}) = {\boldsymbol{\eta}}$~\cite{raskutti2015information}, we can obtain that
\begin{align}
      \boldsymbol{\eta}_{t+1} = \boldsymbol{\eta}_{t} - \beta _t \nabla _ {\boldsymbol{m}} \tilde{J}({\boldsymbol{m}}_t)
\label{updateEta}      
\end{align}

Natural parameters $\boldsymbol{\eta}$ of the distribution lies on a Riemannian manifold with metric tensor specified by the Fisher Information Matrix:
\begin{align}
    {\boldsymbol{F}}({\boldsymbol{\eta}}):= \mathbb{E}_p \left[ \nabla _{\eta} \log {p({\boldsymbol{x}};{\boldsymbol{\eta}})}  \nabla _{\eta} \log {p({\boldsymbol{x}};{\boldsymbol{\eta}})}^\top   \right]
\end{align}
For exponential-family with the minimal representation, the natural gradient has a simple form for computation. 
\begin{theorem}
{\cite{khan2018fast,raskutti2015information}} For an exponential-family in the minimal representation, the natural gradient w.r.t ${\boldsymbol{\eta}}$ is equal to the gradient w.r.t. ${\boldsymbol{m}}$, i.e.,
\begin{align}
    {\boldsymbol{F}}({\boldsymbol{\eta}})^{-1} \nabla _{\boldsymbol{\eta}}J({\boldsymbol{\eta}}) = \nabla _ {\boldsymbol{m}} \tilde{J}({\boldsymbol{m}})
\end{align}
\end{theorem}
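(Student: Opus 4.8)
The plan is to reduce the identity to two standard structural facts about a minimal exponential family and then apply the chain rule to the reparameterization $\tilde{J}(\boldsymbol{m}) = J(\boldsymbol{\eta})$. First I would compute the score function. Since $\log p(\boldsymbol{x};\boldsymbol{\eta}) = \log h(\boldsymbol{x}) + \left<\phi(\boldsymbol{x}),\boldsymbol{\eta}\right> - A(\boldsymbol{\eta})$, differentiating in $\boldsymbol{\eta}$ gives $\nabla_{\boldsymbol{\eta}} \log p(\boldsymbol{x};\boldsymbol{\eta}) = \phi(\boldsymbol{x}) - \nabla_{\boldsymbol{\eta}} A(\boldsymbol{\eta})$. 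Taking expectations and using the fact that the score has mean zero, $\mathbb{E}_p[\nabla_{\boldsymbol{\eta}}\log p] = 0$, yields the mean-map identity $\boldsymbol{m} = \mathbb{E}_p[\phi(\boldsymbol{x})] = \nabla_{\boldsymbol{\eta}} A(\boldsymbol{\eta})$, so the centered score is simply $\phi(\boldsymbol{x}) - \boldsymbol{m}$.

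Second, I would identify the Fisher information with the Jacobian of the mean map. Substituting the centered score into the definition of $\boldsymbol{F}$ gives $\boldsymbol{F}(\boldsymbol{\eta}) = \mathbb{E}_p[(\phi(\boldsymbol{x})-\boldsymbol{m})(\phi(\boldsymbol{x})-\boldsymbol{m})^\top] = \mathrm{Cov}_p[\phi(\boldsymbol{x})]$. On the other hand, differentiating $\boldsymbol{m}(\boldsymbol{\eta}) = \int \phi(\boldsymbol{x}) p(\boldsymbol{x};\boldsymbol{\eta})\, d\boldsymbol{x}$ once more under the integral sign shows that $\partial \boldsymbol{m}/\partial \boldsymbol{\eta} = \nabla^2_{\boldsymbol{\eta}} A(\boldsymbol{\eta}) = \mathrm{Cov}_p[\phi(\boldsymbol{x})]$. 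Hence $\boldsymbol{F}(\boldsymbol{\eta}) = \partial \boldsymbol{m}/\partial \boldsymbol{\eta} = \nabla^2_{\boldsymbol{\eta}} A(\boldsymbol{\eta})$. In particular $\boldsymbol{F}(\boldsymbol{\eta})$ is symmetric, and minimality (affine independence of the sufficient statistics) guarantees this covariance is strictly positive definite, so $\boldsymbol{F}(\boldsymbol{\eta})^{-1}$ exists and $\boldsymbol{\eta}\mapsto\boldsymbol{m}$ is a bijection.

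Finally, I would differentiate the reparameterization identity $\tilde{J}(\boldsymbol{m}(\boldsymbol{\eta})) = J(\boldsymbol{\eta})$. The chain rule gives $\nabla_{\boldsymbol{\eta}} J(\boldsymbol{\eta}) = (\partial \boldsymbol{m}/\partial \boldsymbol{\eta})^\top \nabla_{\boldsymbol{m}} \tilde{J}(\boldsymbol{m})$, and using the symmetry $\partial \boldsymbol{m}/\partial \boldsymbol{\eta} = \boldsymbol{F}(\boldsymbol{\eta}) = \boldsymbol{F}(\boldsymbol{\eta})^\top$ established above, this becomes $\nabla_{\boldsymbol{\eta}} J(\boldsymbol{\eta}) = \boldsymbol{F}(\boldsymbol{\eta}) \nabla_{\boldsymbol{m}} \tilde{J}(\boldsymbol{m})$. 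Left-multiplying by $\boldsymbol{F}(\boldsymbol{\eta})^{-1}$ then gives the claimed identity $\boldsymbol{F}(\boldsymbol{\eta})^{-1} \nabla_{\boldsymbol{\eta}} J(\boldsymbol{\eta}) = \nabla_{\boldsymbol{m}} \tilde{J}(\boldsymbol{m})$.

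I expect the main obstacle to be the second step, namely rigorously establishing $\boldsymbol{F}(\boldsymbol{\eta}) = \nabla^2_{\boldsymbol{\eta}} A(\boldsymbol{\eta})$. This requires justifying the interchange of differentiation and integration (via dominated convergence, or the analyticity of the log-partition function on the interior of the natural parameter domain) and invoking minimality to ensure that $\boldsymbol{F}(\boldsymbol{\eta})$ is strictly positive definite, so that its inverse is well defined and the mean-to-natural map is a genuine diffeomorphism. Once these regularity facts are in place, the chain-rule computation in the final step is entirely routine.
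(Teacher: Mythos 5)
Your proposal is correct and follows essentially the same route as the paper, which dispatches Theorem~1 in its remark via the chain rule together with the identities ${\boldsymbol{m}} = \nabla_{\boldsymbol{\eta}} A({\boldsymbol{\eta}})$ and ${\boldsymbol{F}}({\boldsymbol{\eta}}) = \frac{\partial^2 A({\boldsymbol{\eta}})}{\partial {\boldsymbol{\eta}} \partial {\boldsymbol{\eta}}^\top}$. You merely fill in the standard details the paper leaves implicit (mean-zero score, differentiation under the integral, and minimality giving positive definiteness of the covariance so that ${\boldsymbol{F}}({\boldsymbol{\eta}})^{-1}$ exists), all of which are correct.
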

{\bf Remark:} Theorem 1 can be easily obtained by the chain rule and the fact ${\boldsymbol{F}}({\boldsymbol{\eta}})=\frac{\partial^2A({\boldsymbol{\eta}})}{\partial {\boldsymbol{\eta}} \partial {\boldsymbol{\eta}}^\top }$. It  enables us to compute the natural gradient implicitly without computing the inverse of the Fisher information matrix. As shown in  Theorem 1, the update rule in (\ref{updateEta}) is equivalent to the natural gradient update w.r.t ${\boldsymbol{\eta}}$ in (\ref{Nupdate}):
\begin{align}
\label{Nupdate}
    \boldsymbol{\eta}_{t+1} = \boldsymbol{\eta}_{t} - \beta _t {\boldsymbol{F}}({\boldsymbol{\eta}}_t)^{-1} \nabla _{\eta}J({\boldsymbol{\eta}}_t)
\end{align}
Thus, update rule in (\ref{updateEta})  selects the steepest descent direction along the Riemannian manifold induced by the Fisher information matrix as natural gradient descent. It can take the second-order information to accelerate convergence.

\section{Update Rule for Gaussian Sampling}
\label{OpGaussian}
We first present an update method for the case of Gaussian sampling for continuous optimization. For other distributions, we can derive the update rule in a similar manner. We present update methods for discrete optimization in the supplement due to the space limitation.

For a Gaussian distribution $p:= \mathcal{N}({\boldsymbol{\mu}}, \Sigma)$  with mean ${\boldsymbol{\mu}}$ and covariance matrix $\Sigma$, the natural parameters $\eta = \{ {\boldsymbol{\eta}}_1 , {\boldsymbol{\eta}}_2 \}$ are given as $\boldsymbol{\eta}_1 := \Sigma^{-1}{\boldsymbol{\mu}} 
  $ and $ \boldsymbol{\eta}_2 := -\frac{1}{2}\Sigma^{-1}
  \label{eta2}$
The related mean parameters ${\boldsymbol{m}}= \{ {\boldsymbol{m}}_1 , {\boldsymbol{m}}_2  \}$ are given as $ {\boldsymbol{m}}_1 :=  \mathbb{E}_p[{\boldsymbol{x}}] = {\boldsymbol{\mu}} 
$ and $ {\boldsymbol{m}}_2 :=  \mathbb{E}_p[{\boldsymbol{x}}{\boldsymbol{x}^\top}] ={\boldsymbol{\mu}}{\boldsymbol{\mu}^\top} + \Sigma
\label{m2}$.

Using the chain rule,  the gradient with respect to mean parameters can be expressed in terms of the gradients w.r.t ${\boldsymbol{\mu}}$ and $\Sigma$~\cite{khan2017conjugate,khan2018fast2} as:
\begin{align}
\label{g_m1}
  & \nabla _{\boldsymbol{m}_1}\tilde{J}({\boldsymbol{m}}) =
  \nabla _{\boldsymbol{\mu}}\tilde{J}({\boldsymbol{m}}) - 2 [\nabla _{\Sigma}\tilde{J}({\boldsymbol{m}})]{\boldsymbol{\mu}} \\ 
   & \nabla _{\boldsymbol{m}_2}\tilde{J}({\boldsymbol{m}}) = \nabla _{\Sigma}\tilde{J}({\boldsymbol{m}})
   \label{g_m2}
\end{align}
It follows that
\begin{align}
\label{sigmaupdatem}
  &  \Sigma^{-1}_{t+1} =  \Sigma^{-1}_{t} +  2\beta _t \nabla _ {\Sigma} \tilde{J}({\boldsymbol{m}}_t)   \\ 
  & {\boldsymbol{\mu}}_{t+1} = {\boldsymbol{\mu}}_{t} - \beta_t \Sigma_{t+1}\nabla _ {\boldsymbol{\mu}} \tilde{J}({\boldsymbol{m}}_t)
  \label{muupdatem}
\end{align}

Note that $\tilde{J}({\boldsymbol{m}}) = \mathbb{E}_{p}[f({\boldsymbol{x}})]  $, 
 the gradients of $\tilde{J}({\boldsymbol{m}})$ w.r.t $\mu$ and $\Sigma$ can be obtained by log-likelihood trick as  Theorem~\ref{theB}.

\begin{theorem}\label{theB}~\cite{NES} The gradient of the expectation of an integrable  function $f({\boldsymbol{x}})$ under a Gaussian distribution $p:= \mathcal{N}({\boldsymbol{\mu}}, \Sigma)$ with respect to the mean $\boldsymbol{\mu}$ and the covariance $\Sigma$ can be expressed as Eq.(\ref{mu}) and Eq.(\ref{gsigma}), respectively.
\begin{align}
\label{mu}
  &  \nabla _ {\boldsymbol{\mu}} \mathbb{E}_{p}[f({\boldsymbol{x}})] =  \mathbb{E}_{p}\left[ \Sigma^{-1}({\boldsymbol{x}}-{\boldsymbol{\mu}}) f({\boldsymbol{x}}) \right]  \\
 &     \nabla _ {\Sigma}  \mathbb{E}_{p}[f({\boldsymbol{x}})] 
  = \frac{1}{2}\mathbb{E}_{p}\left[ \left( \Sigma^{-1}({\boldsymbol{x}}-{\boldsymbol{\mu}})({\boldsymbol{x}}-{\boldsymbol{\mu}})^\top \Sigma^{-1} \!\!-\!\! \Sigma^{-1} \right) f({\boldsymbol{x}}) \right]
 \label{gsigma}
\end{align}
\end{theorem}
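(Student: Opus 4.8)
The plan is to use the log-likelihood trick (score-function identity) to move all differentiation off the possibly non-smooth integrand $f$ and onto the smooth Gaussian density. Writing $\mathbb{E}_p[f(\boldsymbol{x})] = \int f(\boldsymbol{x}) p(\boldsymbol{x};\boldsymbol{\mu},\Sigma)\, d\boldsymbol{x}$ and assuming for the moment that we may differentiate under the integral sign, I would apply the elementary identity $\nabla_\theta p = p\,\nabla_\theta \log p$ (valid wherever $p>0$, which holds everywhere for a nondegenerate Gaussian) to obtain, for $\theta \in \{\boldsymbol{\mu}, \Sigma\}$,
\begin{align}
\nabla_\theta \mathbb{E}_p[f(\boldsymbol{x})] = \mathbb{E}_p\!\left[ f(\boldsymbol{x})\, \nabla_\theta \log p(\boldsymbol{x};\boldsymbol{\mu},\Sigma) \right].
\end{align}
The key point is that $f$ enters only as a multiplicative factor, so no regularity of $f$ beyond integrability is needed; the differentiation acts purely on the exponential-family density.

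First I would compute the two score functions from the Gaussian log-density
\begin{align}
\log p(\boldsymbol{x};\boldsymbol{\mu},\Sigma) = -\tfrac{d}{2}\log(2\pi) - \tfrac{1}{2}\log|\Sigma| - \tfrac{1}{2}(\boldsymbol{x}-\boldsymbol{\mu})^\top \Sigma^{-1}(\boldsymbol{x}-\boldsymbol{\mu}).
\end{align}
Differentiating in $\boldsymbol{\mu}$ immediately gives $\nabla_{\boldsymbol{\mu}}\log p = \Sigma^{-1}(\boldsymbol{x}-\boldsymbol{\mu})$, which upon substitution into the identity above yields Eq.(\ref{mu}). For the covariance I would invoke the standard matrix-calculus identities $\nabla_\Sigma \log|\Sigma| = \Sigma^{-1}$ and $\nabla_\Sigma\big((\boldsymbol{x}-\boldsymbol{\mu})^\top\Sigma^{-1}(\boldsymbol{x}-\boldsymbol{\mu})\big) = -\Sigma^{-1}(\boldsymbol{x}-\boldsymbol{\mu})(\boldsymbol{x}-\boldsymbol{\mu})^\top\Sigma^{-1}$, both holding for symmetric $\Sigma \in \mathcal{S}^{++}$. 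Combining them gives $\nabla_\Sigma \log p = \tfrac{1}{2}\big(\Sigma^{-1}(\boldsymbol{x}-\boldsymbol{\mu})(\boldsymbol{x}-\boldsymbol{\mu})^\top\Sigma^{-1} - \Sigma^{-1}\big)$, and substitution reproduces Eq.(\ref{gsigma}).

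The main obstacle is rigorously justifying the interchange of $\nabla_\theta$ and the integral, since $f$ is only assumed integrable and need not be differentiable. I would discharge this with a dominated-convergence argument: letting $\theta$ range over a compact neighborhood of the current parameter, one bounds $|f(\boldsymbol{x})\,\partial_\theta p(\boldsymbol{x};\theta)|$ by an integrable envelope, exploiting that $\partial_\theta p$ is a polynomial in $\boldsymbol{x}$ times a Gaussian and therefore has finite moments against $|f|\,p$ under the integrability hypothesis. A uniform (over the neighborhood in $\theta$) integrable bound then licenses differentiation under the integral sign via the Leibniz rule, completing the argument. The only remaining care is notational: ensuring the $\nabla_\Sigma$ derivatives respect the symmetry of $\Sigma$, a point that affects off-diagonal factors of two but is consistent with the symmetric form stated in Eq.(\ref{gsigma}).
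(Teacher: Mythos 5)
Your proposal is correct and follows essentially the same route as the paper's own proof: the score-function (log-likelihood) identity $\nabla_\theta \mathbb{E}_p[f] = \mathbb{E}_p[f\,\nabla_\theta \log p]$ combined with the standard Gaussian score computations $\nabla_{\boldsymbol{\mu}}\log p = \Sigma^{-1}(\boldsymbol{x}-\boldsymbol{\mu})$ and $\nabla_\Sigma \log p = \tfrac{1}{2}\bigl(\Sigma^{-1}(\boldsymbol{x}-\boldsymbol{\mu})(\boldsymbol{x}-\boldsymbol{\mu})^\top\Sigma^{-1} - \Sigma^{-1}\bigr)$. Your dominated-convergence justification for differentiating under the integral is extra rigor the paper leaves implicit, but it does not change the argument.
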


Together  Theorem 2 with Eq. (\ref{sigmaupdatem}) and (\ref{muupdatem}),  we present the update with only function queries as:
\begin{align}
\label{MC1}
   &  \Sigma^{-1}_{t+1}  = \Sigma^{-1}_{t}  \!  + \! \beta _t \mathbb{E}_{p}\left[ \left( \Sigma^{-1}_t({\boldsymbol{x}}\!-\!{\boldsymbol{\mu}}_t)({\boldsymbol{x}}\!-\!{\boldsymbol{\mu}}_t)^\top \Sigma^{-1}_t \!\!-\!\! \Sigma^{-1}_t \right) f({\boldsymbol{x}}) \right]   \\ 
  & {\boldsymbol{\mu}}_{t+1} = {\boldsymbol{\mu}}_{t} - \beta_t \Sigma_{t+1}\mathbb{E}_{p}\left[ \Sigma^{-1}_t({\boldsymbol{x}}-{\boldsymbol{\mu}}_t) f({\boldsymbol{x}}) \right] 
  \label{MC2}
\end{align}
{\bf Remark:} Our method updates the inverse of the covariance matrix instead of the covariance matrix itself.

\begin{minipage}[t]{0.47\textwidth}
\vspace{0pt}  
\begin{algorithm}[H]
   \caption{INGO}
   \label{alg:INGOu}
\begin{algorithmic}
    \STATE {\bf Input:} Number of Samples  $N$, step-size $\beta$.
    
  \WHILE{Termination condition not satisfied }
  \STATE  Take  i.i.d samples ${\boldsymbol{z}}_i \sim \mathcal{N}({\boldsymbol{0}}, {\boldsymbol{I}})$ for $i \in \{1,\cdots N\}$.
  \STATE Set ${\boldsymbol{x}}_{i}= {\boldsymbol{\mu}}_{t} + \Sigma_{t}^{\frac{1}{2}}  {\boldsymbol{z}}_{i}$ for $i \in \{1,\cdots N\}$.
 \STATE Query the batch   observations $\{f({\boldsymbol{x}}_1 ),...,f({\boldsymbol{x}}_N ) \}$ 
 
 \STATE Compute $\widehat{\sigma}= \text{std}(f({\boldsymbol{x}}_1 ),...,f({\boldsymbol{x}}_N ))$.
 
\STATE Compute $\widehat{\mu}= \frac{1}{N} \sum_{i=1}^{N} {f({\boldsymbol{x}}_i})$. 
 
 \STATE Set \resizebox{1\columnwidth}{!}{$\Sigma_{t+1}^{-1} = \Sigma_t^{-1} + \beta\sum_{i=1}^{N}{\frac{f(\boldsymbol{x}_i)-\widehat{\mu}}{N\widehat{\sigma}}\Sigma_t^{-\frac{1}{2}} \boldsymbol{z}_i\boldsymbol{z}_i^\top \Sigma_t^{-\frac{1}{2}}   }$}.
 
 \STATE Set \resizebox{1\columnwidth}{!}{$ {\boldsymbol{\mu}}_{t+1} = {\boldsymbol{\mu}}_{t} - \beta \sum _{i=1}^{N} \! \frac{f(\boldsymbol{x}_i)-\widehat{\mu}}{N\widehat{\sigma}} \Sigma_{t+1} \Sigma^{-\frac{1}{2}}_t{\boldsymbol{z}}_i   $ }
   \ENDWHILE
\end{algorithmic}
\end{algorithm}
\end{minipage}
\hfill
\begin{minipage}[t]{0.47\textwidth}
\vspace{0pt}  
\begin{algorithm}[H]
   \caption{INGOstep}
   \label{INGOstep}
\begin{algorithmic}
    \STATE {\bf Input:} Number of Samples  $N$, step-size $\beta$.
  \WHILE{Termination condition not satisfied }
  \STATE  Take  i.i.d samples ${\boldsymbol{z}}_i \sim \mathcal{N}({\boldsymbol{0}}, {\boldsymbol{I}})$ for $i \in \{1,\cdots N\}$.
  \STATE Set ${\boldsymbol{x}}_{i}= {\boldsymbol{\mu}}_{t} + \Sigma_{t}^{\frac{1}{2}}  {\boldsymbol{z}}_{i}$ for $i \in \{1,\cdots N\}$.
 \STATE Query the batch   observations $\{f({\boldsymbol{x}}_1 ),...,f({\boldsymbol{x}}_N ) \}$ 
 
 \STATE Compute $\widehat{\sigma}= \text{std}(f({\boldsymbol{x}}_1 ),...,f({\boldsymbol{x}}_N ))$.
 
\STATE Compute $\widehat{\mu}= \frac{1}{N} \sum_{i=1}^{N} {f({\boldsymbol{x}}_i})$. 
 
 \STATE Set \resizebox{1\columnwidth}{!}{$\Sigma_{t+1}^{-1} = \Sigma_t^{-1} + \beta\sum_{i=1}^{N}{\frac{f(\boldsymbol{x}_i)-\widehat{\mu}}{N\widehat{\sigma}}\Sigma_t^{-\frac{1}{2}} \boldsymbol{z}_i\boldsymbol{z}_i^\top \Sigma_t^{-\frac{1}{2}}   }$}.
 
 \STATE Set $ {\boldsymbol{\mu}}_{t+1} = {\boldsymbol{\mu}}_{t} - \beta \sum _{i=1}^{N} \! \frac{f(\boldsymbol{x}_i)-\widehat{\mu}}{N\widehat{\sigma}}  \Sigma^{\frac{1}{2}}_t{\boldsymbol{z}}_i   $ 
   \ENDWHILE
\end{algorithmic}
\end{algorithm}
\end{minipage}

\subsection{Stochastic Update}
The above gradient update needs the expectation of a black-box function. However, this expectation does not have a closed-form solution. Here, we 
 estimate the gradient w.r.t $\mu$ and $\Sigma$ by Monte Carlo sampling.
 Eq.(\ref{MC1}) and (\ref{MC2})  enable us to estimate the gradient  by  the function queries of $f(x)$ instead of $\nabla f(x)$. This property is very crucial for black-box optimization because gradient  ($\nabla f(x)$) is not available.

Update rules using Monte Carlo sampling are given as:
\begin{align}
\label{MonteS}
    &  \Sigma^{-1}_{t+1} \! = \Sigma^{-1}_{t} +   \frac{ \beta _t}{N}\! \sum _{i=1}^{N} \! \left[ \left( \Sigma^{-1}_t({\boldsymbol{x}}_i\!-\!{\boldsymbol{\mu}}_t)({\boldsymbol{x}}_i\!-\!{\boldsymbol{\mu}}_t)^\top \Sigma^{-1}_t \!\!-\!\! \Sigma^{-1}_t \!\right)\! f({\boldsymbol{x}}_i) \right]   \\ 
  & {\boldsymbol{\mu}}_{t+1} = {\boldsymbol{\mu}}_{t} -  \frac{ \beta _t}{N} \sum _{i=1}^{N} \! \left[ \Sigma_{t+1} \Sigma^{-1}_t({\boldsymbol{x}}_i-{\boldsymbol{\mu}}_t) f({\boldsymbol{x}}_i) \right] 
  \label{MonteU}
\end{align}
To avoid scaling problem, we employ monotonic transformation $h(f({\boldsymbol{x}}_i)=\frac{f({\boldsymbol{x}}_i)-\widehat{\mu}}{\widehat{\sigma}}$, where $\widehat{\mu}$ and $\widehat{\sigma}$ denote mean  and    stand deviation of function values in a batch of samples. This leads to  an unbiased estimator  for gradient. The update rule is given as Eq.(\ref{unSigmaUpdateM}) and Eq.(\ref{unMuupdate}). We present our black-box optimization algorithm in Alg.~\ref{alg:INGOu}.
\begin{align}
\label{unSigmaUpdateM}
& \Sigma_{t+1}^{-1} = \Sigma_t^{-1} + \beta\sum_{i=1}^{N}{\frac{f(\boldsymbol{x}_i)-\widehat{\mu}}{N\widehat{\sigma}} \left( \Sigma^{-1}_t({\boldsymbol{x}}_i\!-\!{\boldsymbol{\mu}}_t)({\boldsymbol{x}}_i\!-\!{\boldsymbol{\mu}}_t)^\top \Sigma^{-1}_t \!\right)   }  \\
   &  {\boldsymbol{\mu}}_{t+1} = {\boldsymbol{\mu}}_{t} - \beta_t \sum _{i=1}^{N} \! \frac{f({\boldsymbol{x}}_i)-\widehat{\mu}}{N \widehat{\sigma}} \Sigma_{t+1} \Sigma^{-1}_t({\boldsymbol{x}}_i-{\boldsymbol{\mu}}_t) 
     \label{unMuupdate}
\end{align}
The update of mean $\boldsymbol{\mu}$ in   Alg.~\ref{alg:INGOu} is properly scaled by $\Sigma$. Moreover, our method updates the inverse of the covariance matrix instead of the covariance matrix itself, which provides us a stable way to update covariance independent of its scale. Thus, our method can update properly when the algorithm adaptively reduces variance for high precision search.  In contrast, directly apply standard reinforce type gradient update is unstable as shown in \cite{NES}.

\subsection{Direct Update for ${\boldsymbol{\mu}}$ and $\Sigma$}
We provide an alternative updating equation with simple concept and derivation.
The implicit natural gradient algorithms are working on the natural parameter  space.  Alternatively, we can also directly work on the ${\boldsymbol{\mu}}$ and $\Sigma$ parameter space.
Formally, we 
 derive the update rule by solving the following trust region optimization problem.
 \begin{align}
     {\boldsymbol{\theta}}_{t+1} \!=\! \arg\min _{{\boldsymbol{\theta}}}
 \left<{\boldsymbol{\theta}} ,  \nabla _ {\boldsymbol{\theta}} \bar{J}({\boldsymbol{\theta}_t}) \right> \!+\! \frac{1}{\beta_t} \text{KL}\left( p_\theta \| p_{\theta_t}  \right)
 \label{MopMu}
 \end{align}
 where  ${\boldsymbol{\theta} } := \{{\boldsymbol{\mu}}, \Sigma\} $  and $\bar{J}({\boldsymbol{\theta}}):= \mathbb{E}_{p({\boldsymbol{x}};{\boldsymbol{\theta}})}[f({\boldsymbol{x}})] = J({\boldsymbol{\eta}})$.
 
 For Gaussian sampling, the optimization problem in (\ref{MopMu}) is a convex optimization problem. We can achieve a closed-form update given in Theorem~\ref{TheoremMu}:
 
\begin{theorem}\label{TheoremMu}
For Gaussian distribution with parameter ${\boldsymbol{\theta} } := \{{\boldsymbol{\mu}}, \Sigma\} $,  problem~(\ref{MopMu}) is convex w.r.t ${\boldsymbol{\theta} }$. The optimum of problem~(\ref{MopMu}) leads to closed-form update  (\ref{SigmaSigma}) and (\ref{MuMu}):
\begin{align}
\label{SigmaSigma}
    &  \Sigma^{-1}_{t+1} =  \Sigma^{-1}_{t} +  2\beta _t \nabla _ {\Sigma} \bar{J}({\boldsymbol{\theta}}_t)   \\ 
  & {\boldsymbol{\mu}}_{t+1} = {\boldsymbol{\mu}}_{t} - \beta_t \Sigma_{t}\nabla _ {\boldsymbol{\mu}} \bar{J}({\boldsymbol{\theta}}_t) \label{MuMu}
\end{align}
\end{theorem}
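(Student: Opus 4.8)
The plan is to substitute the closed-form Gaussian KL divergence into the objective of problem~(\ref{MopMu}) and to exploit the fact that the reference distribution $p_{\theta_t}$ has a \emph{fixed} covariance $\Sigma_t$, which decouples the minimization over ${\boldsymbol{\mu}}$ and $\Sigma$. Writing $p_{\theta} = \mathcal{N}({\boldsymbol{\mu}},\Sigma)$ and $p_{\theta_t} = \mathcal{N}({\boldsymbol{\mu}}_t,\Sigma_t)$, and abbreviating $g_{\boldsymbol{\mu}} := \nabla_{\boldsymbol{\mu}}\bar{J}({\boldsymbol{\theta}}_t)$ and $g_{\Sigma} := \nabla_{\Sigma}\bar{J}({\boldsymbol{\theta}}_t)$ (both constants at step $t$), I would first record the standard identity
\begin{equation}
\text{KL}\left( p_\theta \| p_{\theta_t}  \right) = \frac{1}{2}\left[ \text{tr}\!\left(\Sigma_t^{-1}\Sigma\right) + ({\boldsymbol{\mu}}-{\boldsymbol{\mu}}_t)^\top \Sigma_t^{-1}({\boldsymbol{\mu}}-{\boldsymbol{\mu}}_t) - d - \ln\det\!\left(\Sigma_t^{-1}\Sigma\right) \right].
\end{equation}
Since $\left< {\boldsymbol{\theta}}, \nabla_{\boldsymbol{\theta}}\bar{J}({\boldsymbol{\theta}}_t) \right> = \left< {\boldsymbol{\mu}}, g_{\boldsymbol{\mu}} \right> + \left< \Sigma, g_{\Sigma} \right>$, the objective of~(\ref{MopMu}) splits additively into a term in ${\boldsymbol{\mu}}$ alone and a term in $\Sigma$ alone; crucially there is no cross term, because the quadratic form uses the fixed $\Sigma_t^{-1}$ rather than the variable $\Sigma$.

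For convexity I would treat the two blocks separately. The ${\boldsymbol{\mu}}$-block is $\left< {\boldsymbol{\mu}}, g_{\boldsymbol{\mu}} \right> + \frac{1}{2\beta_t}({\boldsymbol{\mu}}-{\boldsymbol{\mu}}_t)^\top \Sigma_t^{-1}({\boldsymbol{\mu}}-{\boldsymbol{\mu}}_t)$, a convex quadratic since $\Sigma_t^{-1}\in\mathcal{S}^{++}$. Up to additive constants, the $\Sigma$-block is $\left< \Sigma, g_{\Sigma} \right> + \frac{1}{2\beta_t}\left[ \text{tr}(\Sigma_t^{-1}\Sigma) - \ln\det\Sigma \right]$, which is linear in $\Sigma$ plus $-\frac{1}{2\beta_t}\ln\det\Sigma$; as $-\ln\det\Sigma$ is convex on $\mathcal{S}^{++}$, the block is convex. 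Hence the full objective is jointly convex on $\mathbb{R}^d \times \mathcal{S}^{++}$, establishing the first claim.

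For the closed-form optimum I would set the gradient of each block to zero. Differentiating the ${\boldsymbol{\mu}}$-block gives $g_{\boldsymbol{\mu}} + \frac{1}{\beta_t}\Sigma_t^{-1}({\boldsymbol{\mu}}-{\boldsymbol{\mu}}_t) = {\boldsymbol{0}}$, i.e. ${\boldsymbol{\mu}}_{t+1} = {\boldsymbol{\mu}}_t - \beta_t \Sigma_t g_{\boldsymbol{\mu}}$, which is~(\ref{MuMu}). Differentiating the $\Sigma$-block, using $\nabla_\Sigma \text{tr}(\Sigma_t^{-1}\Sigma) = \Sigma_t^{-1}$ and $\nabla_\Sigma \ln\det\Sigma = \Sigma^{-1}$, gives $g_\Sigma + \frac{1}{2\beta_t}\left(\Sigma_t^{-1} - \Sigma^{-1}\right) = {\boldsymbol{0}}$, i.e. $\Sigma_{t+1}^{-1} = \Sigma_t^{-1} + 2\beta_t g_\Sigma$, which is~(\ref{SigmaSigma}). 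The appearance of $\Sigma_t$ (rather than $\Sigma_{t+1}$) in the ${\boldsymbol{\mu}}$-update is precisely the footprint of the ${\boldsymbol{\mu}}$--$\Sigma$ decoupling in this parametrization, in contrast with the natural-gradient update~(\ref{muupdatem}).

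The main obstacle I anticipate is not the algebra but verifying that the stationary point is a genuine interior minimizer. The objective is defined only for $\Sigma \in \mathcal{S}^{++}$, so I must check that the closed-form $\Sigma_{t+1}^{-1} = \Sigma_t^{-1} + 2\beta_t g_\Sigma$ is positive definite; this holds exactly when $\beta_t$ is small enough that $\Sigma_t^{-1} + 2\beta_t g_\Sigma \succ 0$. Under this step-size condition the $\Sigma$-block is coercive on $\mathcal{S}^{++}$ (the $\text{tr}$ term grows at infinity while $-\ln\det\Sigma$ barricades the boundary), so by convexity the computed stationary point is the unique global minimizer; otherwise the infimum is not attained in $\mathcal{S}^{++}$. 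I would therefore present the result under the implicit assumption that $\beta_t$ keeps the update positive definite, which is the natural regime for such trust-region steps.
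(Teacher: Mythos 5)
Your proposal follows essentially the same route as the paper's own proof: substitute the closed-form Gaussian KL divergence into the objective of problem~(\ref{MopMu}), observe that it decouples into a convex quadratic block in ${\boldsymbol{\mu}}$ and a linear-plus-$(-\log\det)$ convex block in $\Sigma$, and set the block gradients to zero to recover the updates~(\ref{MuMu}) and~(\ref{SigmaSigma}). Your closing check that $\Sigma_t^{-1} + 2\beta_t \nabla_\Sigma \bar{J}({\boldsymbol{\theta}}_t) \succ 0$ is needed for the stationary point to be a genuine interior minimizer on $\mathcal{S}^{++}$ is a refinement the paper passes over silently, not a different method.
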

{\bf Remark:} Comparing the update rule in Theorem~\ref{TheoremMu} with Eq.(\ref{sigmaupdatem}) and (\ref{muupdatem}), we can observe that the only difference is in the update of  ${\boldsymbol{\mu}}$. In Eq.(\ref{MuMu}), the update employs $\Sigma_t$, while  the update in Eq.(\ref{muupdatem}) employs $\Sigma_{t+1}$. The update in Eq.(\ref{muupdatem}) takes one step look ahead  information of $\Sigma$, 

We can obtain the black-box update for ${\boldsymbol{\mu}}$ and $\Sigma$ by  Theorem~\ref{TheoremMu} and Theorem~\ref{theB}.   The update rule is given as follows:
\begin{align}
     &  \Sigma^{-1}_{t+1} \!=\!  \Sigma^{-1}_{t} \! +\!  \beta _t \mathbb{E}_{p}\left[ \left( \Sigma^{-1}_t({\boldsymbol{x}}\!-\!{\boldsymbol{\mu}}_t)({\boldsymbol{x}}\!-\!{\boldsymbol{\mu}}_t)^\top \Sigma^{-1}_t \!\!-\!\! \Sigma^{-1}_t \right) f({\boldsymbol{x}}) \right]   \\ 
  & {\boldsymbol{\mu}}_{t+1} = {\boldsymbol{\mu}}_{t} - \beta_t \mathbb{E}_{p}\left[ ({\boldsymbol{x}}-{\boldsymbol{\mu}}_t) f({\boldsymbol{x}}) \right] 
\end{align}
Using the normalization transformation function  $h(f(x)) = (f(x)-\widehat{\mu})/\widehat{\sigma} $, we can obtain Monte Carlo approximation update as
\begin{align}
\label{SigmaSigmaUpdate}
    & \Sigma_{t+1}^{-1} = \Sigma_t^{-1} + \beta\sum_{i=1}^{N}{\frac{f(\boldsymbol{x}_i)-\widehat{\mu}}{N\widehat{\sigma}} \left( \Sigma^{-1}_t({\boldsymbol{x}}_i\!-\!{\boldsymbol{\mu}}_t)({\boldsymbol{x}}_i\!-\!{\boldsymbol{\mu}}_t)^\top \Sigma^{-1}_t \!\right)   }  \\
   &  {\boldsymbol{\mu}}_{t+1} = {\boldsymbol{\mu}}_{t} - \beta_t \sum _{i=1}^{N} \! \frac{f({\boldsymbol{x}}_i)-\widehat{\mu}}{N \widehat{\sigma}} ({\boldsymbol{x}}_i-{\boldsymbol{\mu}}_t) 
  \label{MumuUpdate}
\end{align}
We present the algorithm in alg.\ref{INGOstep} compared with our INGO. The only difference between INGO (alg.\ref{alg:INGOu}) and INGOstep (alg.\ref{INGOstep}) is the update rule of $\boldsymbol{\mu}$. INGO employs information of $\Sigma_{t+1}$, while INGOstep only uses $\Sigma_{t}$.

\begin{algorithm}[t]
  \caption{General Framework }
  \label{GF}
\begin{algorithmic}
    \STATE {\bf Input:} Number of Samples  $N$, step-size $\beta$.

  \WHILE{Termination condition not satisfied }

 \STATE Construct unbiased estimator $\widehat{g}_t$ of gradient w.r.t $\boldsymbol{\mu}$. 
 
 \STATE Construct unbiased/biased estimator $\widehat{G}_t \in \mathcal{S}^{++}$  such that $ b I \preceq \widehat{G}_t \preceq \frac{\gamma}{2} I$ 
 
 \STATE Set $\Sigma_{t+1}^{-1} = \Sigma_{t}^{-1} + 2\beta \widehat{G}_t$.
 \STATE Set  ${\boldsymbol{\mu}}_{t+1} = {\boldsymbol{\mu}}_{t} - \beta \Sigma_{t+1} \widehat{g}_t$.
  \ENDWHILE
\end{algorithmic}
\end{algorithm}

\section{Convergence Rate}

We first show a general framework for continuous optimization in Alg.~\ref{GF}. Alg.~\ref{GF} employs an unbiased estimator ($\widehat{g}_t$) for  gradient $\nabla _ {\boldsymbol{\mu}} \bar{J}({\boldsymbol{\theta}}_t)$. In contrast, it can employ both the unbiased and biased estimators $\widehat{G}_t$ for update.
It is worth  noting that $\widehat{g}_t$ can be both the first-order estimate (stochastic gradient) and the zeroth-order estimate (function value based estimator).

The update step of $\boldsymbol{\mu}$ and $\Sigma$ is achieved by solving the following convex minimization problem. 
\begin{align}
\label{EstimatorOP}
     {\boldsymbol{m}}^{t+1} = \argmin_ { {\boldsymbol{m}} \in  {\boldsymbol{\mathcal{M}} } } \beta_t \left<{\boldsymbol{m}} ,   \widehat{v}_t  \right> +   \text{KL}\left( p_{ {\boldsymbol{m}}} \| p_{ {\boldsymbol{m}}^t}  \right)
\end{align}
where ${\boldsymbol{m}}:=\{{\boldsymbol{m}}_1,{\boldsymbol{m}}_2\}=\{{\boldsymbol{\mu}} , \Sigma + {\boldsymbol{\mu}}{\boldsymbol{\mu}}^\top\} \in  {\boldsymbol{\mathcal{M}} } $,  $\boldsymbol{\mathcal{M}}$ denotes a convex set,  and $\widehat{v}_t = \{ \widehat{g}_t - 2\widehat{G}_t {\boldsymbol{\mu}}_t   ,\widehat{G}_t\}$.

 The optimum of problem~(\ref{EstimatorOP}) leads to closed-form update  (\ref{SigmaSigmaEstimator}) and~(\ref{MuMuEstimator}):
\begin{align}
\label{SigmaSigmaEstimator}
    &  \Sigma^{-1}_{t+1} =  \Sigma^{-1}_{t} +  2\beta _t  \widehat{G}_t   \\ 
  & {\boldsymbol{\mu}}_{t+1} = {\boldsymbol{\mu}}_{t} - \beta_t \Sigma_{t+1}\widehat{g}_t \label{MuMuEstimator}
\end{align}

\textbf{General Stochastic Case:} The convergence rate of Algorithm~\ref{GF} is shown in Theorem~\ref{GerneralConv}.

\begin{theorem}
\label{GerneralConv}
Given a  convex function $f({\boldsymbol{x}})$, define $\bar{J}({\boldsymbol{\theta}}):=\mathbb{E}_{p({\boldsymbol{x}};{\boldsymbol{\theta}})}[f({\boldsymbol{x}})]$ 
 for Gaussian distribution with parameter ${\boldsymbol{\theta} } := \{{\boldsymbol{\mu}} , \Sigma^{\frac{1}{2}}\} \in {\boldsymbol{\Theta} } $ and  ${\boldsymbol{\Theta} }: = \{{\boldsymbol{\mu}} , \Sigma^{\frac{1}{2}}  \big|\; {\boldsymbol{\mu}} \in \mathcal{R}^d ,  \Sigma \in \mathcal{S}^{+} \}   $. Suppose $\bar{J}({\boldsymbol{\theta}})$ be  $\gamma$-strongly convex. Let $\widehat{G}_t$ be positive semi-definite matrix such that  $  b I \preceq \widehat{G}_t \preceq \frac{\gamma}{2} I$.  Suppose $\Sigma_1 \in \mathcal{S}^{++} $ and $\|\Sigma_1 \| \le \rho$, $  \mathbb{E} \widehat{g}_t = \nabla _ {\boldsymbol{\mu} = \boldsymbol{\mu}_t}\bar{J}$. Assume furthermore  $ \| \nabla _ {\Sigma = \Sigma_t}\bar{J} \|_{tr} \le B_1$ and $ \| {\boldsymbol{\mu}}^* - {\boldsymbol{\mu}}_{1}    \|^2_{\Sigma_{1}^{-1}} \le R$,  $\mathbb{E} \| \widehat{g}_t \|_2^2 \le \mathcal{B}$ . Set $\beta_t = \beta$, then Algorithm~\ref{GF} can achieve
 \begin{equation}
   \resizebox{1\columnwidth}{!}{   
 $\frac{1}{T}\left[  \sum_{t=1}^{T} {\mathbb{E}f({\boldsymbol{\mu}}_t) }  \right] - f({\boldsymbol{\mu}}^*) \le     \frac{ 2bR \! + \! 2b\beta\rho(4B_1 \! +\! \beta\mathcal{B})  \!+ \!  4 B_1 (1\! + \!\log T) \!  +\! (1 \!+ \!\log T) \beta \mathcal{B} }{ 4\beta b T}   = \mathcal{O}\left( \frac{\log T}{T} \right)$
 }
 \end{equation}
\end{theorem}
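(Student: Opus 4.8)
The plan is to run a Lyapunov (potential function) argument in the preconditioned metric induced by $\Sigma_t^{-1}$, treating Algorithm~\ref{GF} as a stochastic mirror-descent step and exploiting the fact that the $\boldsymbol{\mu}$-update is scaled by exactly $\Sigma_{t+1}$. Define the potential $V_t := \|\boldsymbol{\mu}^* - \boldsymbol{\mu}_t\|_{\Sigma_t^{-1}}^2$. First I would substitute $\boldsymbol{\mu}_{t+1} = \boldsymbol{\mu}_t - \beta\Sigma_{t+1}\widehat{g}_t$ into $V_{t+1}$ and expand; because the step is preconditioned by $\Sigma_{t+1}$ the metric $\Sigma_{t+1}^{-1}$ cancels against it cleanly, and inserting $\Sigma_{t+1}^{-1} = \Sigma_t^{-1} + 2\beta\widehat{G}_t$ produces the exact one-step identity
\[
2\beta(\boldsymbol{\mu}_t - \boldsymbol{\mu}^*)^\top \widehat{g}_t = V_t - V_{t+1} + 2\beta(\boldsymbol{\mu}^* - \boldsymbol{\mu}_t)^\top\widehat{G}_t(\boldsymbol{\mu}^* - \boldsymbol{\mu}_t) + \beta^2\widehat{g}_t^\top\Sigma_{t+1}\widehat{g}_t .
\]

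Next I would take conditional expectations given the history. Since $\mathbb{E}\widehat{g}_t = \nabla_{\boldsymbol{\mu}=\boldsymbol{\mu}_t}\bar{J}$ and $\boldsymbol{\mu}_t$ is measurable, the left side becomes $2\beta\langle\nabla_{\boldsymbol{\mu}}\bar{J}(\boldsymbol{\theta}_t), \boldsymbol{\mu}_t - \boldsymbol{\mu}^*\rangle$. The structural heart of the argument is an exact cancellation: restricting the $\gamma$-strong convexity of $\bar{J}$ to the $\boldsymbol{\mu}$-block (with $\Sigma_t$ fixed) gives $\langle\nabla_{\boldsymbol{\mu}}\bar{J}(\boldsymbol{\theta}_t), \boldsymbol{\mu}_t-\boldsymbol{\mu}^*\rangle \ge \bar{J}(\boldsymbol{\theta}_t) - \bar{J}(\{\boldsymbol{\mu}^*,\Sigma_t^{\frac{1}{2}}\}) + \tfrac{\gamma}{2}\|\boldsymbol{\mu}_t-\boldsymbol{\mu}^*\|_2^2$, whereas the hypothesis $\widehat{G}_t\preceq\tfrac{\gamma}{2}I$ forces $(\boldsymbol{\mu}^*-\boldsymbol{\mu}_t)^\top\widehat{G}_t(\boldsymbol{\mu}^*-\boldsymbol{\mu}_t) \le \tfrac{\gamma}{2}\|\boldsymbol{\mu}^*-\boldsymbol{\mu}_t\|_2^2$. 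The two terms quadratic in $\|\boldsymbol{\mu}_t - \boldsymbol{\mu}^*\|$ cancel identically, leaving $2\beta\,\mathbb{E}[\bar{J}(\boldsymbol{\theta}_t) - \bar{J}(\{\boldsymbol{\mu}^*,\Sigma_t^{\frac{1}{2}}\})] \le \mathbb{E}[V_t - V_{t+1}] + \beta^2\mathbb{E}[\widehat{g}_t^\top\Sigma_{t+1}\widehat{g}_t]$. This matched pairing of the strong-convexity modulus $\gamma$ with the spectral ceiling $\tfrac{\gamma}{2}I$ on $\widehat{G}_t$ is what removes the otherwise uncontrolled growth.

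I would then convert back to $f$ through Jensen's inequality, $f(\boldsymbol{\mu}_t)\le\bar{J}(\boldsymbol{\theta}_t)$ (valid since $f$ is convex), and split off the smoothing bias $\bar{J}(\{\boldsymbol{\mu}^*,\Sigma_t^{\frac{1}{2}}\}) - f(\boldsymbol{\mu}^*)$. The decay of the metric drives the rest: since $\widehat{G}_t\succeq bI$, iterating the $\Sigma^{-1}$ update gives $\Sigma_{t+1}^{-1}\succeq 2\beta b t\, I$, hence the deterministic bound $\|\Sigma_{t+1}\|_2\le\frac{1}{2\beta b t}$, which sidesteps the correlation between $\widehat{G}_t$ and $\widehat{g}_t$. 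This turns the noise term into $\beta^2\mathbb{E}[\widehat{g}_t^\top\Sigma_{t+1}\widehat{g}_t]\le\frac{\beta\mathcal{B}}{2bt}$, and bounding the bias by integrating the $\Sigma$-gradient along the segment from the point mass to $\Sigma_t$, via $\langle\nabla_\Sigma\bar{J},\Sigma_t\rangle\le\|\nabla_\Sigma\bar{J}\|_{tr}\|\Sigma_t\|_2\le B_1\|\Sigma_t\|_2$, turns the bias sum into another $\sum_t 1/t$. Both are harmonic sums producing the $\log T$ factor. Telescoping $\sum_t(V_t-V_{t+1})\le V_1\le R$, handling the $t=1$ terms through $\|\Sigma_1\|_2\le\rho$, and dividing by $2\beta T$ then yields the stated $\mathcal{O}(\log T / T)$ rate.

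The main obstacle I anticipate is controlling the smoothing-bias term $\bar{J}(\{\boldsymbol{\mu}^*,\Sigma_t^{\frac{1}{2}}\}) - f(\boldsymbol{\mu}^*)$: it is evaluated at the comparison mean $\boldsymbol{\mu}^*$ rather than at $\boldsymbol{\mu}_t$, it carries the unfavorable sign (it is nonnegative by convexity), and it must be tamed purely through the trace-norm bound $B_1$ together with the $1/t$ shrinkage of $\Sigma_t$. Getting the constants exactly right here — in particular the chain-rule factor relating $\nabla_\Sigma\bar{J}$ to the $\Sigma^{\frac{1}{2}}$-parametrization used in the strong-convexity hypothesis, and the separate treatment of the $t=1$ term via $\rho$ — is where the routine but delicate bookkeeping behind the precise numerator $2bR + 2b\beta\rho(4B_1+\beta\mathcal{B}) + (4B_1+\beta\mathcal{B})(1+\log T)$ resides. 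The cancellation and the $1/t$ metric decay, by contrast, are clean and do the real work.
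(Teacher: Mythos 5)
Your proposal follows essentially the same route as the paper's proof: your direct expansion of $V_{t+1}$ under the update $\boldsymbol{\mu}_{t+1}=\boldsymbol{\mu}_t-\beta\Sigma_{t+1}\widehat{g}_t$, using $\Sigma_{t+1}^{-1}=\Sigma_t^{-1}+2\beta\widehat{G}_t$, reproduces exactly the one-step inequality the paper derives more laboriously through a KL three-point mirror-descent lemma (its Lemmas~1--2); since the update has a closed form, your exact identity is equivalent and arguably cleaner. The cancellation of $\beta\|\boldsymbol{\mu}^*-\boldsymbol{\mu}_t\|^2_{\widehat{G}_t}$ against the strong-convexity modulus via $\widehat{G}_t\preceq\frac{\gamma}{2}I$, the deterministic spectral decay $\|\Sigma_{t+1}\|_2\le\frac{1}{2t\beta b}$ used to decouple $\Sigma_{t+1}$ from $\widehat{g}_t$ and turn both the noise and $B_1$ terms into harmonic sums, the telescoping against $R$, the separate $t=1$ handling through $\rho$, and the final Jensen step $f(\boldsymbol{\mu}_t)\le\bar{J}(\boldsymbol{\theta}_t)$ all coincide with the paper's argument.

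The one place you genuinely diverge is the strong-convexity step, and it is exactly where your proposal has a gap. The paper applies \emph{joint} $\gamma$-strong convexity of $\bar{J}$ in $\boldsymbol{\theta}=\{\boldsymbol{\mu},\Sigma^{\frac{1}{2}}\}$ with comparator $\boldsymbol{\theta}^*=\{\boldsymbol{\mu}^*,\boldsymbol{0}\}$, so that $\bar{J}(\boldsymbol{\theta}^*)=f(\boldsymbol{\mu}^*)$ exactly (no smoothing bias appears), and the $\Sigma$-direction of the gradient inequality contributes $+2\beta\left<G_t,\Sigma_t\right>$ with $G_t=\nabla_{\Sigma=\Sigma_t}\bar{J}$ evaluated at the \emph{iterate} $\boldsymbol{\theta}_t=(\boldsymbol{\mu}_t,\Sigma_t)$ --- precisely the point where the hypothesis $\|\nabla_{\Sigma=\Sigma_t}\bar{J}\|_{tr}\le B_1$ is available; the chain rule $\nabla_{\Sigma^{1/2}}\bar{J}=\Sigma^{1/2}G_t+G_t\Sigma^{1/2}$ supplies the factor $2$, and a trace--spectral H\"older bound gives $2B_1\|\Sigma_t\|_2$. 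Your slice-restricted strong convexity in $\boldsymbol{\mu}$ at fixed $\Sigma_t$ is itself valid (a restriction of a jointly $\gamma$-strongly convex function to an affine slice remains $\gamma$-strongly convex), but it leaves you the bias $\bar{J}(\{\boldsymbol{\mu}^*,\Sigma_t^{\frac{1}{2}}\})-f(\boldsymbol{\mu}^*)$, and controlling it --- whether by your path integral or by the one-shot convexity bound $\bar{J}(\{\boldsymbol{\mu}^*,\Sigma_t^{\frac{1}{2}}\})-f(\boldsymbol{\mu}^*)\le 2\,\mathrm{tr}\bigl(\nabla_{\Sigma}\bar{J}(\boldsymbol{\mu}^*,\Sigma_t)\,\Sigma_t\bigr)$ --- requires the trace-norm gradient bound at $(\boldsymbol{\mu}^*,\Sigma_t)$ (or along the whole segment), a point the stated assumption does not literally cover if $B_1$ is read, as the paper uses it, only at the iterates. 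The fix is either to read $B_1$ as a bound uniform in $\boldsymbol{\mu}$ (a strictly stronger hypothesis) or, better, to switch to the paper's joint comparator $(\boldsymbol{\mu}^*,\boldsymbol{0})$, which eliminates the bias term entirely, keeps every gradient evaluation at $\boldsymbol{\theta}_t$, and yields the same constants. With that single substitution your bookkeeping reproduces the stated numerator.
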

\textbf{Remark:} Theorem~\ref{GerneralConv} does not require the function $f(\boldsymbol{x})$  be differentiable. It holds for non-smooth function $f(\boldsymbol{x})$. Theorem~\ref{GerneralConv} holds for convex function $f(\boldsymbol{x})$, as long as $\bar{J}({\boldsymbol{\theta}}):=\mathbb{E}_{p({\boldsymbol{x}};{\boldsymbol{\theta}})}[f({\boldsymbol{x}})]$ be $\gamma$-strongly convex. Particularly, when $f(\boldsymbol{x})$ is $\gamma$-strongly convex, we know $\bar{J}({\boldsymbol{\theta}})$ is $\gamma$-strongly convex~\cite{domke2019provable}. Thus, the assumption here is weaker than strongly convex assumption of $f(\boldsymbol{x})$. Moreover, Theorem~\ref{GerneralConv} does not require the boundedness of the domain. It only requires the boundedness of the distance between the initialization point and an optimal point. Theorem~\ref{GerneralConv} shows that the bound depends on the bound of $\mathbb{E} \| \widehat{g}_t \|_2^2$, which means that reducing variance of the gradient estimators can leads to a small regret bound.

\textbf{Black-box Case:}
For black-box optimization, we can only access the function value instead of the gradient. Let  $\boldsymbol{z } \sim \mathcal{N}(\boldsymbol{0},\boldsymbol{I})$,  we give an unbiased estimator of $\nabla _ {\boldsymbol{\mu}} \bar{J}({\boldsymbol{\theta}}_t)$ using function values as
\begin{align}
\label{singleg}
   \widehat{g}_t =\Sigma_t^{-\frac{1}{2}}\boldsymbol{z} \left( {f}(\boldsymbol{\mu}_t+\Sigma_t^{\frac{1}{2}}\boldsymbol{z}) -{f}(\boldsymbol{\mu}_t) \right) 
\end{align}

The estimator $ \widehat{g}_t$ is unbiased, i.e.,  $\mathbb{E}[\widehat{g}_t]= \nabla _ {\boldsymbol{\mu}} \bar{J}({\boldsymbol{\theta}}_t)$.
The proof of unbiasedness of the estimator $\widehat{g}_t$  is given in Lemma~7 in the supplement. With this estimator, we give the convergence rate of Algorithm~\ref{GF} for convex black-box optimization as in Theorem~\ref{BObound}.

\begin{theorem}
\label{BObound}
For a $L$-Lipschitz continuous convex  black box function $f(\boldsymbol{x})$,  define $\bar{J}({\boldsymbol{\theta}}):=\mathbb{E}_{p({\boldsymbol{x}};{\boldsymbol{\theta}})}[f({\boldsymbol{x}})]$
 for Gaussian distribution with parameter ${\boldsymbol{\theta} } := \{{\boldsymbol{\mu}} , \Sigma^{\frac{1}{2}}\} \in {\boldsymbol{\Theta} } $ and  ${\boldsymbol{\Theta} }: = \{{\boldsymbol{\mu}} , \Sigma^{\frac{1}{2}}  \big|\; {\boldsymbol{\mu}} \in \mathcal{R}^d ,  \Sigma \in \mathcal{S}^{+} \}   $. Suppose $\bar{J}({\boldsymbol{\theta}})$ be  $\gamma$-strongly convex. Let $\widehat{G}_t$ be positive semi-definite matrix such that  $  b  \boldsymbol{I}  \preceq \widehat{G}_t \preceq \frac{\gamma}{2} \boldsymbol{I}$.  Suppose $\Sigma_1 \in \mathcal{S}^{++} $ and $\|\Sigma_1 \|_2 \le \rho$,  Assume furthermore  $ \| \nabla _ {\Sigma = \Sigma_t}\bar{J} \|_{tr} \le B_1$ and $ \| {\boldsymbol{\mu}}^* - {\boldsymbol{\mu}}_{1}    \|^2_{\Sigma_{1}^{-1}} \le R$,   . Set $\beta_t = \beta$ and employ estimator $\widehat{g}_t$ in Eq.(\ref{singleg}), then Algorithm~\ref{GF} can achieve
 \begin{equation}
   \resizebox{1\columnwidth}{!}{
   $ \frac{1}{T}\left[  \sum_{t=1}^{T} {\mathbb{E}f({\boldsymbol{\mu}}_t) }  \right] - f({\boldsymbol{\mu}}^*)  \le  \frac{ bR + b\beta\rho(4B_1 + 2\beta L^2(d+4)^2)  }{ 2\beta b T}  + \frac{  4 B_1 (1 + \log T)   + (1+ \log T) \beta L^2(d+4)^2}{4\beta b T} = \mathcal{O} \left( \frac{d^2 \log T}{T} \right)$
  }   
 \end{equation}
\end{theorem}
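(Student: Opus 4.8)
\textbf{Proof proposal for Theorem~\ref{BObound}.}

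The plan is to reduce the black-box case to the general stochastic case of Theorem~\ref{GerneralConv}, which already provides the template bound $\tfrac{1}{T}\sum_t \mathbb{E}f(\boldsymbol{\mu}_t) - f(\boldsymbol{\mu}^*) \le \tfrac{2bR + 2b\beta\rho(4B_1+\beta\mathcal{B}) + 4B_1(1+\log T) + (1+\log T)\beta\mathcal{B}}{4\beta b T}$. Comparing that expression term-by-term with the claimed bound here, I see the two are identical up to an overall factor of $2$ in the leading constants and the substitution $\mathcal{B} \to 2L^2(d+4)^2$. So the entire content of this theorem is to verify two facts about the specific zeroth-order estimator $\widehat{g}_t$ in Eq.(\ref{singleg}): first, that it is unbiased, i.e.\ $\mathbb{E}[\widehat{g}_t] = \nabla_{\boldsymbol{\mu}}\bar{J}(\boldsymbol{\theta}_t)$; and second, that its second moment is controlled, $\mathbb{E}\|\widehat{g}_t\|_2^2 \le \mathcal{B} = 2L^2(d+4)^2$. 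The unbiasedness is deferred to Lemma~7 in the supplement, so I would simply cite it; the real work is the second-moment bound, and everything else follows by plugging $\mathcal{B} = 2L^2(d+4)^2$ into Theorem~\ref{GerneralConv}.

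The main obstacle is therefore the second-moment estimate for the Gaussian smoothing gradient estimator. First I would write $\widehat{g}_t = \Sigma_t^{-1/2}\boldsymbol{z}\,(f(\boldsymbol{\mu}_t+\Sigma_t^{1/2}\boldsymbol{z}) - f(\boldsymbol{\mu}_t))$ and note that, because the analysis in Theorem~\ref{GerneralConv} works in the $\Sigma_t^{-1}$-geometry, the relevant quantity is really a norm in which the $\Sigma_t^{-1/2}$ and $\boldsymbol{z}$ factors combine cleanly; using $L$-Lipschitz continuity of $f$ one has $|f(\boldsymbol{\mu}_t+\Sigma_t^{1/2}\boldsymbol{z}) - f(\boldsymbol{\mu}_t)| \le L\|\Sigma_t^{1/2}\boldsymbol{z}\|_2$. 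The key technical lemma is the standard Gaussian-smoothing moment bound (as in Nesterov--Spokoiny): for $\boldsymbol{z}\sim\mathcal{N}(\boldsymbol{0},\boldsymbol{I})$ and an $L$-Lipschitz $f$, the smoothed gradient estimator satisfies a fourth-moment-of-the-norm control that produces the dimension factor $(d+4)$ inside, and squaring it (or tracking it through the quadratic form) yields the $(d+4)^2$ appearing in the bound. Concretely I expect $\mathbb{E}\|\widehat{g}_t\|_2^2 \le L^2\,\mathbb{E}[\|\boldsymbol{z}\|_2^2\cdot(\text{quadratic in }\boldsymbol{z})]$, and the moments of $\|\boldsymbol{z}\|^4$ for a $d$-dimensional standard Gaussian are what generate $(d+4)^2$; I would invoke the Nesterov--Spokoiny estimate $\mathbb{E}\|\boldsymbol{z}\|^2(\cdots) \le (d+4)^2$ rather than re-deriving the Gamma-function moment computation.

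Assembling the pieces, I would state the two facts about $\widehat{g}_t$ (unbiasedness from Lemma~7, second moment $\le 2L^2(d+4)^2$ from the Gaussian-smoothing lemma), then apply Theorem~\ref{GerneralConv} verbatim with $\mathcal{B} = 2L^2(d+4)^2$, and finally observe that substituting this value and simplifying the constants gives exactly the displayed bound, whose dominant term is $\tfrac{(1+\log T)\beta L^2(d+4)^2}{4\beta b T} = \mathcal{O}\!\left(\tfrac{d^2\log T}{T}\right)$ since $\beta$ cancels and $(d+4)^2 = \Theta(d^2)$. The one subtlety I would flag is checking that the constraint $bI \preceq \widehat{G}_t \preceq \tfrac{\gamma}{2}I$ required by Theorem~\ref{GerneralConv} is compatible with the $\gamma$-strong convexity of $\bar{J}$ here — but this is exactly a hypothesis of the present theorem, so no additional argument is needed beyond noting that such a $\widehat{G}_t$ exists. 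The factor-of-two discrepancy between the two theorems' leading constants (e.g.\ $2bR$ versus $bR$) is a bookkeeping difference I would reconcile by re-checking the constant tracking, but it does not affect the asymptotic rate.
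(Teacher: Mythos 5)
Your high-level plan (treat this as the general stochastic framework plus a second-moment estimate for the zeroth-order estimator) is the right spirit, but the key lemma you rely on is the precise point where the argument breaks: the uniform bound $\mathbb{E}\|\widehat{g}_t\|_2^2 \le \mathcal{B} = 2L^2(d+4)^2$ is not justified for the full-matrix estimator $\widehat{g}_t = \Sigma_t^{-1/2}\boldsymbol{z}\,(f(\boldsymbol{\mu}_t+\Sigma_t^{1/2}\boldsymbol{z})-f(\boldsymbol{\mu}_t))$. Lipschitzness gives pointwise $\|\widehat{g}_t\|_2^2 \le L^2\,\|\Sigma_t^{-1/2}\boldsymbol{z}\|_2^2\,\|\Sigma_t^{1/2}\boldsymbol{z}\|_2^2$, and the two quadratic forms cancel into $\|\boldsymbol{z}\|_2^4$ only when $\Sigma_t$ is a multiple of the identity — which is exactly the isotropic setting of the Nesterov--Spokoiny bound you invoke, not the setting of this theorem. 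For general $\Sigma_t$ you pick up the condition number $\|\Sigma_t\|_2\|\Sigma_t^{-1}\|_2$, and since the update forces $\lambda_{\min}(\Sigma_t^{-1}) \ge 2\beta b\,t$ while $\lambda_{\max}(\Sigma_t^{-1})$ grows as well, there is no absolute constant $2$; the best you can salvage along your route is $\mathcal{B}_t \le L^2\max\bigl(\kappa(\Sigma_1), \gamma/(2b)\bigr)(d+4)^2$, which recovers the $\mathcal{O}(d^2\log T/T)$ rate but with a strictly worse constant than the theorem states, and in any case requires an argument you did not give.

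The paper sidesteps this entirely by never bounding $\mathbb{E}\|\widehat{g}_t\|_2^2$ in isolation. Its Lemma~7 bounds the \emph{weighted} moment $\mathbb{E}\,\|\Sigma_{t+1}\|_2\,\|\widehat{g}_t\|_2^2 \le L^2\|\Sigma_t\|_2(d+4)^2$ — which is the exact quantity appearing in the per-step inequality of Lemma~4 — so the problematic $\Sigma_t^{-1}$ factor is absorbed by the $\|\Sigma_{t+1}\|_2$ weight rather than estimated crudely. It then re-runs the summation from the proof of Theorem~\ref{GerneralConv} (Eq.~(\ref{BlackBoundin})) using the decay $\|\Sigma_t\|_2 \le \frac{1}{2(t-1)\beta b}$, instead of citing Theorem~\ref{GerneralConv} as a black box. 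This also explains the constant mismatch you proposed to wave away as bookkeeping: because the weighted bound carries $\|\Sigma_t\|_2$ rather than $\|\Sigma_{t+1}\|_2$, the index shift produces a $(1+\log T)$ coefficient of $\beta L^2(d+4)^2$ (not $2\beta L^2(d+4)^2$, as a substitution $\mathcal{B}\mapsto 2L^2(d+4)^2$ into Theorem~\ref{GerneralConv} would give) while the $\rho$-term matches the $\mathcal{B}=2L^2(d+4)^2$ pattern — the asymmetry is a symptom of the structural difference between the two proofs, not a typo to be reconciled. To repair your proposal, replace your second-moment lemma with the weighted bound and redo the telescoping sum with the $1/t$ decay; citing Lemma~7 for unbiasedness, as you do, is fine.
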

\textbf{Remark:} Theorem~\ref{BObound} holds for non-differentiable function $f(\boldsymbol{x})$. Thus, Theorem~\ref{BObound}
can cover more interesting cases e.g. sparse black box optimization.  In contrast,  Balasubramanian et al.~(\cite{balasubramanian2018zeroth})  require function $f(\boldsymbol{x})$ has Lipschitz continuous gradients.  

 Alg.~\ref{alg:INGOu}
 employs an unbiased gradient estimator. When further ensure $  b  \boldsymbol{I}  \preceq \widehat{G}_t \preceq \frac{\gamma}{2} \boldsymbol{I}$, Theorem~\ref{BObound} holds for  Alg.~\ref{alg:INGOu}
Theorem~\ref{BObound} is derived for single sample per iteration.
We can reduce the variance of estimators by constructing a set of structured samples that are conjugate  of inverse covariance matrix  in a batch, i.e., $\boldsymbol{z}_i \Sigma_t^{-1}\boldsymbol{z}_j =0 , i \ne j$. Particularly, when we use $\widehat{\Sigma}_t = \sigma_t \boldsymbol{I}$,  sampling $N=d$ orthogonal samples~\cite{choromanski2018structured} per iteration can lead to a convergence rate  $\mathcal{O}\left( \frac{d \log T}{T} \right)$. For $N>d$ samples, we can use the method in~\cite{lyu2017spherical} with  a random rotation to reduce variance. For very large $N$, we can use the construction in Eq.(23) in \cite{lyu2017spherical} to transform the complex sampling matirx~\cite{xu2011deterministic}  onto  sphere $\mathbb{S}^{d-1}$, then scale samples by i.i.d  variables from Chi distribution.  This construction  has a mutual coherence bound.

\section{Empirical Study}
\label{experiments}

\begin{figure}[t]
\centering
\subfigure[\scriptsize{Ellipsoid}]{
\label{Ellipsoid}
\includegraphics[width=0.3\linewidth]{./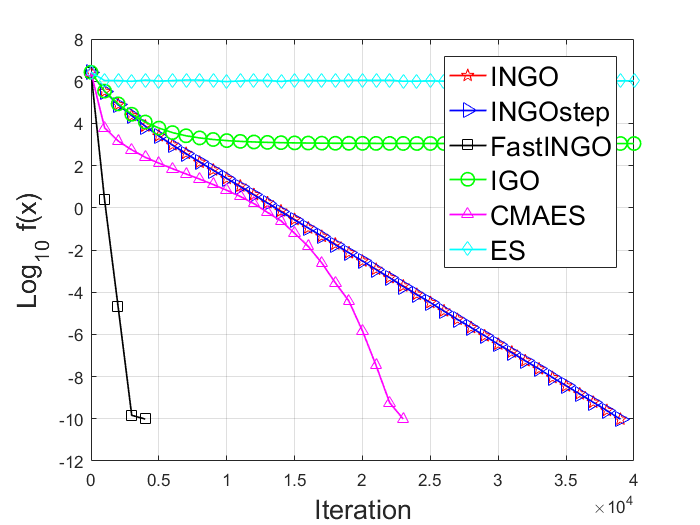}}
\subfigure[\scriptsize{$\ell_1$-Ellipsoid }]{
\label{l1Ellipsoid}
\includegraphics[width=0.3\linewidth]{./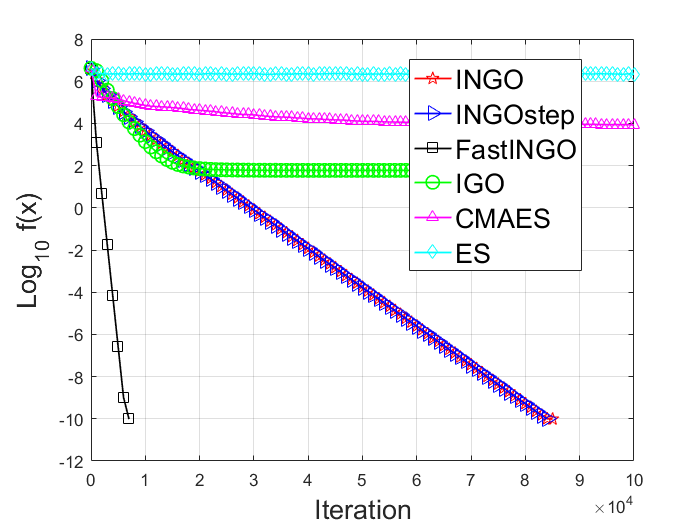}}
\subfigure[\scriptsize{$\ell_\frac{1}{2}$-Ellipsoid  }]{
\label{L12Ellipsoid}
\includegraphics[width=0.3\linewidth]{./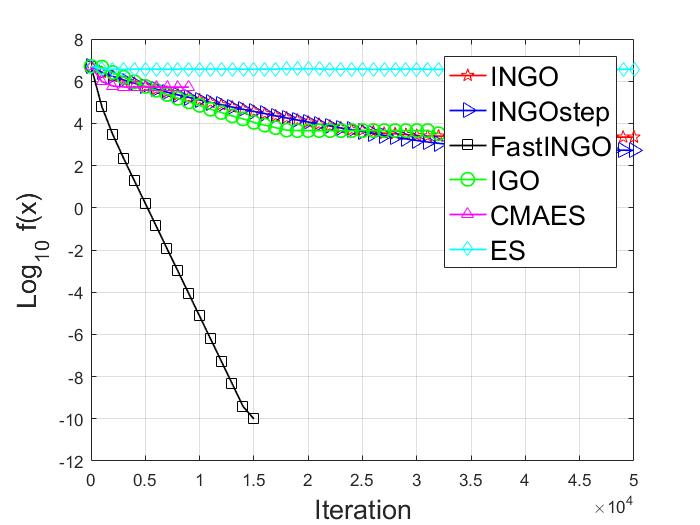}}
\subfigure[\scriptsize{ Discus}]{
\label{Discus.3}
\includegraphics[width=0.3\linewidth]{./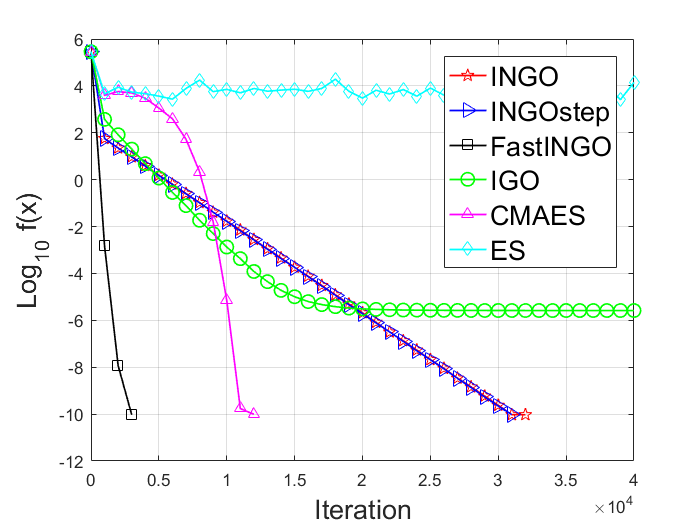}}
\subfigure[\scriptsize{Levy}]{
\label{Levy}
\includegraphics[width=0.3\linewidth]{./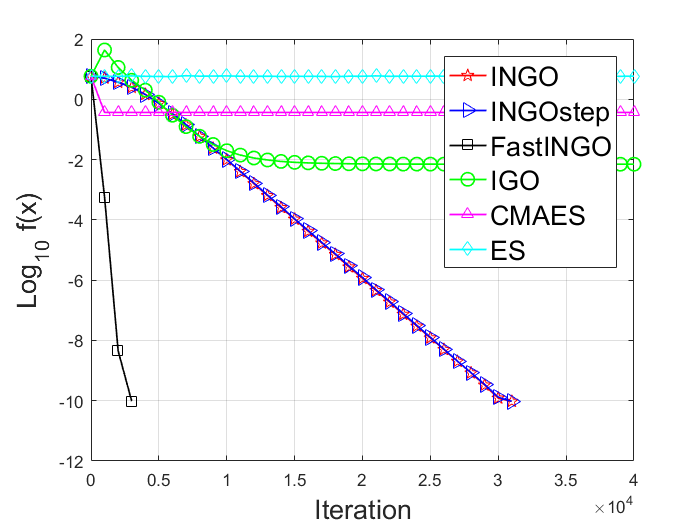}}
\subfigure[\scriptsize{ Rastrigin10 }]{
\label{Rastrigin10}
\includegraphics[width=0.3\linewidth]{./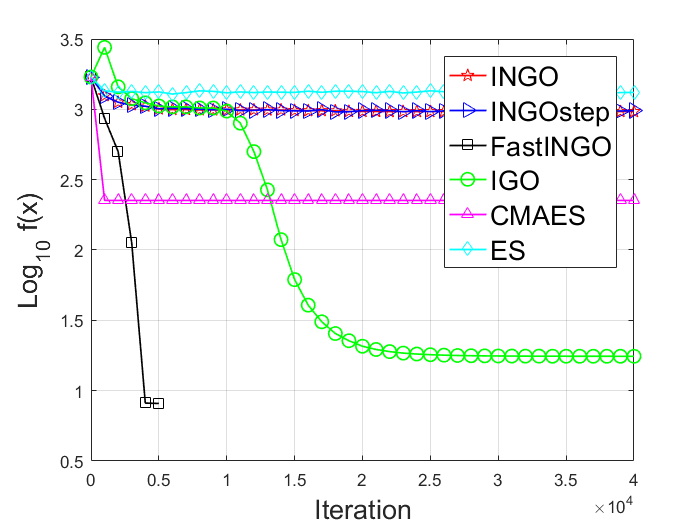}}
\caption{   Mean value of $f(\boldsymbol{x})$ in $\log_{10}$ scale over 20 independent runs for 100-dimensional problems. }
\label{TestFunc}
\end{figure}

\begin{figure*}[t]
\centering
\subfigure[\scriptsize{Swimmer}]{
\label{Swimmer}
\includegraphics[width=0.224\linewidth]{./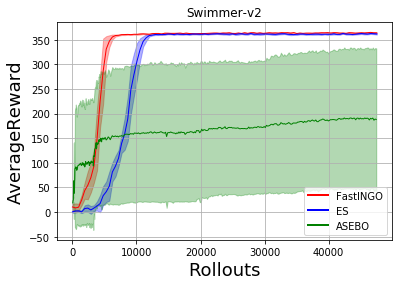}}
\subfigure[\scriptsize{HalfCheetah}]{
\label{HalfCheetah}
\includegraphics[width=0.224\linewidth]{./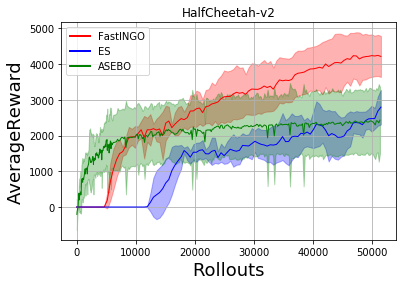}}
\subfigure[\scriptsize{InvertedDoublePendulum}]{
\label{InvertedDoublePendulum}
\includegraphics[width=0.224\linewidth]{./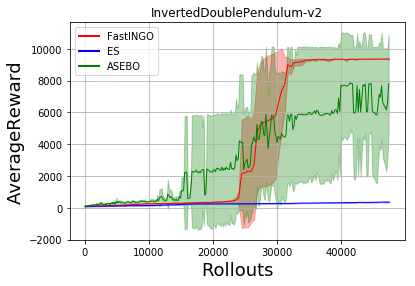}}
\subfigure[\scriptsize{HumanoidStandup}]{
\label{HumanoidStandup}
\includegraphics[width=0.224\linewidth]{./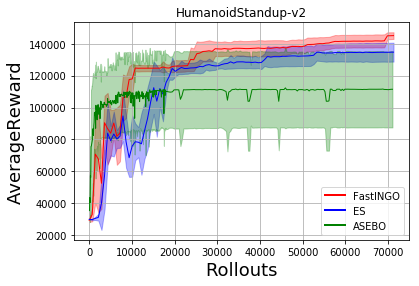}}

\caption{  Average Reward over 5 independent  runs on  benchmark RL environments  }
\label{RLtest}
\end{figure*}

\textbf{Evaluation on synthetic continuous test benchmarks.}
We evaluate the proposed INGO , INGOstep  and Fast-INGO (diagonal case of INGO) by comparing with one of the  state-of-the-art method CMA-ES~\cite{hansen2006cma} and IGO~\cite{ollivier2017information} with full covariance matrix update, and vanilla ES with antithetic gradient estimators~\cite{salimans2017evolution}  on several synthetic benchmark test problems.  All  the  test problems are listed in Table~1 in the supplement.

{\textbf{ Parameter Settings:}} For INGO, INGOstep and IGO, we use the same normalization transformation $h(f({\boldsymbol{x}}_i)=\frac{f({\boldsymbol{x}}_i)-\widehat{\mu}}{\widehat{\sigma}}$ and all same hyper-parameters to test the effect of implicit natural gradient.  We set step size $\beta = 1/d$ for all of them. For  Fast-INGO, we set step size $\beta = 1/\sqrt{d}$, where $d$ is the dimension of the test problems. The number of samples per iteration is set to $N =2 \lfloor 3 + \lfloor 3 \times \ln{d} \rfloor /2  \rfloor$ for all the methods,
 where $ \lfloor \cdot \rfloor $ denotes the floor function. This setting ensures $N$ to be an even number.
We set ${\boldsymbol{\sigma}}_{1} = 0.5 \times {\boldsymbol{1}}$ and sample ${\boldsymbol{\mu}}_{1} \sim Uni[{\boldsymbol{0}},{\boldsymbol{1}}]$  as the same initialization for all the methods, where $Uni[0,1]$ denotes the uniform distribution in $[0,1]$. For ES~\cite{salimans2017evolution}, we use the default step-size hyper-parameters. 

The mean value of $f(\boldsymbol{x})$ over 20 independent runs for 100-dimensional problems are show in Figure~\ref{TestFunc}.  From Figure~\ref{TestFunc}, we can see that INGO, INGOstep and Fast-INGO
 converge linearly in log scale. Fast-INGO can arrive  $10^{-10}$ precision  on five cases except the highly non-convex Rastrigin10 problem. Fast-INGO employs the separate structure of the problems, thus it obtains better performance than the other methods with full matrix update. It is worth to note that Fast-INGO is not rotation invariant compared with Full-INGO.  The  INGO and INGOstep (with full matrix update) can arrive $10^{-10}$ on four cases, while IGO with  full matrix update can not achieve high precision. This shows that the update of inverse of covariance matrix is more stable.  Moreover, CMA-ES converge linearly in log scale for the convex Ellipsoid problem but slower than Fast-INGO. In addition, CMAES converge slowly on  the non-smooth   $\ell_1$-Ellipsoid and the non-convex $\ell_\frac{1}{2}$-Ellipsoid problem. Furthermore, CMAES fails on the non-convex Levy problem, while INGO, INGOstep and  Fast-INGO  obtain $10^{-10}$. CMAES converges faster or achieves smaller value than ES. On the non-convex Rastrigin10 problem, all methods fail to obtain $10^{-10}$ precision. Fast-INGO obtains smaller value.  The results on synthetic test problems show that methods employing second-order information  converge faster than first-order method ES. And employing second-order information is important  to obtain high optimization  precision, i.e., $10^{-10}$. Moreover,   taking stochastic implicit natural gradient update  can converge faster than   IGO.
The test functions are highly ill-conditioned and non-convex; the experimental results show that it is challenging for ES to optimize them well without adaptively update covariance and mean.

\textbf{Evaluation on RL test problems.} 
We further evaluate  the proposed Fast-INGO  by comparing  AESBO~\cite{choromanski2019complexity} and  ES with antithetic gradient estimators~\cite{salimans2017evolution}  on  MuJoCo  control problems: Swimmer,  HalfCheetah, HumanoidStandup, InvertedDoublePendulum,      in Open-AI Gym environments.  CMA-ES is too slow due to the computation of eigendecomposition for high-dimensional  problems. 

We use one hidden layer feed-forward neural network with tanh activation function as policy architecture.   The number of hidden unit is set to  $h=16$ for all problems. The goal is to find the parameters of this policy network to achieve large reward.   The same policy architecture is used for all the methods on all  test problems. 
The number of samples per iteration is set to $N = 20 +  4  \lfloor\lfloor 3 \times \ln{d} \rfloor /2  \rfloor$ for all the methods.  For Fast-INGO, we set step-size $\beta = 0.3$ . We set   ${\boldsymbol{\sigma}}_{1} = 0.1 \times {\boldsymbol{1}}$ and ${\boldsymbol{\mu}}_{1} = \boldsymbol{0}$ as the initialization for both Fast-INGO and ES. For ES~\cite{salimans2017evolution}, we use the default step-size hyper-parameters. Five independent runs are performed. 
The experimental results are shown in 
Figure~\ref{RLtest}. We can observe that Fast-INGO increase AverageReward faster than ES on all four cases. This shows that the update using seconder order information in Fast-INGO  can help accelerate convergence.

 \section{Conclusions}


We proposed a novel stochastic implicit natural gradient frameworks for black-box optimization. Under this framework,  we presented algorithms for both continuous and discrete black-box optimization.   Theoretically, we proved the $\mathcal{O}\left(\log T / T\right)$ convergence rate of our continuous algorithms with stochastic update for non-differentiable convex function under expectation $\gamma$-strongly convex assumption. We proved $\mathcal{O}\left(d^2\log T /T\right)$ converge rate for black-box function under same assumptions above. For isometric Gaussian case, we proved the $\mathcal{O}\left(d\log T / T\right)$ converge rate when using $d$ orthogonal samples per iteration, which well supports parallel evaluation.  Our method is very simple, and it contains less hyper-parameters than CMA-ES. Empirically,  our methods obtain  a competitive performance compared with CMA-ES. Moreover, our INGO and INGOstep with full matrix update can achieve high precision on Levy test problem and Ellipsoid problems, while IGO~\cite{ollivier2017information} with full matrix update can not.  This shows the efficiency of our methods.
On RL control problems, our algorithms increase average reward faster than ASEBO~\cite{choromanski2019complexity} and ES, which shows employing second order information can help accelerate convergence. Moreover, our discrete algorithm outperforms than GA on test functions. 







\bibliography{example_paper}
\bibliographystyle{plain}

\newpage
\onecolumn

\appendix

\section{ Optimization for  Discrete Variable}
\label{Opdiscrete}

\textbf{Binary Optimization:}
For function $f(\boldsymbol{x})$ over binary variable $\boldsymbol{x} \in \{0,1\}^d$, 
we employ Bernoulli distribution with parameter $\boldsymbol{\boldsymbol{p}}= [p_1, \cdots, p_d  ]^\top$ as the underlying distribution, where $p_i$ denote the probability of  $x_i=1$. Let $\boldsymbol{\eta}$ denote the natural parameter, then we know $\boldsymbol{\boldsymbol{p}} = \frac{1}{1+e^{-\boldsymbol{\eta}}}$. The mean parameter is  ${\boldsymbol{m}} ={\boldsymbol{p}} $.

From Eq.(\ref{updateEta}), we know that
\begin{align}
    \boldsymbol{\eta}_{t+1} = \boldsymbol{\eta}_{t} - \beta_t  \nabla _ {\boldsymbol{p}} \mathbb{E}_{\boldsymbol{p}} [ f(\boldsymbol{x}) ]
\end{align}
Approximate the gradient by Monte Carlo sampling, we obtain that 
\begin{align}
     \boldsymbol{\eta}_{t+1} = \boldsymbol{\eta}_{t} - \beta_t  \frac{1}{N}\sum _{n=1}^{N} f(\boldsymbol{x}^n)\boldsymbol{h}^n
\end{align}
where $\boldsymbol{h}^n_i = \frac{1}{p_i}\boldsymbol{1}(\boldsymbol{x}^n_i=1) - \frac{1}{1-p_i}\boldsymbol{1}(\boldsymbol{x}^n_i=0) $.

In order to achieve stable update, we normalize function value by its mean $\widehat{\mu}$ and standard deviation $\widehat{\sigma}$  in a batch. The normalized update is given as follows.
\begin{align}
     \boldsymbol{\eta}_{t+1} = \boldsymbol{\eta}_{t} - \beta_t  \sum _{n=1}^{N} \frac{ f(\boldsymbol{x}^n) -\widehat{\mu}}{N\widehat{\sigma}}\boldsymbol{h}^n 
\end{align}

\textbf{General Discrete Optimization:} Similarly,  for function $f(\boldsymbol{x})$ over  discrete variable $\boldsymbol{x} \in \{1,\cdots,K\}^d$, 
we employ categorical distribution with parameter $\boldsymbol{\boldsymbol{P}}= [\boldsymbol{p}_1, \cdots, \boldsymbol{p}_d  ]^\top$ as the underlying distribution, where the ${ij}$-th element of $\boldsymbol{P}$ ($\boldsymbol{P}_{ij}$) denote the probability of  $\boldsymbol{x}_i=j$. Let $\boldsymbol{\eta} \in \mathcal{R}^{d \times K}$ denote the natural parameter, then we know  $\boldsymbol{P}_{ij} = \frac{e^{\boldsymbol{\eta}_{ij}}}{\sum_{j=1}^K{e^{\boldsymbol{\eta}_{ij}}}}$. The mean parameter is  ${\boldsymbol{m}} ={\boldsymbol{P}} $.

From Eq.(\ref{updateEta}), we know that
\begin{align}
    \boldsymbol{\eta}_{t+1} = \boldsymbol{\eta}_{t} - \beta_t  \nabla _ {\boldsymbol{P}} \mathbb{E}_{\boldsymbol{P}} [ f(\boldsymbol{x}) ]
\end{align}
Approximate the gradient by Monte Carlo sampling,
\begin{align}
     \boldsymbol{\eta}_{t+1} = \boldsymbol{\eta}_{t} - \beta_t  \frac{1}{N}\sum _{n=1}^{N} f(\boldsymbol{x}^n)\boldsymbol{H}^n
\end{align}
where $\boldsymbol{H}^n_{ij} = \frac{1}{\boldsymbol{P}_{ij}}\boldsymbol{1}(\boldsymbol{x}^n_{i}=j)$. We can also normalize the update by the mean $\widehat{\mu}$ and std $\widehat{\sigma}$. 

\textbf{Evaluation on discrete test problems}

We evaluate our discrete INGO by comparing with GA method on binary reconstruction benchmark problem, i.e., $f(\boldsymbol{x}):=\|\text{sign}(\boldsymbol{x}-0.5)-\boldsymbol{w}\|^2_2 - \|\text{sign}(\boldsymbol{w})-\boldsymbol{w}\|^2_2$ with $\boldsymbol{x} \in \{0,1\}^d$ . We construct $\boldsymbol{w}$ by sampling from standard Gaussian.  The dimension $d$ of test problem is set to $\{100, 500, 1000, 2000\}$, respectively. For our discrete INGO, we set the stepsize $\beta = 1/d$. The number of samples per iteration is same as INGO, i.e., $N = 20 +  4 \lfloor 3 + \lfloor 3 \times \ln{d} \rfloor /2  \rfloor$.

The experimental resutls are shown in Fig.~\ref{DTestFunc}. We can observe that our discrete INGO achieves much smaller regret compared with GA. Our discrete INGO converges to near zero regret on  test problems, while GA decrease very slowly after a short initial greedy phase.

\begin{figure}[t]
\centering
\subfigure[\scriptsize{100-dimensional problem}]{
\includegraphics[width=0.45\linewidth]{./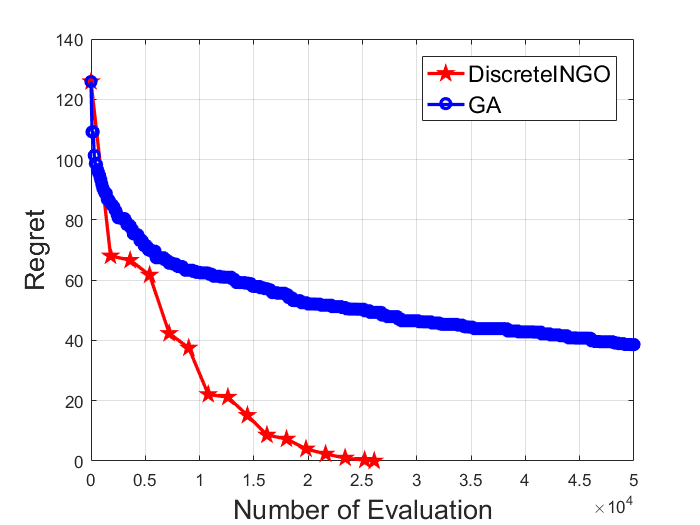}}
\subfigure[\scriptsize{500-dimensional problem }]{
\includegraphics[width=0.45\linewidth]{./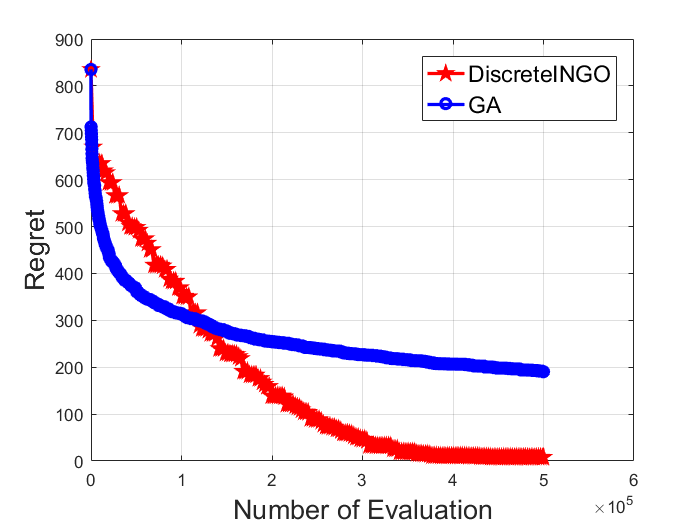}}
\subfigure[\scriptsize{1000-dimensional problem  }]{
\includegraphics[width=0.45\linewidth]{./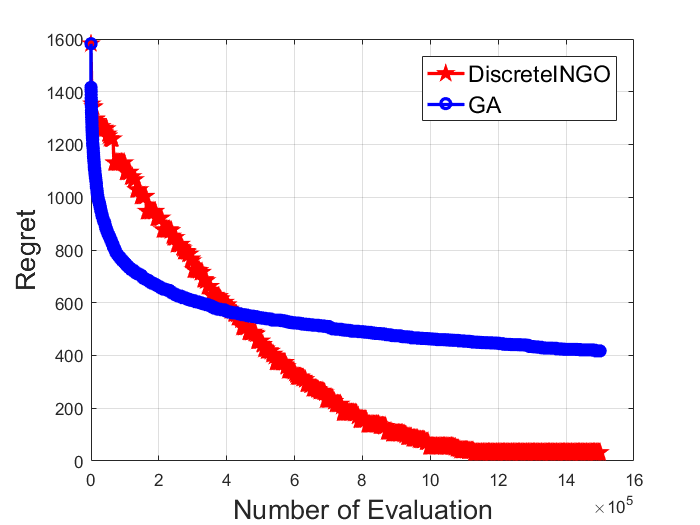}}
\subfigure[\scriptsize{ 2000-dimensional problem}]{
\includegraphics[width=0.45\linewidth]{./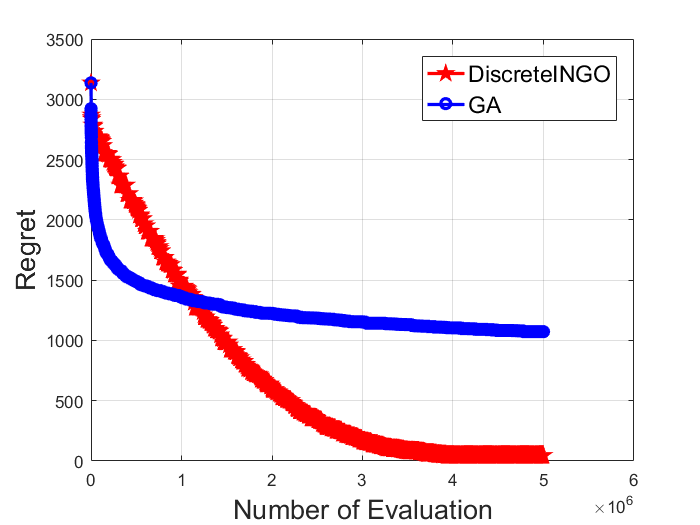}}
\caption{   Mean value of regret over 10 independent runs for different dimensional discrete optimization problems }
\label{DTestFunc}
\end{figure}

\newpage
\section{Proof of Theorem 2}
\begin{proof}
For  Gaussian distribution $p:= \mathcal{N}({\boldsymbol{\mu}}, \Sigma)$, the gradient of $ \mathbb{E}_{p}[f({\boldsymbol{x}})]$ w.r.t ${\boldsymbol{\mu}}$ can be derived as follows:
\begin{align}
    \nabla _ {\boldsymbol{\mu}} \mathbb{E}_{p}[f({\boldsymbol{x}})] & =  \mathbb{E}_{p}[f({\boldsymbol{x}})\nabla _ {\boldsymbol{\mu}} \log (p({\boldsymbol{x}};\boldsymbol{\mu},\Sigma))   ]  \\ 
    & = \mathbb{E}_{p}\left [f({\boldsymbol{x}})\nabla _ {\boldsymbol{\mu}} \left [ -\frac{1}{2}({\boldsymbol{x}}-{\boldsymbol{\mu}})^\top \Sigma^{-1}({\boldsymbol{x}}-{\boldsymbol{\mu}}) \right]  \right]   \\
    & = \mathbb{E}_{p}\left[ \Sigma^{-1}({\boldsymbol{x}}-{\boldsymbol{\mu}}) f({\boldsymbol{x}}) \right]
\end{align}
The gradient  of $ \mathbb{E}_{p}[f({\boldsymbol{x}})]$ w.r.t $\Sigma$ can be derived as follows:
\begin{align}
     \nabla _ {\Sigma} \mathbb{E}_{p}[f({\boldsymbol{x}})] & = \mathbb{E}_{p}[f({\boldsymbol{x}})\nabla _ {\Sigma} \log (p({\boldsymbol{x}};\boldsymbol{\mu},\Sigma))   ] \\
     & =  \mathbb{E}_{p}\left [f({\boldsymbol{x}})\nabla _ {\Sigma} \left [ -\frac{1}{2}({\boldsymbol{x}}-{\boldsymbol{\mu}})^\top \Sigma^{-1}({\boldsymbol{x}}-{\boldsymbol{\mu}}) -\frac{1}{2} \log \text{det}(\Sigma) \right]  \right]\\
     & =\frac{1}{2}\mathbb{E}_{p}\left[ \left( \Sigma^{-1}({\boldsymbol{x}}-{\boldsymbol{\mu}})({\boldsymbol{x}}-{\boldsymbol{\mu}})^\top \Sigma^{-1} \!\!-\!\! \Sigma^{-1} \right) f({\boldsymbol{x}}) \right]
 \label{sigma}
\end{align}
\end{proof}

\section{Proof of Theorem B}

\begin{theorem*}
\label{Gstein}
Suppose $f({\boldsymbol{x}})$ be an integrable and twice differentiable function under a Gaussian distribution $p:= \mathcal{N}({\boldsymbol{\mu}}, \Sigma)$ such that $\mathbb{E}_{p}\left[  \nabla _{\boldsymbol{x}} f({\boldsymbol{x}})\right]$ and $\mathbb{E}_{p}\left[  \frac{\partial^2f({\boldsymbol{x}})}{\partial {\boldsymbol{x}} \partial {\boldsymbol{x}}^\top }    \right]$ exists. Then, the expectation of the gradient and Hessian of $f({\boldsymbol{x}})$ can be expressed as Eq.(\ref{mu2}) and Eq.(\ref{sigma2}), respectively.
\begin{align}
\label{mu2}
 &    \mathbb{E}_{p}\left[  \nabla _{\boldsymbol{x}} f({\boldsymbol{x}})\right] = \mathbb{E}_{p}\left[ \Sigma^{-1}({\boldsymbol{x}}-{\boldsymbol{\mu}}) f({\boldsymbol{x}}) \right]  \\
 &  \mathbb{E}_{p}\left[  \frac{\partial^2f({\boldsymbol{x}})}{\partial {\boldsymbol{x}} \partial {\boldsymbol{x}}^\top }    \right]   = \mathbb{E}_{p}\left[ \left( \Sigma^{-1}({\boldsymbol{x}}-{\boldsymbol{\mu}})({\boldsymbol{x}}-{\boldsymbol{\mu}})^\top \Sigma^{-1} \!\!-\!\! \Sigma^{-1} \right) f({\boldsymbol{x}}) \right]
 \label{sigma2}
\end{align}
\end{theorem*}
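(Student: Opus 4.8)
The plan is to prove both identities by integration by parts against the Gaussian density, exploiting the single key fact that the score of a Gaussian is $\nabla_{\boldsymbol{x}} p(\boldsymbol{x}) = -\Sigma^{-1}(\boldsymbol{x}-\boldsymbol{\mu})\,p(\boldsymbol{x})$, where $p(\boldsymbol{x})$ is the density of $\mathcal{N}(\boldsymbol{\mu},\Sigma)$. A second, shorter route is to read the identities off from Theorem 2: the right-hand sides of Eq.(\ref{mu2}) and Eq.(\ref{sigma2}) are exactly $\nabla_{\boldsymbol{\mu}}\mathbb{E}_p[f]$ and $2\nabla_{\Sigma}\mathbb{E}_p[f]$ respectively (already established there), so the statement reduces to the identities $\mathbb{E}_p[\nabla_{\boldsymbol{x}}f] = \nabla_{\boldsymbol{\mu}}\mathbb{E}_p[f]$ and $\mathbb{E}_p[\partial^2 f/\partial\boldsymbol{x}\partial\boldsymbol{x}^\top] = 2\nabla_{\Sigma}\mathbb{E}_p[f]$. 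I would nonetheless present the direct integration-by-parts argument, since it is self-contained.

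For Eq.(\ref{mu2}), I would write $\mathbb{E}_p[\Sigma^{-1}(\boldsymbol{x}-\boldsymbol{\mu})f(\boldsymbol{x})] = \int \Sigma^{-1}(\boldsymbol{x}-\boldsymbol{\mu})f(\boldsymbol{x})p(\boldsymbol{x})\,d\boldsymbol{x}$, substitute $\Sigma^{-1}(\boldsymbol{x}-\boldsymbol{\mu})p(\boldsymbol{x}) = -\nabla_{\boldsymbol{x}}p(\boldsymbol{x})$, and integrate by parts coordinate-wise. The boundary terms vanish because $p(\boldsymbol{x})$ decays faster than any polynomial grows and $f$ is integrable against $p$, leaving $\int (\nabla_{\boldsymbol{x}}f(\boldsymbol{x}))p(\boldsymbol{x})\,d\boldsymbol{x} = \mathbb{E}_p[\nabla_{\boldsymbol{x}}f(\boldsymbol{x})]$, which is Eq.(\ref{mu2}).

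For Eq.(\ref{sigma2}), I would apply the vector Stein identity twice. Applying the first identity componentwise to each partial derivative $\partial_j f$ gives the mixed relation $\mathbb{E}_p[\partial_i\partial_j f] = \mathbb{E}_p[(\Sigma^{-1}(\boldsymbol{x}-\boldsymbol{\mu}))_i\,\partial_j f]$. Then I integrate by parts once more in the variable $x_j$; using $\partial_j(\Sigma^{-1}(\boldsymbol{x}-\boldsymbol{\mu}))_i = (\Sigma^{-1})_{ij}$ together with $\partial_j p = -(\Sigma^{-1}(\boldsymbol{x}-\boldsymbol{\mu}))_j\,p$, the $(i,j)$ entry collapses to $\mathbb{E}_p[((\Sigma^{-1}(\boldsymbol{x}-\boldsymbol{\mu}))_i(\Sigma^{-1}(\boldsymbol{x}-\boldsymbol{\mu}))_j - (\Sigma^{-1})_{ij})f]$. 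Reassembling into matrix form yields exactly Eq.(\ref{sigma2}).

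The main obstacle is not the algebra but the analytic justification: I need to interchange differentiation and integration and to verify the vanishing of all boundary terms in each integration by parts. These are precisely what the hypotheses supply — $f$ is twice differentiable and both $\mathbb{E}_p[\nabla_{\boldsymbol{x}}f]$ and $\mathbb{E}_p[\partial^2 f/\partial\boldsymbol{x}\partial\boldsymbol{x}^\top]$ exist, so each integrand is dominated near infinity by the rapidly decaying Gaussian factor. I would state this domination explicitly (via a dominated-convergence argument over each coordinate half-line) before carrying out the two integrations by parts, so that the boundary contributions provably drop out.
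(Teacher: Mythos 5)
Your proof is correct, but it takes a genuinely different route from the paper's. The paper disposes of this theorem essentially by citation: it invokes Bonnet's theorem, $\nabla_{\boldsymbol{\mu}}\mathbb{E}_{p}[f({\boldsymbol{x}})] = \mathbb{E}_{p}[\nabla_{\boldsymbol{x}} f({\boldsymbol{x}})]$, and Price's theorem, $\nabla_{\Sigma}\mathbb{E}_{p}[f({\boldsymbol{x}})] = \frac{1}{2}\mathbb{E}_{p}\left[\frac{\partial^2 f({\boldsymbol{x}})}{\partial {\boldsymbol{x}}\partial {\boldsymbol{x}}^\top}\right]$ (both from the stochastic-backpropagation literature), and then substitutes the log-likelihood-trick expressions of Theorem 2 for $\nabla_{\boldsymbol{\mu}}\mathbb{E}_p[f]$ and $\nabla_{\Sigma}\mathbb{E}_p[f]$ --- which is precisely the ``shorter route'' you sketch in your opening paragraph and then decline to use. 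Your main argument instead proves both identities from scratch by Stein-type integration by parts against the Gaussian score $\nabla_{\boldsymbol{x}} p({\boldsymbol{x}}) = -\Sigma^{-1}({\boldsymbol{x}}-{\boldsymbol{\mu}})p({\boldsymbol{x}})$; the second-order step, applying the first identity to $\partial_j f$ and integrating by parts once more using $\partial_j\bigl(\Sigma^{-1}({\boldsymbol{x}}-{\boldsymbol{\mu}})\bigr)_i = (\Sigma^{-1})_{ij}$, is algebraically correct and reassembles to Eq.~(\ref{sigma2}) exactly. What your approach buys is self-containedness: you effectively reprove the Gaussian cases of Bonnet's and Price's theorems rather than importing them, at the cost of owning the analytic fine print. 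On that one point your justification is slightly optimistic: mere existence of $\mathbb{E}_p[f]$, $\mathbb{E}_p[\nabla_{\boldsymbol{x}} f]$ and $\mathbb{E}_p[\partial^2 f/\partial{\boldsymbol{x}}\partial{\boldsymbol{x}}^\top]$ does not by itself force the boundary terms to vanish, since $f$ is not assumed polynomially bounded, so ``the Gaussian decays faster than any polynomial'' does not cover an arbitrary admissible $f$; a fully rigorous version would establish $f({\boldsymbol{x}})p({\boldsymbol{x}})\to 0$ along the relevant coordinate limits from the integrability of $fp$ and $(\nabla f)p$. This, however, is the same implicit regularity that the cited Bonnet and Price theorems carry, so your proof sits at the same level of rigor as the paper's while being more elementary and self-contained.
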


\begin{proof} 
For Gaussian distribution,
from Bonnet's theorem~\cite{rezende2014stochastic}, we know that 
\begin{align}
      \nabla _ {\boldsymbol{\mu}} \mathbb{E}_{p}[f({\boldsymbol{x}})]  =   \mathbb{E}_{p}\left[  \nabla _{\boldsymbol{x}} f({\boldsymbol{x}})\right]   ].
\end{align}
From Theorem 2, we know that 
\begin{align}
     \nabla _ {\boldsymbol{\mu}} \mathbb{E}_{p}[f({\boldsymbol{x}})] =  \mathbb{E}_{p}\left[ \Sigma^{-1}({\boldsymbol{x}}-{\boldsymbol{\mu}}) f({\boldsymbol{x}}) \right].
\end{align}
Thus, we can obtain that 
\begin{align}
     \mathbb{E}_{p}\left[  \nabla _{\boldsymbol{x}} f({\boldsymbol{x}})\right] = \mathbb{E}_{p}\left[ \Sigma^{-1}({\boldsymbol{x}}-{\boldsymbol{\mu}}) f({\boldsymbol{x}}) \right].
\end{align}

From Price's Theorem~\cite{rezende2014stochastic}, we know that 
\begin{align}
 \nabla _ {\Sigma} \mathbb{E}_{p}[f({\boldsymbol{x}})]    =  \frac{1}{2} \mathbb{E}_{p}\left[  \frac{\partial^2f({\boldsymbol{x}})}{\partial {\boldsymbol{x}} \partial {\boldsymbol{x}}^\top }    \right].
\end{align}
From Theorem 2, we know that 
\begin{align}
    \nabla _ {\Sigma} \mathbb{E}_{p}[f({\boldsymbol{x}})] = \frac{1}{2}\mathbb{E}_{p}\left[ \left( \Sigma^{-1}({\boldsymbol{x}}-{\boldsymbol{\mu}})({\boldsymbol{x}}-{\boldsymbol{\mu}})^\top \Sigma^{-1} \!\!-\!\! \Sigma^{-1} \right) f({\boldsymbol{x}}) \right] 
\end{align}
It follows that
\begin{align}
    \mathbb{E}_{p}\left[  \frac{\partial^2f({\boldsymbol{x}})}{\partial {\boldsymbol{x}} \partial {\boldsymbol{x}}^\top }    \right]  = \mathbb{E}_{p}\left[ \left( \Sigma^{-1}({\boldsymbol{x}}-{\boldsymbol{\mu}})({\boldsymbol{x}}-{\boldsymbol{\mu}})^\top \Sigma^{-1} \!\!-\!\! \Sigma^{-1} \right) f({\boldsymbol{x}}) \right]
\end{align}

\end{proof}

\section{Proof of Theorem 3}
\begin{proof}
For Guassian distribution with parameter ${\boldsymbol{\theta} } := \{{\boldsymbol{\mu}}, \Sigma\} $,  problem~(\ref{MopMu}) can be rewrited as 
\begin{align}
    \left<{\boldsymbol{\theta}} ,  \nabla _ {\boldsymbol{\theta}} \bar{J}({\boldsymbol{\theta}_t}) \right> \!+\! \frac{1}{\beta_t} \text{KL}\left( p_\theta \| p_{\theta_t}  \right) 
    & = {\boldsymbol{\mu}}^\top\nabla _ {\boldsymbol{\mu}} \bar{J}({\boldsymbol{\theta}}_t) \!+ \!\text{tr}(\Sigma \nabla _ {\Sigma} \bar{J}({\boldsymbol{\theta}}_t))  \nonumber \\  & + \frac{1}{2\beta_t}\!\left[ \text{tr}(\Sigma_t^{-1}\Sigma) \!+\!  ({\boldsymbol{\mu}}\!-\!{\boldsymbol{\mu}}_t)^\top \Sigma_t^{-1}({\boldsymbol{\mu}}\!-\!{\boldsymbol{\mu}}_t) \!+\! \log\!{\frac{|\Sigma_t|}{|\Sigma|}} \!-\! d \right] 
    \label{Theom4conv}
\end{align}
where $\nabla _ {\boldsymbol{\mu}} \bar{J}({\boldsymbol{\theta}}_t)$ denotes the derivative w.r.t ${\boldsymbol{\mu}}$ taking at ${\boldsymbol{\mu}} = {\boldsymbol{\mu}}_t, \Sigma = \Sigma_t$.  $ \nabla _ {\Sigma} \bar{J}({\boldsymbol{\theta}}_t)$ denotes the derivative w.r.t $\Sigma$ taking at ${\boldsymbol{\mu}} = {\boldsymbol{\mu}}_t, \Sigma = \Sigma_t$. Note that $\nabla _ {\boldsymbol{\mu}} \bar{J}({\boldsymbol{\theta}}_t)$ and  $ \nabla _ {\Sigma} \bar{J}({\boldsymbol{\theta}}_t)$ are not functions now.

From Eq.(\ref{Theom4conv}), we can see that the problem is convex with respect to  ${\boldsymbol{\mu}}$ and $\Sigma$. Taking the derivative of (\ref{Theom4conv}) w.r.t ${\boldsymbol{\mu}}$ and $\Sigma$, and setting them to zero, we can obtain that 
\begin{align}
  &  \nabla _ {\boldsymbol{\mu}} \bar{J}({\boldsymbol{\theta}}_t) + \frac{1}{\beta_t}\Sigma_t^{-1}({\boldsymbol{\mu}}\!-\!{\boldsymbol{\mu}}_t) = {\boldsymbol{0}} \\
  &  \nabla _ {\Sigma} \bar{J}({\boldsymbol{\theta}}_t)) + \frac{1}{2\beta_t}\left[ \Sigma_t^{-1} - \Sigma^{-1} \right] =  {\boldsymbol{0}}
\end{align}
It follows that 
\begin{align}
 & {\boldsymbol{\mu}} = {\boldsymbol{\mu}}_{t} - \beta_t \Sigma_{t}\nabla _ {\boldsymbol{\mu}} \bar{J}({\boldsymbol{\theta}}_t) \\
       &  \Sigma^{-1} =  \Sigma^{-1}_{t} +  2\beta _t \nabla _ {\Sigma} \bar{J}({\boldsymbol{\theta}}_t)  
\end{align}
By definition,  ${\boldsymbol{\mu}}_{t+1} $ and  $\Sigma_{t+1}$ are the optimum of this convex optimization problem. Thus, we achieve that
\begin{align}
     & {\boldsymbol{\mu}}_{t+1} = {\boldsymbol{\mu}}_{t} - \beta_t \Sigma_{t}\nabla _ {\boldsymbol{\mu}} \bar{J}({\boldsymbol{\theta}}_t) \\
       &  \Sigma^{-1}_{t+1} =  \Sigma^{-1}_{t} +  2\beta _t \nabla _ {\Sigma} \bar{J}({\boldsymbol{\theta}}_t)  
\end{align}

\end{proof}

\section{Proof of Theorem D}

\begin{theorem*}\label{TheoremEstimator}
For Gaussian distribution with parameter ${\boldsymbol{m}}:=\{{\boldsymbol{m}}_1,{\boldsymbol{m}}_2\}=\{{\boldsymbol{\mu}} , \Sigma + {\boldsymbol{\mu}}{\boldsymbol{\mu}}^\top\} $, let $\widehat{v}_t = \{ \widehat{g}_t - 2\widehat{G}_t {\boldsymbol{\mu}}_t   ,\widehat{G}_t\}$, then the optimum of problem~(\ref{EstimatorOP1}) leads to the  closed-form update (\ref{SigmaSigmaEstimator1}) and (\ref{MuMuEstimator1}):
\begin{align}
\label{EstimatorOP1}
     {\boldsymbol{m}}^{t+1} = \argmin_ { {\boldsymbol{m}} \in  {\boldsymbol{\mathcal{M}} } } \beta_t \left<{\boldsymbol{m}} ,   \widehat{v}_t  \right> +   \text{KL}\left( p_{ {\boldsymbol{m}}} \| p_{ {\boldsymbol{m}}^t}  \right)
\end{align}
\begin{align}
\label{SigmaSigmaEstimator1}
    &  \Sigma^{-1}_{t+1} =  \Sigma^{-1}_{t} +  2\beta _t  \widehat{G}_t   \\ 
  & {\boldsymbol{\mu}}_{t+1} = {\boldsymbol{\mu}}_{t} - \beta_t \Sigma_{t+1}\widehat{g}_t \label{MuMuEstimator1}
\end{align}
\end{theorem*}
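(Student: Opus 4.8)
The plan is to recognize that problem~(\ref{EstimatorOP1}) has exactly the same structure as the trust-region problem~(\ref{Mop}) that produced the implicit natural-gradient step~(\ref{updateEta}), with the true mean-parameter gradient $\nabla_{\boldsymbol{m}}\tilde{J}(\boldsymbol{m}_t)$ replaced by the surrogate vector $\widehat{v}_t$. First I would invoke the identity (stated just before Eq.~(\ref{updateEta})) that, for an exponential family, $\text{KL}(p_{\boldsymbol{m}}\|p_{\boldsymbol{m}^t})$ equals the Bregman divergence generated by the convex conjugate $A^*$, so the objective in~(\ref{EstimatorOP1}) is convex in $\boldsymbol{m}$. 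Differentiating with respect to $\boldsymbol{m}$, setting the result to zero, and using $\nabla_{\boldsymbol{m}}A^*(\boldsymbol{m})=\boldsymbol{\eta}$ then yields the stationarity condition $\boldsymbol{\eta}_{t+1}=\boldsymbol{\eta}_t-\beta_t\widehat{v}_t$, under the assumption that the unconstrained optimum lies in $\boldsymbol{\mathcal{M}}$, exactly as in the derivation of~(\ref{updateEta}).

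Next I would split this vector identity into its two natural-parameter blocks using $\boldsymbol{\eta}_1=\Sigma^{-1}\boldsymbol{\mu}$, $\boldsymbol{\eta}_2=-\tfrac{1}{2}\Sigma^{-1}$, together with the stated decomposition $\widehat{v}_t=\{\widehat{g}_t-2\widehat{G}_t\boldsymbol{\mu}_t,\,\widehat{G}_t\}$. The $\boldsymbol{\eta}_2$ block reads $-\tfrac{1}{2}\Sigma_{t+1}^{-1}=-\tfrac{1}{2}\Sigma_t^{-1}-\beta_t\widehat{G}_t$, which rearranges immediately to the covariance update~(\ref{SigmaSigmaEstimator1}), namely $\Sigma_{t+1}^{-1}=\Sigma_t^{-1}+2\beta_t\widehat{G}_t$.

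For the mean, the $\boldsymbol{\eta}_1$ block gives $\Sigma_{t+1}^{-1}\boldsymbol{\mu}_{t+1}=\Sigma_t^{-1}\boldsymbol{\mu}_t-\beta_t(\widehat{g}_t-2\widehat{G}_t\boldsymbol{\mu}_t)$. The key algebraic step is to substitute $\Sigma_t^{-1}=\Sigma_{t+1}^{-1}-2\beta_t\widehat{G}_t$ from the covariance update just obtained; the two $2\beta_t\widehat{G}_t\boldsymbol{\mu}_t$ terms then cancel, leaving $\Sigma_{t+1}^{-1}\boldsymbol{\mu}_{t+1}=\Sigma_{t+1}^{-1}\boldsymbol{\mu}_t-\beta_t\widehat{g}_t$, and multiplying through by $\Sigma_{t+1}$ delivers the mean update~(\ref{MuMuEstimator1}).

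I expect the only genuine obstacle to be bookkeeping rather than analysis: one must track carefully which covariance ($\Sigma_t$ versus $\Sigma_{t+1}$) multiplies $\widehat{g}_t$, and recognize that the peculiar correction $-2\widehat{G}_t\boldsymbol{\mu}_t$ inside $\widehat{v}_t$ is precisely the stochastic analogue of the chain-rule relation~(\ref{g_m1}) linking $\nabla_{\boldsymbol{m}_1}\tilde{J}$ to $\nabla_{\boldsymbol{\mu}}\tilde{J}$ and $\nabla_{\Sigma}\tilde{J}$. This correction is exactly what makes the cancellation go through and produces the ``look-ahead'' $\Sigma_{t+1}$ in the mean step, consistent with the direct derivation in Eq.~(\ref{muupdatem}).
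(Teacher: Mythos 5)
Your proposal is correct and follows essentially the same route as the paper's own proof: differentiate the convex objective, use $\nabla_{\boldsymbol{m}}\mathrm{KL}(p_{\boldsymbol{m}}\|p_{\boldsymbol{m}^t})=\boldsymbol{\eta}-\boldsymbol{\eta}^t$ together with $\boldsymbol{\eta}_1=\Sigma^{-1}\boldsymbol{\mu}$ and $\boldsymbol{\eta}_2=-\tfrac{1}{2}\Sigma^{-1}$, read off the covariance update from the $\boldsymbol{\eta}_2$ block, and let the $-2\widehat{G}_t\boldsymbol{\mu}_t$ correction cancel in the $\boldsymbol{\eta}_1$ block to produce $\boldsymbol{\mu}_{t+1}=\boldsymbol{\mu}_t-\beta_t\Sigma_{t+1}\widehat{g}_t$. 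The only cosmetic difference is that you substitute $\Sigma_t^{-1}=\Sigma_{t+1}^{-1}-2\beta_t\widehat{G}_t$ before multiplying through by $\Sigma_{t+1}$, whereas the paper multiplies first and then merges terms---an equivalent manipulation.
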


\begin{proof}
For Guassian distribution with mean parameter ${\boldsymbol{m}}:=\{{\boldsymbol{m}}_1,{\boldsymbol{m}}_2\}=\{{\boldsymbol{\mu}} , \Sigma + {\boldsymbol{\mu}}{\boldsymbol{\mu}}^\top\} $, also note that $\widehat{v}_t :=\{\widehat{g}_t-2\widehat{G}_t\boldsymbol{\mu}_t, \widehat{G}_t \}$, the problem~(\ref{EstimatorOP}) can be rewrited as 
\begin{align}
  \beta_t \left<{\boldsymbol{m}} ,   \widehat{v}_t  \right> +   \text{KL}\left( p_{ {\boldsymbol{m}}} \| p_{ {\boldsymbol{m}}^t}  \right) = \beta_t \left<{\boldsymbol{m}}_1 ,  \widehat{g}_t-2\widehat{G}_t\boldsymbol{\mu}_t  \right> + \beta_t \left< {\boldsymbol{m}}_2, \widehat{G}_t \right> + \text{KL}\left( p_{ {\boldsymbol{m}}} \| p_{ {\boldsymbol{m}}^t}  \right) 
\end{align}
Taking derivative and set to zero, also note that $\nabla _{\boldsymbol{m}}\text{KL}\left( p_{ {\boldsymbol{m}}} \| p_{ {\boldsymbol{m}}^t}  \right) ) = {\boldsymbol{\eta}} - \boldsymbol{\eta}^t$, $\boldsymbol{\eta}_1 := \Sigma^{-1}{\boldsymbol{\mu}}$ and $\boldsymbol{\eta}_2 := -\frac{1}{2}\Sigma^{-1}$,  we can obtain that

\begin{align}
    &  -\frac{1}{2}\Sigma^{-1}_{t+1} =  -\frac{1}{2}\Sigma^{-1}_{t} - \beta _t \widehat{G}_t    \\
 & \Sigma^{-1}_{t+1} {\boldsymbol{\mu}}_{t+1} = \Sigma^{-1}_{t} {\boldsymbol{\mu}}_{t} -\beta_t \left(\widehat{g}_t-2\widehat{G}_t\boldsymbol{\mu}_t  \right)
\end{align}
Rearrange terms, we can obtain that
\begin{align}
    &  \Sigma^{-1}_{t+1} =  \Sigma^{-1}_{t} + 2 \beta _t \widehat{G}_t   \\
 &   {\boldsymbol{\mu}}_{t+1} = \Sigma^{}_{t+1}\Sigma^{-1}_{t} {\boldsymbol{\mu}}_{t} -\beta_t\Sigma^{}_{t+1} \left(\widehat{g}_t-2\widehat{G}_t\boldsymbol{\mu}_t  \right) \label{Muinner}
\end{align}
Merge terms in Eq.(\ref{Muinner}), we get that
\begin{align}
     {\boldsymbol{\mu}}_{t+1} & = \Sigma^{}_{t+1}\Sigma^{-1}_{t} {\boldsymbol{\mu}}_{t} -\beta_t\Sigma^{}_{t+1} \left(\widehat{g}_t-2\widehat{G}_t\boldsymbol{\mu}_t  \right) \\ & = \Sigma^{}_{t+1} \left(\Sigma^{-1}_{t} + 2 \beta_t \widehat{G}_t  \right)\boldsymbol{\mu}_t - \beta_t\Sigma^{}_{t+1}\widehat{g}_t \\ & =\Sigma^{}_{t+1}\Sigma^{-1}_{t+1}\boldsymbol{\mu}_t - \beta_t\Sigma^{}_{t+1}\widehat{g}_t \\
     & = \boldsymbol{\mu}_t - \beta_t\Sigma^{}_{t+1}\widehat{g}_t
\end{align}

\end{proof}

\section{Proof of Theorem 4}

\begin{Lemma}\label{Lemma1} For Gaussian distribution with parameter
${\boldsymbol{\theta} } := \{{\boldsymbol{\mu}} , \Sigma\} \in {\boldsymbol{\Theta} } $. Let $F_t({\boldsymbol{m}})= \beta_t \left<{\boldsymbol{m}} ,   \widehat{v}_t  \right>   $  for all $t \ge 1$, where ${\boldsymbol{m}}:=\{{\boldsymbol{m}}_1,{\boldsymbol{m}}_2\}=\{{\boldsymbol{\mu}} , \Sigma + {\boldsymbol{\mu}}{\boldsymbol{\mu}}^\top\} \in  {\boldsymbol{\mathcal{M}} } $,  $\boldsymbol{\mathcal{M}}$ denotes a convex set.
Let ${\boldsymbol{m}}^{t+1}$ as the solution of 
\begin{align}
\label{opKL}
   {\boldsymbol{m}}^{t+1} = \argmin_ { {\boldsymbol{m}} \in  {\boldsymbol{\mathcal{M}} } }  F_t({\boldsymbol{m}}) +   \text{KL}\left( p_{ {\boldsymbol{m}}} \| p_{ {\boldsymbol{m}}^t}  \right)
\end{align}
Then, for $\forall { {\boldsymbol{m}}} \in  {\boldsymbol{\mathcal{M}} }$, we have 
\begin{align}
    F({\boldsymbol{m}}) + \text{KL}\left( p_{ {\boldsymbol{m}}} \| p_{ {\boldsymbol{m}}^t}  \right) \ge  F({\boldsymbol{m}}^{t+1}) + \text{KL}\left( p_{ {\boldsymbol{m}}^{t+1}} \| p_{ {\boldsymbol{m}}^t}  \right) + \text{KL}\left( p_{ {\boldsymbol{m}}} \| p_{ {\boldsymbol{m}}^{t+1}}  \right)
\end{align}

\end{Lemma}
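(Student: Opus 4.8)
The plan is to recognize this statement as the classical three-point (or ``generalized Pythagorean'') property of mirror descent, specialized to a proximal step taken in the mean-parameter space with the KL divergence between exponential-family members playing the role of the proximity term. The single structural ingredient I need is the identification, already recorded earlier in the excerpt, of the KL divergence with the Bregman divergence generated by the convex conjugate $A^*$ of the log-partition function, together with the duality relation $\nabla_{m} A^*(m) = \eta$.

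First I would write, for any pair $m, m'$ with dual natural parameter $\eta' := \nabla_m A^*(m')$,
\[
\text{KL}(p_{m} \| p_{m'}) = A^*(m) - A^*(m') - \langle m - m', \eta' \rangle ,
\]
and expand the three KL terms in the claim with $m' \in \{m^t, m^{t+1}\}$. The key algebraic observation is that all $A^*$ contributions cancel identically, leaving
\[
\text{KL}(p_{m}\|p_{m^t}) - \text{KL}(p_{m^{t+1}}\|p_{m^t}) - \text{KL}(p_{m}\|p_{m^{t+1}}) = \langle m - m^{t+1}, \eta^{t+1} - \eta^t \rangle .
\]
Since $F_t$ is linear in $m$, the remaining contribution is simply $F_t(m) - F_t(m^{t+1}) = \beta_t \langle m - m^{t+1}, \widehat{v}_t \rangle$. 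Adding the two pieces collapses the entire left-minus-right difference of the asserted inequality into a single inner product,
\[
\langle m - m^{t+1},\ \beta_t \widehat{v}_t + \eta^{t+1} - \eta^t \rangle .
\]

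To conclude I would invoke the first-order optimality of $m^{t+1}$. Setting $\Phi(m) := F_t(m) + \text{KL}(p_{m}\|p_{m^t})$, its gradient is $\nabla \Phi(m) = \beta_t \widehat{v}_t + \eta - \eta^t$ (using $\nabla_m A^* = \eta$ once more), so at the minimizer $\nabla \Phi(m^{t+1}) = \beta_t \widehat{v}_t + \eta^{t+1} - \eta^t$, which is exactly the vector isolated above. Because $m^{t+1}$ minimizes the convex function $\Phi$ over the convex set $\mathcal{M}$, the variational inequality $\langle \nabla \Phi(m^{t+1}), m - m^{t+1} \rangle \ge 0$ holds for every $m \in \mathcal{M}$, and this is precisely the claimed inequality.

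The computation itself is routine; the step that demands genuine care is the optimality condition. Since $\mathcal{M}$ is a constrained convex set rather than the whole space, I must use the variational-inequality form $\langle \nabla \Phi(m^{t+1}), m - m^{t+1}\rangle \ge 0$ rather than the unconstrained stationarity $\nabla \Phi = 0$ that was used earlier to derive the closed-form updates; this is exactly what upgrades the \emph{exact} three-point identity into the stated \emph{inequality}. The only other thing to watch is sign discipline in the Bregman expansion, so that the $A^*$ terms truly cancel and the surviving inner products combine with the correct orientation.
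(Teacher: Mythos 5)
Your proposal is correct and follows essentially the same route as the paper's proof: both rest on the Bregman-divergence representation of the KL with base function $A^*$, the first-order variational inequality $\left<\nabla\Phi({\boldsymbol{m}}^{t+1}), {\boldsymbol{m}} - {\boldsymbol{m}}^{t+1}\right> \ge 0$ over the convex set $\boldsymbol{\mathcal{M}}$, and the three-point identity expressing $-\left<\nabla A^*({\boldsymbol{m}}^{t+1}) - \nabla A^*({\boldsymbol{m}}^t), {\boldsymbol{m}} - {\boldsymbol{m}}^{t+1}\right>$ as the combination of the three KL terms. The only difference is presentational — you collapse the left-minus-right difference into a single inner product before invoking optimality, whereas the paper bounds $F({\boldsymbol{m}})$ from below first — and your explicit remark that the constrained setting forces the variational-inequality form rather than stationarity is exactly the care the paper's argument implicitly takes.
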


\begin{proof}
Since KL-divergence of Gaussian is a Bregman divergence associated with base function $A^*({\boldsymbol{m}})$   w.r.t mean parameter ${\boldsymbol{m}}$, we know problem in Eq.(\ref{opKL}) is convex.  Since ${\boldsymbol{m}}^{t+1}$ is the optimum of the convex optimization problem in Eq.(\ref{opKL}), we have that 
\begin{align}
  &  \left<  \beta_t  \widehat{v}_t + \nabla _ {\boldsymbol{m}=\boldsymbol{m}^{t+1}}\text{KL}\left( p_{ {\boldsymbol{m}}} \| p_{ {\boldsymbol{m}}^t}  \right)  , {\boldsymbol{m}}  - {\boldsymbol{m}}^{t+1}\right>  \ge 0 , \forall{{\boldsymbol{m}}} \in  {\boldsymbol{\mathcal{M}} } 
\end{align}
Note that $\nabla _ {\boldsymbol{m}=\boldsymbol{m}^{t+1}}\text{KL}\left( p_{ {\boldsymbol{m}}} \| p_{ {\boldsymbol{m}}^t}  \right) = \nabla A^*({\boldsymbol{m}}^{t+1}) - \nabla A^*({\boldsymbol{m}}^t) $. For $\forall{{\boldsymbol{m}}} \in  {\boldsymbol{\mathcal{M}} } $  we have that
\begin{align}
     F({\boldsymbol{m}}) & = \beta_t\left<    \widehat{v}_t ,{\boldsymbol{m}}^{t+1} \right>  + \left< \beta_t \widehat{v}_t , {\boldsymbol{m}}  -{\boldsymbol{m}}^{t+1}   \right> \\
     & \ge \beta_t\left<    \widehat{v}_t ,{\boldsymbol{m}}^{t+1} \right> -  \left<  \nabla A^*({\boldsymbol{m}}^{t+1}) - \nabla A^*({\boldsymbol{m}}^t), {\boldsymbol{m}}  -{\boldsymbol{m}}^{t+1}  \right> \label{Fbreg}
\end{align}
Rewritten the term $-  \left<  \nabla A^*({\boldsymbol{m}}^{t+1}) - \nabla A^*({\boldsymbol{m}}^t), {\boldsymbol{m}}  -{\boldsymbol{m}}^{t+1}  \right> $, we have that 

\begin{small}
\begin{align}
    -  \left<  \nabla A^*({\boldsymbol{m}}^{t+1}) - \nabla A^*({\boldsymbol{m}}^t), {\boldsymbol{m}}  -{\boldsymbol{m}}^{t+1}  \right>   & = A^*({\boldsymbol{m}}^{t+1}) - A^*({\boldsymbol{m}}^{t}) - \left<  \nabla A^*({\boldsymbol{m}}^{t}) , {\boldsymbol{m}}^{t+1}  -{\boldsymbol{m}}^{t}  \right> \\
    & \;\;\; - A^*({\boldsymbol{m}}) + A^*({\boldsymbol{m}}^{t}) +  \left<  \nabla A^*({\boldsymbol{m}}^t) , {\boldsymbol{m}}  -{\boldsymbol{m}}^{t}  \right> \\ 
     & \;\;\; + A^*({\boldsymbol{m}}) - A^*({\boldsymbol{m}}^{t+1}) -  \left<  \nabla A^*({\boldsymbol{m}}^{t+1}) , {\boldsymbol{m}}  -{\boldsymbol{m}}^{t+1}  \right> \\
     & = \text{KL}\left( p_{ {\boldsymbol{m}}^{t+1}} \| p_{ {\boldsymbol{m}}^t}  \right)  - \text{KL}\left( p_{ {\boldsymbol{m}}} \| p_{ {\boldsymbol{m}}^t}  \right) + \text{KL}\left( p_{ {\boldsymbol{m}}} \| p_{ {\boldsymbol{m}}^{t+1}}  \right)  \label{KLthree}
\end{align}
\end{small}

Plug Eq.(\ref{KLthree}) into (\ref{Fbreg}), we obtain that
\begin{align}
    F({\boldsymbol{m}}) + \text{KL}\left( p_{ {\boldsymbol{m}}} \| p_{ {\boldsymbol{m}}^t}  \right) \ge  F({\boldsymbol{m}}^{t+1}) + \text{KL}\left( p_{ {\boldsymbol{m}}^{t+1}} \| p_{ {\boldsymbol{m}}^t}  \right) + \text{KL}\left( p_{ {\boldsymbol{m}}} \| p_{ {\boldsymbol{m}}^{t+1}}  \right)
\end{align}

\end{proof}

\begin{Lemma}\label{lemma2}
Let $\widehat{v}_t = \{ \widehat{g}_t - 2\widehat{G}_t {\boldsymbol{\mu}}_t   ,\widehat{G}_t\}$, updating parameter as (\ref{opKL}), then we have 
\begin{equation}
    \resizebox{0.95\hsize}{!}{ $ \frac{1}{2}  \| {\boldsymbol{\mu}}^* - {\boldsymbol{\mu}}_{t+1}    \|^2_{\Sigma_{t+1}^{-1}} \le \frac{1}{2}\| {\boldsymbol{\mu}}^* - {\boldsymbol{\mu}}_{t}    \|^2_{\Sigma_{t}^{-1}} + \beta_t  \left< \widehat{g}_t, {\boldsymbol{\mu}}^* - {\boldsymbol{\mu}}_{t+1} \right> -  \frac{1}{2}\| {\boldsymbol{\mu}}_{t+1} - {\boldsymbol{\mu}}_{t}    \|^2_{\Sigma_{t+1}^{-1}} + \beta_t   \| {\boldsymbol{\mu}}^* - {\boldsymbol{\mu}}_{t}    \|^2_{\widehat{G}_t} $ }
\end{equation}

\end{Lemma}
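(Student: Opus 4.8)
The plan is to specialize the three-point inequality of Lemma~\ref{Lemma1} to a cleverly chosen comparison point and then evaluate every term explicitly with the Gaussian KL formula. Concretely, I would invoke Lemma~\ref{Lemma1} with the comparison distribution $p_{\boldsymbol{m}}=\mathcal{N}(\boldsymbol{\mu}^*,\Sigma_{t+1})$, i.e.\ the mean parameter $\boldsymbol{m}=\{\boldsymbol{\mu}^*,\Sigma_{t+1}+\boldsymbol{\mu}^*\boldsymbol{\mu}^{*\top}\}$, chosen so that it carries the \emph{same} covariance $\Sigma_{t+1}$ as the iterate $p_{\boldsymbol{m}^{t+1}}$. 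Rearranging the conclusion of Lemma~\ref{Lemma1} then gives
\begin{equation}
\text{KL}(p_{\boldsymbol{m}}\|p_{\boldsymbol{m}^{t+1}}) \le F_t(\boldsymbol{m}) - F_t(\boldsymbol{m}^{t+1}) + \text{KL}(p_{\boldsymbol{m}}\|p_{\boldsymbol{m}^t}) - \text{KL}(p_{\boldsymbol{m}^{t+1}}\|p_{\boldsymbol{m}^t}). \nonumber
\end{equation}

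The first key step is to read off the three KL divergences from the closed-form Gaussian expression already used in Eq.~(\ref{Theom4conv}). Since $p_{\boldsymbol{m}}$ and $p_{\boldsymbol{m}^{t+1}}$ share the covariance $\Sigma_{t+1}$, the trace and log-determinant contributions vanish and $\text{KL}(p_{\boldsymbol{m}}\|p_{\boldsymbol{m}^{t+1}})=\tfrac12\|\boldsymbol{\mu}^*-\boldsymbol{\mu}_{t+1}\|^2_{\Sigma_{t+1}^{-1}}$, which is exactly the left-hand side of the target. More importantly, in the difference $\text{KL}(p_{\boldsymbol{m}}\|p_{\boldsymbol{m}^t})-\text{KL}(p_{\boldsymbol{m}^{t+1}}\|p_{\boldsymbol{m}^t})$ both first-slot distributions have covariance $\Sigma_{t+1}$, so the $\text{tr}(\Sigma_t^{-1}\Sigma_{t+1})$, $-d$, and $\log(|\Sigma_t|/|\Sigma_{t+1}|)$ terms cancel identically, leaving only $\tfrac12\|\boldsymbol{\mu}^*-\boldsymbol{\mu}_t\|^2_{\Sigma_t^{-1}}-\tfrac12\|\boldsymbol{\mu}_{t+1}-\boldsymbol{\mu}_t\|^2_{\Sigma_t^{-1}}$. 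This cancellation is precisely the reason for matching the comparison covariance to $\Sigma_{t+1}$, and it is the crux that reduces the matrix-valued inequality to scalar Mahalanobis distances.

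Next I would expand $F_t(\boldsymbol{m})-F_t(\boldsymbol{m}^{t+1})$ using $\widehat{v}_t=\{\widehat{g}_t-2\widehat{G}_t\boldsymbol{\mu}_t,\widehat{G}_t\}$ together with $\boldsymbol{m}=\{\boldsymbol{\mu},\Sigma+\boldsymbol{\mu}\boldsymbol{\mu}^\top\}$. The term linear in $\Sigma_{t+1}$, namely $\langle\Sigma_{t+1},\widehat{G}_t\rangle$, is common to both and cancels, while the quadratic pieces combine via the completed-square identity $\|\boldsymbol{\mu}^*\|^2_{\widehat{G}_t}-\|\boldsymbol{\mu}_{t+1}\|^2_{\widehat{G}_t}-2\langle\boldsymbol{\mu}^*-\boldsymbol{\mu}_{t+1},\widehat{G}_t\boldsymbol{\mu}_t\rangle=\|\boldsymbol{\mu}^*-\boldsymbol{\mu}_t\|^2_{\widehat{G}_t}-\|\boldsymbol{\mu}_{t+1}-\boldsymbol{\mu}_t\|^2_{\widehat{G}_t}$, yielding $F_t(\boldsymbol{m})-F_t(\boldsymbol{m}^{t+1})=\beta_t\langle\widehat{g}_t,\boldsymbol{\mu}^*-\boldsymbol{\mu}_{t+1}\rangle+\beta_t\|\boldsymbol{\mu}^*-\boldsymbol{\mu}_t\|^2_{\widehat{G}_t}-\beta_t\|\boldsymbol{\mu}_{t+1}-\boldsymbol{\mu}_t\|^2_{\widehat{G}_t}$.

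Finally I would assemble the pieces and use the covariance update $\Sigma_{t+1}^{-1}=\Sigma_t^{-1}+2\beta_t\widehat{G}_t$ from Eq.~(\ref{SigmaSigmaEstimator1}), equivalently $\tfrac12\Sigma_t^{-1}+\beta_t\widehat{G}_t=\tfrac12\Sigma_{t+1}^{-1}$, to merge the two negative quadratics $-\tfrac12\|\boldsymbol{\mu}_{t+1}-\boldsymbol{\mu}_t\|^2_{\Sigma_t^{-1}}$ and $-\beta_t\|\boldsymbol{\mu}_{t+1}-\boldsymbol{\mu}_t\|^2_{\widehat{G}_t}$ into the single term $-\tfrac12\|\boldsymbol{\mu}_{t+1}-\boldsymbol{\mu}_t\|^2_{\Sigma_{t+1}^{-1}}$ appearing in the statement. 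The main obstacle is bookkeeping rather than conceptual: one must track the covariance contributions across all three Gaussian KL divergences so that the intended cancellations occur, and must spot the two complete-the-square manipulations (one inside the $\widehat{G}_t$ quadratic form, one merging the step-size-weighted norms through the $\Sigma$ update). Since every manipulation is an exact equality apart from the single inequality inherited from Lemma~\ref{Lemma1}, no further estimates are required and the claimed bound follows.
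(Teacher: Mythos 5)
Your proof is correct and takes essentially the same route as the paper: both apply the three-point inequality of Lemma~1 at a comparison point with mean $\boldsymbol{\mu}^*$, evaluate the Gaussian KL divergences in closed form, use the identical complete-the-square identity $\|\boldsymbol{\mu}^*-\boldsymbol{\mu}_t\|^2_{\widehat{G}_t}-\|\boldsymbol{\mu}_{t+1}-\boldsymbol{\mu}_t\|^2_{\widehat{G}_t}$, and merge the residual quadratics via $\Sigma_{t+1}^{-1}=\Sigma_t^{-1}+2\beta_t\widehat{G}_t$. The one (valid and slightly cleaner) deviation is your choice of comparison covariance $\Sigma_{t+1}$, which makes the trace and log-determinant terms cancel identically at the outset, whereas the paper keeps an unspecified $\Sigma^*$ in the comparison point and eliminates it afterwards through the explicit computation $\text{tr}\left((\Sigma^*-\Sigma_{t+1})(\Sigma_t^{-1}-\Sigma_{t+1}^{-1})\right)=-2\beta_t\,\text{tr}\left((\Sigma^*-\Sigma_{t+1})\widehat{G}_t\right)$ combined with the trace part of the linear term.
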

\begin{proof}
First, recall that the KL-divergence is defined as 
\begin{align}
   \text{KL}\left( p_{ {\boldsymbol{m}}} \| p_{ {\boldsymbol{m}}^t}  \right) =  \frac{1}{2} \left\{  \| {\boldsymbol{\mu}} - {\boldsymbol{\mu}}_{t}    \|^2_{\Sigma_{t}^{-1}} + \textbf{tr}\left( \Sigma\Sigma_{t}^{-1} \right) + \log \frac{| \Sigma_t  |}{| \Sigma |} -d  \right\}
\end{align}

From Lemma~\ref{Lemma1}, we know that 
\begin{align}
 \text{KL}\left( p_{ {\boldsymbol{m}}^*} \| p_{ {\boldsymbol{m}}^{t+1}}  \right) \le      \text{KL}\left( p_{ {\boldsymbol{m}}^*} \| p_{ {\boldsymbol{m}}^t}  \right) - \text{KL}\left( p_{ {\boldsymbol{m}}^{t+1}} \| p_{ {\boldsymbol{m}}^t}  \right) + F({\boldsymbol{m}}^*)  -  F({\boldsymbol{m}}^{t+1})
\end{align}
It follows that 
\begin{small}
\begin{align}
   \frac{1}{2} \! \big\{  \| {\boldsymbol{\mu}}^* - {\boldsymbol{\mu}}_{t+1}    \|^2_{\Sigma_{t+1}^{-1}} \!+\! \textbf{tr}\left( \Sigma^*\Sigma_{t+1}^{-1} \right) \!+\! \log \frac{| \Sigma_{t+1}  |}{| \Sigma^* |}  \big\} \!  & \!\!\le \frac{1}{2} \left\{  \| {\boldsymbol{\mu}}^* - {\boldsymbol{\mu}}_{t}    \|^2_{\Sigma_{t}^{-1}} + \textbf{tr}\left( \Sigma^*\Sigma_{t}^{-1} \right) + \log \frac{| \Sigma_{t}  |}{| \Sigma^* |}  \right\}  \\ \nonumber
   & - \!\frac{1}{2} \!\! \big\{  \| {\boldsymbol{\mu}}_{t+1} \!-\! {\boldsymbol{\mu}}_{t}    \|^2_{\Sigma_{t}^{-1}} \!+\! \textbf{tr}\left( \Sigma_{t+1}\Sigma_{t}^{-1} \right) \!+\! \log \frac{| \Sigma_{t}  |}{| \Sigma_{t+1} |}  \!-\!d \big\} \\ \nonumber
   & + \beta_t\left<  \widehat{v}_t , {\boldsymbol{m}}^*  -{\boldsymbol{m}}^{t+1}   \right>
\end{align}
\end{small}

Then, we obtain that 
\begin{align}
\label{DistanceT}
   \frac{1}{2} \left\{  \| {\boldsymbol{\mu}}^* - {\boldsymbol{\mu}}_{t+1}    \|^2_{\Sigma_{t+1}^{-1}} + \textbf{tr}\left( \Sigma^*\Sigma_{t+1}^{-1} \right)   \right\}  & \le \frac{1}{2} \left\{  \| {\boldsymbol{\mu}}^* - {\boldsymbol{\mu}}_{t}    \|^2_{\Sigma_{t}^{-1}} + \textbf{tr}\left( \Sigma^*\Sigma_{t}^{-1} \right)   \right\}  \\ \nonumber
   & - \frac{1}{2} \left\{  \| {\boldsymbol{\mu}}_{t+1} - {\boldsymbol{\mu}}_{t}    \|^2_{\Sigma_{t}^{-1}} + \textbf{tr}\left( \Sigma_{t+1}\Sigma_{t}^{-1} \right)  -d \right\} \\ \nonumber
   & + \beta_t\left<  \widehat{v}_t , {\boldsymbol{m}}^*  -{\boldsymbol{m}}^{t+1}   \right> 
\end{align}
In addition, we have that
\begin{small}
\begin{align}
  \textbf{tr}\left( \Sigma^*\Sigma_{t}^{-1} \right)  \!-\!  \textbf{tr}\left( \Sigma^*\Sigma_{t+1}^{-1} \right) \!-\! \textbf{tr}\left( \Sigma_{t+1}\Sigma_{t}^{-1} \right)  \!+\! d & =  \textbf{tr}\left( \Sigma^*(\Sigma_{t}^{-1} - \Sigma_{t+1}^{-1}  ) \right) - \textbf{tr}\left( \Sigma_{t+1}\Sigma_{t}^{-1} - I \right) \\
  & = \textbf{tr}\left( \Sigma^*(\Sigma_{t}^{-1} \!-\! \Sigma_{t+1}^{-1}  ) \right) \!-\! \textbf{tr}\left( \Sigma_{t+1}(\Sigma_{t}^{-1} \!-\! \Sigma_{t+1}^{-1}) \right)  \\
  & = \textbf{tr}\left( (\Sigma^* - \Sigma_{t+1}  )(\Sigma_{t}^{-1} - \Sigma_{t+1}^{-1}  ) \right) \\
  & = \textbf{tr}\big( ({\boldsymbol{m}}^*_2 \!-\! {\boldsymbol{\mu}}^*{{\boldsymbol{\mu}}^*}^\top \!\!-\! {\boldsymbol{m}}^{t+1}_2 \!+\! {\boldsymbol{\mu}}_{t+1}{\boldsymbol{\mu}}_{t+1}^\top )(\Sigma_{t}^{-1} \!-\! \Sigma_{t+1}^{-1}  ) \big)
\end{align}
\end{small}
Note that $\Sigma_{t}^{-1} - \Sigma_{t+1}^{-1} = -2\beta_t \widehat{G}_t$ by updating rule, it follows that 
\begin{align}
\label{trTerm}
   \textbf{tr}\left( \Sigma^*\Sigma_{t}^{-1} \right)  \!-\!  \textbf{tr}\left( \Sigma^*\Sigma_{t+1}^{-1} \right) \!-\! \textbf{tr}\left( \Sigma_{t+1}\Sigma_{t}^{-1} \right) \! +\! d =   \!-\!2\beta_t\textbf{tr}\big( ({\boldsymbol{m}}^*_2 \!-\! {\boldsymbol{\mu}}^*{{\boldsymbol{\mu}}^*}^\top \!-\! {\boldsymbol{m}}^{t+1}_2 \!+\! {\boldsymbol{\mu}}_{t+1}{\boldsymbol{\mu}}_{t+1}^\top )\widehat{G}_t \big)
\end{align}
Then, recall that 
\begin{align}
\label{linear}
    \left<  \widehat{v}_t , {\boldsymbol{m}}^*  -{\boldsymbol{m}}^{t+1}   \right> = \left<  \widehat{g}_t -2\widehat{G}_t {\boldsymbol{\mu}}_t, {\boldsymbol{\mu}}^*  -{\boldsymbol{\mu}}_{t+1}   \right> +  \textbf{tr}\left( ({\boldsymbol{m}}^*_2  - {\boldsymbol{m}}^{t+1}_2  )\widehat{G}_t \right)
\end{align}
Plug (\ref{linear}) and (\ref{trTerm}) into (\ref{DistanceT}), we can get that 
\begin{align}
\label{inner}
    \frac{1}{2}  \| {\boldsymbol{\mu}}^* - {\boldsymbol{\mu}}_{t+1}    \|^2_{\Sigma_{t+1}^{-1}} & \le \frac{1}{2}\| {\boldsymbol{\mu}}^* - {\boldsymbol{\mu}}_{t}    \|^2_{\Sigma_{t}^{-1}}  -  \frac{1}{2}\| {\boldsymbol{\mu}}_{t+1} - {\boldsymbol{\mu}}_{t}    \|^2_{\Sigma_{t}^{-1}} + \beta_t  \left< \widehat{g}_t, {\boldsymbol{\mu}}^* - {\boldsymbol{\mu}}_{t+1} \right> \\ \nonumber 
    & - 2 \beta_t \left< \widehat{G}_t {\boldsymbol{\mu}}_t, {\boldsymbol{\mu}}^*  -{\boldsymbol{\mu}}_{t+1}   \right> + \beta_t \textbf{tr}\left( ( {\boldsymbol{\mu}}^*{{\boldsymbol{\mu}}^*}^\top  - {\boldsymbol{\mu}}_{t+1}{\boldsymbol{\mu}}_{t+1}^\top     )\widehat{G}_t \right)
\end{align}
Note that 
\begin{align}
  &  - 2 \left< \widehat{G}_t {\boldsymbol{\mu}}_t, {\boldsymbol{\mu}}^*  -{\boldsymbol{\mu}}_{t+1}   \right> +  \textbf{tr}\left( ( {\boldsymbol{\mu}}^*{{\boldsymbol{\mu}}^*}^\top  - {\boldsymbol{\mu}}_{t+1}{\boldsymbol{\mu}}_{t+1}^\top     )\widehat{G}_t \right) \\ &  = \big< \widehat{G}_t {\boldsymbol{\mu}}^*, {\boldsymbol{\mu}}^*   \big> \!-\!2\big< \widehat{G}_t {\boldsymbol{\mu}}_t, {\boldsymbol{\mu}}^*    \big> + \big< \widehat{G}_t {\boldsymbol{\mu}}_t, {\boldsymbol{\mu}}_t    \big> 
     - \big< \widehat{G}_t {\boldsymbol{\mu}}_{t+1}, {\boldsymbol{\mu}}_{t+1}   \big>  + 2\big< \widehat{G}_t {\boldsymbol{\mu}}_t, {\boldsymbol{\mu}}_{t+1}    \big>  -  \big< \widehat{G}_t {\boldsymbol{\mu}}_t, {\boldsymbol{\mu}}_t    \big>  \\ 
   & =  \| {\boldsymbol{\mu}}^* - {\boldsymbol{\mu}}_{t}    \|^2_{\widehat{G}_t} -  \| {\boldsymbol{\mu}}_{t+1} - {\boldsymbol{\mu}}_{t}    \|^2_{\widehat{G}_t}
\end{align}
Plug into (\ref{inner}), we can obtain that
\begin{align}
     \frac{1}{2}  \| {\boldsymbol{\mu}}^* - {\boldsymbol{\mu}}_{t+1}    \|^2_{\Sigma_{t+1}^{-1}} & \le \frac{1}{2}\| {\boldsymbol{\mu}}^* - {\boldsymbol{\mu}}_{t}    \|^2_{\Sigma_{t}^{-1}}  -  \frac{1}{2}\| {\boldsymbol{\mu}}_{t+1} - {\boldsymbol{\mu}}_{t}    \|^2_{\Sigma_{t}^{-1}} + \beta_t  \left< \widehat{g}_t, {\boldsymbol{\mu}}^* - {\boldsymbol{\mu}}_{t+1} \right> \\ \nonumber 
    & + \beta_t  \| {\boldsymbol{\mu}}^* - {\boldsymbol{\mu}}_{t}    \|^2_{\widehat{G}_t} -  \beta_t \| {\boldsymbol{\mu}}_{t+1} - {\boldsymbol{\mu}}_{t}    \|^2_{\widehat{G}_t} 
\end{align}
Also note that $  \frac{1}{2}\| {\boldsymbol{\mu}}_{t+1} - {\boldsymbol{\mu}}_{t}    \|^2_{\Sigma_{t}^{-1}}  +  \beta_t \| {\boldsymbol{\mu}}_{t+1} - {\boldsymbol{\mu}}_{t}    \|^2_{\widehat{G}_t} = \frac{1}{2}\| {\boldsymbol{\mu}}_{t+1} - {\boldsymbol{\mu}}_{t}    \|^2_{\Sigma_{t+1}^{-1}}$, we obtain that
\begin{align}
   \frac{1}{2}  \| {\boldsymbol{\mu}}^* - {\boldsymbol{\mu}}_{t+1}    \|^2_{\Sigma_{t+1}^{-1}} & \le \frac{1}{2}\| {\boldsymbol{\mu}}^* \!-\! {\boldsymbol{\mu}}_{t}    \|^2_{\Sigma_{t}^{-1}}  \!-\!  \frac{1}{2}\| {\boldsymbol{\mu}}_{t+1} \!-\! {\boldsymbol{\mu}}_{t}    \|^2_{\Sigma_{t+1}^{-1}} + \beta_t  \left< \widehat{g}_t, {\boldsymbol{\mu}}^* \!-\! {\boldsymbol{\mu}}_{t+1} \right> 
     \!+\! \beta_t  \| {\boldsymbol{\mu}}^* - {\boldsymbol{\mu}}_{t}    \|^2_{\widehat{G}_t}   
\end{align}

\end{proof}

\begin{Lemma}
Given a convex function $f(x)$,  for Gaussian distribution with parameters  ${\boldsymbol{\theta} } := \{{\boldsymbol{\mu}}, \Sigma^{\frac{1}{2}}\} $,   let $\bar{J}({\boldsymbol{\theta}}):= \mathbb{E}_{p({\boldsymbol{x}};{\boldsymbol{\theta}})}[f({\boldsymbol{x}})]$. Then $\bar{J}({\boldsymbol{\theta}})$ is a convex function with respect to ${\boldsymbol{\theta}}$.
\end{Lemma}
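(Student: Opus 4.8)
The plan is to exploit the standard reparameterization of the Gaussian so that the parameter $\boldsymbol{\theta}$ enters the integrand \emph{affinely}, and then to push convexity through the expectation. First I would write $\boldsymbol{x} = \boldsymbol{\mu} + \Sigma^{\frac{1}{2}}\boldsymbol{z}$ with $\boldsymbol{z}\sim\mathcal{N}(\boldsymbol{0},\boldsymbol{I})$, which turns the parameter-dependent expectation into an expectation over a fixed distribution:
\begin{align}
\bar{J}(\boldsymbol{\theta}) = \mathbb{E}_{\boldsymbol{z}\sim\mathcal{N}(\boldsymbol{0},\boldsymbol{I})}\left[ f\!\left(\boldsymbol{\mu} + \Sigma^{\frac{1}{2}}\boldsymbol{z}\right)\right].
\end{align}
The crucial observation is that for each fixed sample $\boldsymbol{z}$, the map $\boldsymbol{\theta}=\{\boldsymbol{\mu},\Sigma^{\frac{1}{2}}\}\mapsto \boldsymbol{\mu}+\Sigma^{\frac{1}{2}}\boldsymbol{z}$ is affine in $\boldsymbol{\theta}$. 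This is precisely why the parameterization is taken in terms of $\Sigma^{\frac{1}{2}}$ rather than $\Sigma$: the square-root factor appears linearly, whereas $\boldsymbol{x}$ depends on $\Sigma$ only through its square root.

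Next I would verify that the domain $\boldsymbol{\Theta}=\{\boldsymbol{\mu},\Sigma^{\frac{1}{2}}\mid \boldsymbol{\mu}\in\mathbb{R}^d,\ \Sigma\in\mathcal{S}^{+}\}$ is convex, which holds because $\mathbb{R}^d$ is convex and the set of symmetric positive semi-definite matrices is a convex cone, so a convex combination of two admissible $\Sigma^{\frac{1}{2}}$ is again symmetric positive semi-definite. Fixing $\boldsymbol{\theta}_0=\{\boldsymbol{\mu}_0,\Sigma_0^{\frac{1}{2}}\}$, $\boldsymbol{\theta}_1=\{\boldsymbol{\mu}_1,\Sigma_1^{\frac{1}{2}}\}$ and $\lambda\in[0,1]$, the affineness yields
\begin{align}
\left((1-\lambda)\boldsymbol{\mu}_0 + \lambda\boldsymbol{\mu}_1\right) + \left((1-\lambda)\Sigma_0^{\frac{1}{2}} + \lambda\Sigma_1^{\frac{1}{2}}\right)\boldsymbol{z} = (1-\lambda)\left(\boldsymbol{\mu}_0 + \Sigma_0^{\frac{1}{2}}\boldsymbol{z}\right) + \lambda\left(\boldsymbol{\mu}_1 + \Sigma_1^{\frac{1}{2}}\boldsymbol{z}\right).
\end{align}
Applying convexity of $f$ pointwise in $\boldsymbol{z}$ to this convex combination of arguments, and then integrating against the nonnegative standard Gaussian density while using linearity and monotonicity of the expectation, gives $\bar{J}\big((1-\lambda)\boldsymbol{\theta}_0+\lambda\boldsymbol{\theta}_1\big)\le (1-\lambda)\bar{J}(\boldsymbol{\theta}_0)+\lambda\bar{J}(\boldsymbol{\theta}_1)$, which is exactly the claimed convexity.

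The only genuine subtlety, and the step I would be most careful about, is integrability: to move the pointwise inequality under the expectation I need $\bar{J}$ to be well defined and finite at the relevant parameters, which is guaranteed by the standing assumption that $f$ is integrable under the Gaussian so that $\mathbb{E}_p[f(\boldsymbol{x})]$ exists. Everything else reduces to two textbook facts—that a convex function precomposed with an affine map is convex, and that expectations (nonnegatively weighted mixtures) of convex functions are convex. Notably, no differentiability, smoothness, or Lipschitz assumption on $f$ is used anywhere, which is consistent with the paper's emphasis on non-differentiable black-box objectives.
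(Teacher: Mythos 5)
Your proposal is correct and matches the paper's own proof essentially verbatim: the paper likewise uses the reparameterization $\bar{J}(\boldsymbol{\theta})=\mathbb{E}_{\boldsymbol{z}\sim\mathcal{N}(\boldsymbol{0},\boldsymbol{I})}[f(\boldsymbol{\mu}+\Sigma^{\frac{1}{2}}\boldsymbol{z})]$, applies convexity of $f$ pointwise to the affine combination $\lambda(\boldsymbol{\mu}_1+\Sigma_1^{\frac{1}{2}}\boldsymbol{z})+(1-\lambda)(\boldsymbol{\mu}_2+\Sigma_2^{\frac{1}{2}}\boldsymbol{z})$, and takes expectations. Your added checks on domain convexity and integrability are sound refinements the paper leaves implicit, not a different route.
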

\begin{proof}
For $\lambda \in [0,1]$, we have 
\begin{align}
    \lambda \bar{J}({\boldsymbol{\theta}}_1) + (1-\lambda)\bar{J}({\boldsymbol{\theta}}_2) & = \lambda \mathbb{E}[f({\boldsymbol{\mu}_1} + \Sigma_1^{\frac{1}{2}}{\boldsymbol{z}}) ]   + (1-\lambda) \mathbb{E}[f({\boldsymbol{\mu}_2} + \Sigma_2^{\frac{1}{2}}{\boldsymbol{z}}) ] \\
    & = \mathbb{E}[ \lambda f({\boldsymbol{\mu}_1} + \Sigma_1^{\frac{1}{2}}{\boldsymbol{z}}) + (1-\lambda) f({\boldsymbol{\mu}_2} +  \Sigma_2^{\frac{1}{2}}{\boldsymbol{z}}) ]  \\
    & \ge \mathbb{E}[  f\left(\lambda{\boldsymbol{\mu}_1} + (1-\lambda)\lambda{\boldsymbol{\mu}_2} + (\lambda\Sigma_1^{\frac{1}{2}}+ (1-\lambda)\Sigma_2^{\frac{1}{2}}){\boldsymbol{z}} \right) ] \\
    & = \bar{J}(\lambda{\boldsymbol{\theta}}_1 + (1-\lambda){\boldsymbol{\theta}}_2)
\end{align}
\end{proof}

\begin{Lemma}
 Let  $\bar{J}({\boldsymbol{\theta}}):=\mathbb{E}_{p({\boldsymbol{x}};{\boldsymbol{\theta}})}[f({\boldsymbol{x}})]$ 
 for Guassian distribution with parameter ${\boldsymbol{\theta} } := \{{\boldsymbol{\mu}} , \Sigma^{\frac{1}{2}}\} \in {\boldsymbol{\Theta} } $ and  ${\boldsymbol{\Theta} }: = \{{\boldsymbol{\mu}} , \Sigma^{\frac{1}{2}}   \big|\; {\boldsymbol{\mu}} \in \mathcal{R}^d ,   \Sigma \in \mathcal{S}^{+} \}   $ be a $\gamma$-strongly convex function. Suppose $  b I \preceq \widehat{G}_t \preceq \frac{\gamma}{2} I$ be positive definite matrix and $\Sigma_1 \in {\boldsymbol{\Theta} }$, then we have
\begin{align}
     \frac{1}{2} \mathbb{E} \| {\boldsymbol{\mu}}^* \!-\! {\boldsymbol{\mu}}_{t+1}    \|^2_{\Sigma_{t+1}^{-1}} \! \le \! \frac{1}{2} \mathbb{E}\| {\boldsymbol{\mu}}^* \!-\! {\boldsymbol{\mu}}_{t}    \|^2_{\Sigma_{t}^{-1}}  \!+\! \beta_t  \mathbb{E} (\bar{J}({\boldsymbol{\theta}}^*) \!-\!\bar{J}({\boldsymbol{\theta}}_t)) \!+\! \beta_t \mathbb{E} \left< G_t,  2{\boldsymbol{\Sigma}}_{t} \right> \! +\! \frac{\beta^2_t}{2} \mathbb{E} \| \Sigma_{t+1} \|_2 \| \widehat{g}_t \|_2^2  
\end{align}
\end{Lemma}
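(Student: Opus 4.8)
The plan is to start from the one-step bound of Lemma~\ref{lemma2} and massage its right-hand side into the claimed form in three moves: remove the implicit dependence of $\boldsymbol{\mu}_{t+1}$ on $\widehat{g}_t$ by using the update rule, take expectations to replace $\widehat{g}_t$ by $\nabla_{\boldsymbol{\mu}}\bar{J}(\boldsymbol{\theta}_t)$, and then invoke $\gamma$-strong convexity to collapse the remaining linear and quadratic terms in $\boldsymbol{\mu}$ into a function gap. Throughout I write $S_t:=\Sigma_t^{\frac{1}{2}}$ and $S^*:=(\Sigma^*)^{\frac{1}{2}}$, and I take $\boldsymbol{\theta}^*=\{\boldsymbol{\mu}^*,\boldsymbol{0}\}$, which is the global minimizer of $\bar{J}$: by Jensen's inequality and convexity of $f$, $\bar{J}(\boldsymbol{\mu},S)=\mathbb{E}_{\boldsymbol{z}}[f(\boldsymbol{\mu}+S\boldsymbol{z})]\ge f(\boldsymbol{\mu})\ge f(\boldsymbol{\mu}^*)=\bar{J}(\boldsymbol{\theta}^*)$.

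First I would split $\left<\widehat{g}_t,\boldsymbol{\mu}^*-\boldsymbol{\mu}_{t+1}\right>=\left<\widehat{g}_t,\boldsymbol{\mu}^*-\boldsymbol{\mu}_t\right>+\left<\widehat{g}_t,\boldsymbol{\mu}_t-\boldsymbol{\mu}_{t+1}\right>$ and substitute $\boldsymbol{\mu}_t-\boldsymbol{\mu}_{t+1}=\beta_t\Sigma_{t+1}\widehat{g}_t$ from the update Eq.(\ref{MuMuEstimator}). The second piece becomes $\beta_t\|\widehat{g}_t\|^2_{\Sigma_{t+1}}$, so it contributes $\beta_t^2\|\widehat{g}_t\|^2_{\Sigma_{t+1}}$ to the bound; the same substitution turns the term $-\tfrac{1}{2}\|\boldsymbol{\mu}_{t+1}-\boldsymbol{\mu}_t\|^2_{\Sigma_{t+1}^{-1}}$ of Lemma~\ref{lemma2} into $-\tfrac{\beta_t^2}{2}\|\widehat{g}_t\|^2_{\Sigma_{t+1}}$. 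These two $\Sigma_{t+1}$-weighted terms combine into $\tfrac{\beta_t^2}{2}\|\widehat{g}_t\|^2_{\Sigma_{t+1}}$, which I bound by $\tfrac{\beta_t^2}{2}\|\Sigma_{t+1}\|_2\|\widehat{g}_t\|_2^2$ via $\left<\widehat{g}_t,\Sigma_{t+1}\widehat{g}_t\right>\le\|\Sigma_{t+1}\|_2\|\widehat{g}_t\|_2^2$. This already yields the last term of the claim.

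Next I would take total expectation. Since $\boldsymbol{\mu}^*-\boldsymbol{\mu}_t$ is measurable with respect to the history $\mathcal{F}_t$ while $\mathbb{E}[\widehat{g}_t\mid\mathcal{F}_t]=\nabla_{\boldsymbol{\mu}}\bar{J}(\boldsymbol{\theta}_t)$ by unbiasedness, the tower rule replaces $\mathbb{E}\left<\widehat{g}_t,\boldsymbol{\mu}^*-\boldsymbol{\mu}_t\right>$ by $\mathbb{E}\left<\nabla_{\boldsymbol{\mu}}\bar{J}(\boldsymbol{\theta}_t),\boldsymbol{\mu}^*-\boldsymbol{\mu}_t\right>$, and the hypothesis $\widehat{G}_t\preceq\tfrac{\gamma}{2}I$ gives $\|\boldsymbol{\mu}^*-\boldsymbol{\mu}_t\|^2_{\widehat{G}_t}\le\tfrac{\gamma}{2}\|\boldsymbol{\mu}^*-\boldsymbol{\mu}_t\|^2$. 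It then remains to establish the pointwise inequality
\begin{align}
\left<\nabla_{\boldsymbol{\mu}}\bar{J}(\boldsymbol{\theta}_t),\boldsymbol{\mu}^*-\boldsymbol{\mu}_t\right>+\tfrac{\gamma}{2}\|\boldsymbol{\mu}^*-\boldsymbol{\mu}_t\|^2\le\bar{J}(\boldsymbol{\theta}^*)-\bar{J}(\boldsymbol{\theta}_t)+\left<\nabla_{\Sigma}\bar{J}(\boldsymbol{\theta}_t),2\Sigma_t\right>,\nonumber
\end{align}
after which identifying $G_t=\nabla_{\Sigma}\bar{J}(\boldsymbol{\theta}_t)$ and assembling the three bounds finishes the proof.

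The hard part is this last display, which I would derive from the $\gamma$-strong convexity of $\bar{J}$ in $\boldsymbol{\theta}=\{\boldsymbol{\mu},\Sigma^{\frac{1}{2}}\}$ assumed in the hypothesis (convexity itself being confirmed by the convexity lemma established above). Its gradient inequality at $\boldsymbol{\theta}_t$ reads $\left<\nabla_{\boldsymbol{\mu}}\bar{J},\boldsymbol{\mu}^*-\boldsymbol{\mu}_t\right>+\left<\nabla_{S}\bar{J},S^*-S_t\right>+\tfrac{\gamma}{2}\|\boldsymbol{\mu}^*-\boldsymbol{\mu}_t\|^2+\tfrac{\gamma}{2}\|S^*-S_t\|_F^2\le\bar{J}(\boldsymbol{\theta}^*)-\bar{J}(\boldsymbol{\theta}_t)$. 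Setting $S^*=\boldsymbol{0}$ discards the two $S$-difference terms in my favour: $\tfrac{\gamma}{2}\|S_t\|_F^2\ge0$ is dropped, and $-\left<\nabla_{S}\bar{J},S^*-S_t\right>=\left<\nabla_{S}\bar{J},S_t\right>$. The crucial computation is the chain-rule identity $\nabla_{S}\bar{J}=\nabla_{\Sigma}\bar{J}\,S_t+S_t\,\nabla_{\Sigma}\bar{J}$, which by cyclicity of the trace gives $\left<\nabla_{S}\bar{J},S_t\right>=2\left<\nabla_{\Sigma}\bar{J},\Sigma_t\right>$; this is precisely where the factor $2$ and the term $\left<G_t,2\Sigma_t\right>$ arise. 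I expect the delicate points to be verifying this gradient conversion between the $\Sigma^{\frac{1}{2}}$- and $\Sigma$-parametrizations and justifying that $\boldsymbol{\theta}^*=\{\boldsymbol{\mu}^*,\boldsymbol{0}\}$ is an admissible limiting minimizer on $\boldsymbol{\Theta}$; the remaining algebra is routine.
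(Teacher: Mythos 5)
Your proposal is correct and follows essentially the same route as the paper's proof: start from the one-step bound (Lemma~2), split $\left<\widehat{g}_t,\boldsymbol{\mu}^*-\boldsymbol{\mu}_{t+1}\right>$, control the $\Sigma_{t+1}$-weighted terms by $\tfrac{\beta_t^2}{2}\|\Sigma_{t+1}\|_2\|\widehat{g}_t\|_2^2$ (the paper completes the square where you substitute the exact update $\boldsymbol{\mu}_t-\boldsymbol{\mu}_{t+1}=\beta_t\Sigma_{t+1}\widehat{g}_t$; since the residual of the completed square vanishes under that update, the two computations are identical), take expectations using unbiasedness and $\widehat{G}_t\preceq\tfrac{\gamma}{2}I$, and invoke $\gamma$-strong convexity at $\boldsymbol{\theta}^*=\{\boldsymbol{\mu}^*,\boldsymbol{0}\}$ together with the chain-rule identity $\nabla_{\Sigma^{1/2}}\bar{J}=\Sigma^{1/2}\nabla_{\Sigma}\bar{J}+\nabla_{\Sigma}\bar{J}\,\Sigma^{1/2}$ and trace cyclicity to generate the $\left<G_t,2\Sigma_t\right>$ term, exactly as the paper does. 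Your Jensen-inequality justification that $\{\boldsymbol{\mu}^*,\boldsymbol{0}\}$ is the global minimizer of $\bar{J}$ is a small bonus, making explicit a fact the paper merely asserts.
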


\begin{proof}

From Lemma~\ref{lemma2}, we know that 
\begin{align}
     \frac{1}{2}  \| {\boldsymbol{\mu}}^* - {\boldsymbol{\mu}}_{t+1}    \|^2_{\Sigma_{t+1}^{-1}} & \le \frac{1}{2}\| {\boldsymbol{\mu}}^* - {\boldsymbol{\mu}}_{t}    \|^2_{\Sigma_{t}^{-1}}  -  \frac{1}{2}\| {\boldsymbol{\mu}}_{t+1} - {\boldsymbol{\mu}}_{t}    \|^2_{\Sigma_{t+1}^{-1}} \! +\! \beta_t  \left< \widehat{g}_t, {\boldsymbol{\mu}}^* - {\boldsymbol{\mu}}_{t+1} \right> 
     + \beta_t  \| {\boldsymbol{\mu}}^* - {\boldsymbol{\mu}}_{t}    \|^2_{\widehat{G}_t} 
\end{align}
It follows that
\begin{equation}
    \resizebox{1.05\hsize}{!}{$\frac{1}{2}  \| {\boldsymbol{\mu}}^* \!-\! {\boldsymbol{\mu}}_{t+1}    \|^2_{\Sigma_{t+1}^{-1}}  \le \frac{1}{2}\| {\boldsymbol{\mu}}^* \!-\! {\boldsymbol{\mu}}_{t}    \|^2_{\Sigma_{t}^{-1}} \! -\!  \frac{1}{2}\| {\boldsymbol{\mu}}_{t+1} \!-\! {\boldsymbol{\mu}}_{t}    \|^2_{\Sigma_{t+1}^{-1}} \!+\! \beta_t  \left< \widehat{g}_t, {\boldsymbol{\mu}}^* \!-\! {\boldsymbol{\mu}}_{t} \right>   \!+\! \beta_t  \left< \widehat{g}_t, {\boldsymbol{\mu}}_{t} - {\boldsymbol{\mu}}_{t+1} \right>  
    \! +\! \beta_t  \| {\boldsymbol{\mu}}^* \!-\! {\boldsymbol{\mu}}_{t}    \|^2_{\widehat{G}_t}$ 
    } \label{beforeSig}
\end{equation}

Note that
\begin{align}
    -  \frac{1}{2}\| {\boldsymbol{\mu}}_{t+1} - {\boldsymbol{\mu}}_{t}    \|^2_{\Sigma_{t+1}^{-1}} + \beta_t  \left< \widehat{g}_t, {\boldsymbol{\mu}}_{t} - {\boldsymbol{\mu}}_{t+1} \right> & \!= \!-\!  \frac{1}{2}\| {\boldsymbol{\mu}}_{t+1} - {\boldsymbol{\mu}}_{t}    \|^2_{\Sigma_{t+1}^{-1}} \!+\! \beta_t  \left< \Sigma_{t+1} \widehat{g}_t, \Sigma_{t+1}^{-1}({\boldsymbol{\mu}}_{t} - {\boldsymbol{\mu}}_{t+1}) \right> \\
    & \!=\!  -\!  \frac{1}{2}\| {\boldsymbol{\mu}}_{t+1} - {\boldsymbol{\mu}}_{t}  + \beta_t \Sigma_{t+1} \widehat{g}_t  \|^2_{\Sigma_{t+1}^{-1}}  + \frac{\beta^2_t}{2} \| \Sigma_{t+1}\widehat{g}_t \|^2_{\Sigma_{t+1}^{-1}} \\
    & \le  \frac{\beta^2_t}{2} \| \Sigma_{t+1}\widehat{g}_t \|^2_{\Sigma_{t+1}^{-1}} \le \frac{\beta^2_t}{2} \| \Sigma_{t+1} \|_2  \| \widehat{g}_t \|_2^2 
    \label{BlackInq}
\end{align}
 
Note that $\Sigma^{-1}_{t+1} = \Sigma^{-1}_{t} + 2 \beta_t \widehat{G}_t$ and $\widehat{G}_t \succeq  b I $, we have smallest eigenvalues $ \lambda_{min} (\Sigma^{-1}_{t+1})  \ge   \lambda_{min} (\Sigma^{-1}_{t}) \ge \cdots \ge \lambda_{min} (\Sigma^{-1}_{1})$. Then, we know  $ \| \Sigma_{t+1} \|_2  \le   \| \Sigma_{1} \|_2 $. In addition, $\Sigma_{t+1}$ is positive definite matrix, thus  $\Sigma_{t+1}  \in \boldsymbol{\Theta} $ for $t \in \{1,2,3 \cdots\}$.

Plug (\ref{BlackInq}) into (\ref{beforeSig}),
 we can achieve that 
\begin{align}
   \frac{1}{2}  \| {\boldsymbol{\mu}}^* - {\boldsymbol{\mu}}_{t+1}    \|^2_{\Sigma_{t+1}^{-1}} & \le \frac{1}{2}\| {\boldsymbol{\mu}}^* - {\boldsymbol{\mu}}_{t}    \|^2_{\Sigma_{t}^{-1}}  + \beta_t  \left< \widehat{g}_t, {\boldsymbol{\mu}}^* - {\boldsymbol{\mu}}_{t} \right>   
     + \beta_t  \| {\boldsymbol{\mu}}^* - {\boldsymbol{\mu}}_{t}    \|^2_{\widehat{G}_t}  +  \frac{\beta^2_t}{2} \| \Sigma_{t+1} \|_2  \| \widehat{g}_t \|_2^2  
\end{align}
Since $  b I \preceq \widehat{G}_t \preceq \frac{\gamma}{2} I$, we get that
\begin{align}
    \frac{1}{2}  \| {\boldsymbol{\mu}}^* - {\boldsymbol{\mu}}_{t+1}    \|^2_{\Sigma_{t+1}^{-1}} & \le \frac{1}{2}\| {\boldsymbol{\mu}}^* - {\boldsymbol{\mu}}_{t}    \|^2_{\Sigma_{t}^{-1}}  + \beta_t  \left< \widehat{g}_t, {\boldsymbol{\mu}}^* - {\boldsymbol{\mu}}_{t} \right>   
     + \beta_t \frac{\gamma}{2}  \| {\boldsymbol{\mu}}^* - {\boldsymbol{\mu}}_{t}    \|^2_2  + \frac{\beta^2_t}{2} \| \Sigma_{t+1} \|_2 \| \widehat{g}_t \|_2^2 
\end{align}

Taking conditional expectation on both sides, we obtain that
\begin{align}
     \frac{1}{2} \mathbb{E} \| {\boldsymbol{\mu}}^* - {\boldsymbol{\mu}}_{t+1}    \|^2_{\Sigma_{t+1}^{-1}}
     &  \! \le \! \frac{1}{2} \mathbb{E}\| {\boldsymbol{\mu}}^* \!-\! {\boldsymbol{\mu}}_{t}    \|^2_{\Sigma_{t}^{-1}} \! + \! \beta_t  \left< \mathbb{E} \widehat{g}_t, {\boldsymbol{\mu}}^* \!-\! {\boldsymbol{\mu}}_{t} \right>    \! +\! \beta_t \frac{\gamma}{2} \| {\boldsymbol{\mu}}^* \!-\! {\boldsymbol{\mu}}_{t}    \|^2_2  \!+\! \frac{\beta^2_t}{2}  \mathbb{E} \| \Sigma_{t+1} \|_2\| \widehat{g}_t \|_2^2 \\  \nonumber 
     & \le \frac{1}{2} \mathbb{E}\| {\boldsymbol{\mu}}^* - {\boldsymbol{\mu}}_{t}    \|^2_{\Sigma_{t}^{-1}}  + \beta_t  \left< \mathbb{E} \widehat{g}_t, {\boldsymbol{\mu}}^* - {\boldsymbol{\mu}}_{t} \right>   - \beta_t  \left< G_t,  2{\boldsymbol{\Sigma}}_{t} \right> + \beta_t  \left< G_t,  2{\boldsymbol{\Sigma}}_{t} \right>  \\
    &   + \beta_t \frac{\gamma}{2} \| {\boldsymbol{\mu}}^* - {\boldsymbol{\mu}}_{t}    \|^2_2  + \frac{\beta^2_t}{2}  \mathbb{E} \| \Sigma_{t+1} \|_2 \| \widehat{g}_t \|_2^2
   \label{Eineq}
\end{align}
Note that $ g_t = \mathbb{E} \widehat{g}_t = \nabla _ {\boldsymbol{\mu} = \boldsymbol{\mu}_t}\bar{J}$ and $ G_t    = \nabla _ {\Sigma = \Sigma_t}\bar{J}$ and $\nabla _ {\Sigma^{\frac{1}{2}}} \bar{J} = \Sigma^{\frac{1}{2}} \nabla _ {\Sigma} {\bar{J}} + \nabla _ {\Sigma} {\bar{J}} \Sigma^{\frac{1}{2}} $, where $G_t$,  $\nabla _ {\Sigma} {\bar{J}}$ and ${\Sigma^{\frac{1}{2}}} $ are symmetric matrix. Since $\bar{J}({\boldsymbol{\theta}})$ is a $\gamma$-strongly  convex function with optimum at ${\boldsymbol{\theta}}^* = \{ \boldsymbol{\mu}^*, \boldsymbol{0} \}$,  we have that
\begin{align}
 \left<\nabla _ {\boldsymbol{\mu} = \boldsymbol{\mu}_t}\bar{J} ,  {\boldsymbol{\mu}}^* - {\boldsymbol{\mu}}_{t}  \right> + \left< \nabla _ {\Sigma^{\frac{1}{2}}  = \Sigma_t^{\frac{1}{2}} } \bar{J} , \boldsymbol{0} - \Sigma_t^{\frac{1}{2}}  \right> & =
   \left<g_t ,  {\boldsymbol{\mu}}^* - {\boldsymbol{\mu}}_{t}  \right> + \left< \Sigma_t^{\frac{1}{2}}G_t + G_t\Sigma_t^{\frac{1}{2}} , \boldsymbol{0} - \Sigma_t^{\frac{1}{2}}  \right> \\
   & =  \left<g_t ,  {\boldsymbol{\mu}}^* - {\boldsymbol{\mu}}_{t}  \right> - \left< G_t,  2\Sigma_t  \right> \\
   & \le (\bar{J}({\boldsymbol{\theta}}^*)-\bar{J}({\boldsymbol{\theta}}_t)) - \frac{\gamma}{2} \| {\boldsymbol{\mu}}^* - {\boldsymbol{\mu}}_{t}    \|^2_2
\end{align}
Plug it into (\ref{Eineq}), we can obtain that 
\begin{align}
     \frac{1}{2} \mathbb{E} \| {\boldsymbol{\mu}}^* \!-\! {\boldsymbol{\mu}}_{t+1}    \|^2_{\Sigma_{t+1}^{-1}}  \!\le \! \frac{1}{2} \mathbb{E}\| {\boldsymbol{\mu}}^* \! - \! {\boldsymbol{\mu}}_{t}    \|^2_{\Sigma_{t}^{-1}} \! +\! \beta_t   (\bar{J}({\boldsymbol{\theta}}^*) \!-\! \bar{J}({\boldsymbol{\theta}}_t)) \!+\! \beta_t  \left< G_t,  2{\boldsymbol{\Sigma}}_{t} \right> 
    \! +\! \frac{\beta^2_t}{2}  \mathbb{E} \| \Sigma_{t+1} \|_2 \| \widehat{g}_t \|_2^2   
\end{align}
Taking expectation on both sides, we know that
\begin{align}
    \frac{1}{2} \mathbb{E} \| {\boldsymbol{\mu}}^* \!-\! {\boldsymbol{\mu}}_{t+1}    \|^2_{\Sigma_{t+1}^{-1}} \!\le\! \frac{1}{2} \mathbb{E}\| {\boldsymbol{\mu}}^* \!-\! {\boldsymbol{\mu}}_{t}    \|^2_{\Sigma_{t}^{-1}}  \!+\! \beta_t  \mathbb{E} (\bar{J}({\boldsymbol{\theta}}^*) \!- \!\bar{J}({\boldsymbol{\theta}}_t)) \!+\! \beta_t \mathbb{E} \left< G_t,  2{\boldsymbol{\Sigma}}_{t} \right> \!+\! \frac{\beta^2_t}{2} \mathbb{E} \| \Sigma_{t+1} \|_2 \| \widehat{g}_t \|_2^2  
\end{align}

\end{proof}

\begin{Lemma}
\label{normEq}
Given a symmetric matrix $X$ and a symmetric positive semi-definite matrix $Y$, then we have $\textbf{tr}\left(XY\right) \le \| Y\|_2 \| X \|_{tr}$, where $\|X \|_{tr} := \sum_{i=1}^{d} |\lambda_i| $ with $\lambda_i$ denotes the eigenvalues.
\end{Lemma}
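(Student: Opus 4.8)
The plan is to diagonalize the symmetric matrix $X$ and rewrite the trace as a weighted sum of the diagonal entries of an orthogonally congruent copy of $Y$. Since $X$ is symmetric, the spectral theorem furnishes an orthogonal matrix $U$ and a diagonal matrix $\Lambda = \mathrm{diag}(\lambda_1,\dots,\lambda_d)$ with $X = U\Lambda U^\top$. Using the cyclic property of the trace, I would write $\textbf{tr}(XY) = \textbf{tr}(U\Lambda U^\top Y) = \textbf{tr}(\Lambda\, U^\top Y U)$. Setting $\widetilde{Y} := U^\top Y U$, this becomes $\textbf{tr}(XY) = \sum_{i=1}^d \lambda_i \widetilde{Y}_{ii}$, so the whole claim reduces to controlling the diagonal entries of $\widetilde{Y}$.

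The two facts I need about $\widetilde{Y}$ are that its diagonal entries are nonnegative and bounded by $\|Y\|_2$. Both follow from $Y \in \mathcal{S}^{+}$: writing $\boldsymbol{u}_i := U e_i$ for the $i$-th column of $U$, each diagonal entry is a Rayleigh-type quantity $\widetilde{Y}_{ii} = \boldsymbol{u}_i^\top Y \boldsymbol{u}_i$. Positive semi-definiteness gives $\widetilde{Y}_{ii} \ge 0$, and since $U$ is orthogonal we have $\|\boldsymbol{u}_i\|_2 = 1$, whence $\widetilde{Y}_{ii} = \boldsymbol{u}_i^\top Y \boldsymbol{u}_i \le \|Y\|_2 \|\boldsymbol{u}_i\|_2^2 = \|Y\|_2$.

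Finally I would combine these bounds. Because each $\widetilde{Y}_{ii} \ge 0$, I can bound $\lambda_i \widetilde{Y}_{ii} \le |\lambda_i|\, \widetilde{Y}_{ii}$ termwise, and then use $\widetilde{Y}_{ii} \le \|Y\|_2$ to obtain $\sum_{i} \lambda_i \widetilde{Y}_{ii} \le \|Y\|_2 \sum_{i} |\lambda_i| = \|Y\|_2 \|X\|_{tr}$. I do not anticipate a genuine obstacle here; the only point deserving care is the sign step $\lambda_i \widetilde{Y}_{ii} \le |\lambda_i|\,\widetilde{Y}_{ii}$, which is exactly where nonnegativity of the diagonal of $\widetilde{Y}$ (that is, the positive semi-definiteness of $Y$) enters. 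Without it the inequality would fail, since $X$ is allowed to be indefinite and the negative eigenvalues could otherwise push the trace above the stated bound.
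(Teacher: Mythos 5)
Your proof is correct, and it shares the paper's skeleton: both arguments diagonalize $X = U\Lambda U^\top$, reduce the trace to $\sum_{i}\lambda_i c_i$ with nonnegative weights $c_i$, and then perform the sign step $\lambda_i c_i \le |\lambda_i| c_i$. In fact the paper's weights $\boldsymbol{a}_i^\top \boldsymbol{a}_i$, built from the factorization $Y = Y^{\frac{1}{2}}Y^{\frac{1}{2}}$ with $\boldsymbol{a}_i$ taken from $U^\top Y^{\frac{1}{2}}$, are exactly your $\widetilde{Y}_{ii} = \boldsymbol{u}_i^\top Y \boldsymbol{u}_i$, since $\boldsymbol{a}_i^\top\boldsymbol{a}_i = \boldsymbol{u}_i^\top Y^{\frac{1}{2}}Y^{\frac{1}{2}}\boldsymbol{u}_i$. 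Where you genuinely diverge is the finishing move: the paper reassembles the sum with $|\Lambda|$ into a Frobenius norm, $\sum_i |\lambda_i|\,\boldsymbol{a}_i^\top\boldsymbol{a}_i = \| Y^{\frac{1}{2}}\bar{X}^{\frac{1}{2}}\|_F^2$ with $\bar{X}^{\frac{1}{2}} = U|\Lambda|^{\frac{1}{2}}U^\top$, and then invokes the mixed-norm submultiplicativity $\|Y^{\frac{1}{2}}\bar{X}^{\frac{1}{2}}\|_F^2 \le \|Y^{\frac{1}{2}}\|_2^2\, \|\bar{X}^{\frac{1}{2}}\|_F^2 = \|Y\|_2\|X\|_{tr}$; you instead bound each weight directly by the Rayleigh-quotient estimate $\boldsymbol{u}_i^\top Y \boldsymbol{u}_i \le \|Y\|_2\|\boldsymbol{u}_i\|_2^2 = \|Y\|_2$. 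Your finish is the more elementary of the two: it requires neither the existence of the matrix square root $Y^{\frac{1}{2}}$ nor the inequality $\|CD\|_F \le \|C\|_2\|D\|_F$, only positive semi-definiteness tested against unit vectors, and it isolates precisely where the hypothesis on $Y$ enters (nonnegativity of the diagonal of $U^\top Y U$, which licenses the sign step) --- a point you correctly identified as the crux. The paper's detour through Frobenius norms buys nothing extra for this statement, so your version is, if anything, cleaner.
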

\begin{proof}
Since $X$ is symmetric, it can be orthogonal  diagonalized as $X = U \Lambda U^\top$, where $\Lambda$ is a diagonal matrix contains eigenvalues $\lambda_i, i \in \{1,\cdots,d\}$. Since $Y$ is a symmetric positive semi-definite matrix, it can be written as $Y = Y^{\frac{1}{2}}Y^{\frac{1}{2}}$.
It follows that 
\begin{align}
    \textbf{tr}\left(XY\right) & = \textbf{tr}\left( U \Lambda U^\top Y^{\frac{1}{2}}Y^{\frac{1}{2}}\right) = \textbf{tr}\left(Y^{\frac{1}{2}} U \Lambda U^\top Y^{\frac{1}{2}}\right) = \sum_{i=1}^{d} \lambda_i \boldsymbol{a}_i^\top \boldsymbol{a}_i
\end{align}
where $\boldsymbol{a}_i$ denotes the $i^{th}$ column of the matrix $A= U^\top Y^{\frac{1}{2}}$. Then, we have
\begin{align}
   \textbf{tr}\left(XY\right) \le \sum_{i=1}^{d} |\lambda_i| \boldsymbol{a}_i^\top \boldsymbol{a}_i   = \textbf{tr}\left(Y^{\frac{1}{2}} U | \Lambda | U^\top Y^{\frac{1}{2}} \right) = \textbf{tr}\left(Y^{\frac{1}{2}} \bar{X}^{\frac{1}{2}}\bar{X}^{\frac{1}{2}} Y^{\frac{1}{2}} \right) = \| Y^{\frac{1}{2}} \bar{X}^{\frac{1}{2}} \|_F^2
\end{align}
where $\bar{X}^{\frac{1}{2}}= U | \Lambda |^{\frac{1}{2}} U^\top$.

Using the fact $\|Y^{\frac{1}{2}} \bar{X}^{\frac{1}{2}}  \|_F^2 \le \| Y^{\frac{1}{2}}  \|_2^2 \|\bar{X}^{\frac{1}{2}}  \|_F^2$, we can obtain that
\begin{align}
    \textbf{tr}\left(XY\right) = \| Y^{\frac{1}{2}} \bar{X}^{\frac{1}{2}} \|_F^2 \le  \| Y^{\frac{1}{2}}  \|_2^2 \|\bar{X}^{\frac{1}{2}}  \|_F^2 = \| Y  \|_2 \| \bar{X} \|_{tr}  = \| Y  \|_2 \| X \|_{tr}
\end{align}

\end{proof}

\begin{Lemma}
\label{innerBound}
Suppose gradients $ \| G_t \|_{tr} \le B_1$ and $    \widehat{G}_t \succeq b I$ with $b>0$,  by setting $\beta_t = \beta$ as a constant step size, we have 
\begin{align}
     \sum_{t=1}^{T} {\beta_t \mathbb{E} \left< G_t,  2{\boldsymbol{\Sigma}}_{t} \right> } \le   2 B_1 \left ( \beta \|\Sigma_1 \|_2 +    \frac{1 + \log T}{2b} \right)
\end{align}

\end{Lemma}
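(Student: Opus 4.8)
The plan is to decouple the matrix inner product from the summation and then exploit how fast $\Sigma_t$ shrinks. First I would apply Lemma~\ref{normEq}: each $G_t := \nabla_{\Sigma=\Sigma_t}\bar{J}$ is symmetric and each iterate $\Sigma_t$ is positive semi-definite (the update keeps $\Sigma_t \in \mathcal{S}^{++}$, as already established), so the lemma gives $\langle G_t, 2\Sigma_t\rangle = 2\,\textbf{tr}(G_t\Sigma_t) \le 2\|\Sigma_t\|_2\,\|G_t\|_{tr} \le 2 B_1 \|\Sigma_t\|_2$, where the last step uses the hypothesis $\|G_t\|_{tr}\le B_1$. This reduces the claim to the scalar estimate $\sum_{t=1}^T \beta\cdot 2B_1\|\Sigma_t\|_2 \le 2B_1(\beta\|\Sigma_1\|_2 + \tfrac{1+\log T}{2b})$.

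Next I would control $\|\Sigma_t\|_2$ by unrolling the inverse-covariance recursion $\Sigma_{t+1}^{-1} = \Sigma_t^{-1} + 2\beta\widehat{G}_t$. Telescoping yields $\Sigma_t^{-1} = \Sigma_1^{-1} + 2\beta\sum_{s=1}^{t-1}\widehat{G}_s$, and since $\widehat{G}_s \succeq bI$ and $\Sigma_1^{-1}\succeq 0$, I obtain the matrix inequality $\Sigma_t^{-1}\succeq 2\beta b(t-1)\,I$. Passing to the smallest eigenvalue gives $\lambda_{\min}(\Sigma_t^{-1})\ge 2\beta b(t-1)$, hence $\|\Sigma_t\|_2 = 1/\lambda_{\min}(\Sigma_t^{-1}) \le \frac{1}{2\beta b(t-1)}$ for $t\ge 2$, while the $t=1$ term is retained directly as $\|\Sigma_1\|_2$.

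Finally I would sum these estimates, getting $\sum_{t=1}^T 2\beta B_1\|\Sigma_t\|_2 \le 2\beta B_1\|\Sigma_1\|_2 + \frac{B_1}{b}\sum_{k=1}^{T-1}\frac{1}{k}$, and then bound the harmonic tail by $\sum_{k=1}^{T-1}\frac{1}{k}\le 1+\log T$, which yields exactly $2B_1(\beta\|\Sigma_1\|_2 + \frac{1+\log T}{2b})$. Because the spectral bound on $\Sigma_t$ holds pointwise (it follows deterministically from $\widehat{G}_s\succeq bI$ for every realization), the expectation $\mathbb{E}$ passes through each term without additional argument.

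\textbf{Main obstacle:} The only delicate point is the passage from the Loewner inequality $\Sigma_t^{-1}\succeq 2\beta b(t-1)\,I$ to the spectral-norm bound on $\Sigma_t$ via eigenvalue monotonicity of matrix inversion, together with getting the index bookkeeping right so that the harmonic sum runs up to $T-1$ and the $t=1$ iterate is counted separately; every other step is a direct substitution.
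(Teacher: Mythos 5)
Your proposal is correct and follows essentially the same route as the paper's proof: both reduce $\left< G_t, 2\Sigma_t\right>$ to $2B_1\|\Sigma_t\|_2$ via Lemma~\ref{normEq}, bound $\lambda_{\min}(\Sigma_t^{-1})$ by accumulating the $2\beta b$ increments from the inverse-covariance update (you telescope explicitly where the paper iterates an eigenvalue inequality, a cosmetic difference), keep the $t=1$ term separate, and finish with the harmonic bound $\sum_{k=1}^{T-1}\tfrac{1}{k}\le 1+\log T$. Your closing remark that the spectral bound is deterministic, so the expectation passes through, is a point the paper leaves implicit but is handled equivalently there by bounding $\mathbb{E}\|\Sigma_t\|_2$ with the same pointwise estimate.
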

\begin{proof}
Note that $\Sigma_{t+1}^{-1} - \Sigma_{t}^{-1} = 2\beta_t \widehat{G}_t$ and $\widehat{G}_t \succeq b I$ with $b >0$, we know the smallest eigenvalue of $\Sigma^{-1}_{t+1}$, i.e. $\lambda_{min}(\Sigma^{-1}_{t+1})$ satisfies that 
\begin{align}
  \lambda_{min}(\Sigma^{-1}_{t+1}) \ge \lambda_{min}(\Sigma^{-1}_{t}) + 2 \beta_t b \ge  \lambda_{min}(\Sigma^{-1}_{1}) + 2\sum_{i=1}^{t}{\beta_i}b \ge 2\sum_{i=1}^{t}{\beta_i}b 
\end{align}
Thus, we know that
\begin{align}
    \|  \Sigma_{t+1} \|_2 = \frac{1}{\lambda_{min}( \Sigma_{t+1}^{-1})} \le \frac{1}{2\sum_{i=1}^{t}{\beta_i}b } = \frac{1}{2t\beta b} 
    \label{SigmaRate}
\end{align}

Note that  $\Sigma_t$ is symmetric positive semi-definite  and  $G_t$ is symmetric. From Lemma~\ref{normEq}, we know that
 $\textbf{tr}(G_t\Sigma_t) \le \|\Sigma_t \|_2 \| G_t\|_{tr}$. 
It follows that
\begin{align}
    \sum_{t=1}^{T} { \beta_t  \mathbb{E} \left< G_t,  2{\boldsymbol{\Sigma}}_{t} \right> }   & \le 2 \beta  \sum_{t=1}^{T} {\mathbb{E}[\|G_t\|_{tr} \|\Sigma_t\|_2 ]} \le 2 \beta  B_1  \sum_{t=1}^{T}{\mathbb{E}\|\Sigma_t\|_2} \\
    & \le  2 \beta B_1 \|\Sigma_1 \|_2    + 2 B_1 (\sum_{t=1}^{T-1}{\frac{1}{2bt}} ) 
\end{align}
Since $\sum_{t=1}^{T}{\frac{1}{t}} \le 1+ \log T $, we know that
\begin{align}
    \sum_{t=1}^{T} {\beta_t\mathbb{E}\left< G_t,  2{\boldsymbol{\Sigma}}_{t} \right> } \le  2 \beta B_1 \|\Sigma_1 \|_2 +  2 B_1 \left (   \frac{1 + \log T}{2b} \right) =  2 B_1 \left ( \beta \|\Sigma_1 \|_2 +    \frac{1 + \log T}{2b} \right)
\end{align}

\end{proof}

\begin{theorem*}
Given a  convex function $f({\boldsymbol{x}})$, define $\bar{J}({\boldsymbol{\theta}}):=\mathbb{E}_{p({\boldsymbol{x}};{\boldsymbol{\theta}})}[f({\boldsymbol{x}})]$ 
 for Guassian distribution with parameter ${\boldsymbol{\theta} } := \{{\boldsymbol{\mu}} , \Sigma^{\frac{1}{2}}\} \in {\boldsymbol{\Theta} } $ and  ${\boldsymbol{\Theta} }: = \{{\boldsymbol{\mu}} , \Sigma^{\frac{1}{2}}  \big|\; {\boldsymbol{\mu}} \in \mathcal{R}^d ,  \Sigma \in \mathcal{S}^{+} \}   $. Suppose $\bar{J}({\boldsymbol{\theta}})$ be  $\gamma$-strongly convex. Let $\widehat{G}_t$ be positive semi-definite matrix such that  $  b I \preceq \widehat{G}_t \preceq \frac{\gamma}{2} I$.  Suppose $\Sigma_1 \in \mathcal{S}^{++} $ and $\|\Sigma_1 \| \le \rho$, $  \mathbb{E} \widehat{g}_t = \nabla _ {\boldsymbol{\mu} = \boldsymbol{\mu}_t}\bar{J}$. Assume furthermore  $ \| \nabla _ {\Sigma = \Sigma_t}\bar{J} \|_{tr} \le B_1$ and $ \| {\boldsymbol{\mu}}^* - {\boldsymbol{\mu}}_{1}    \|^2_{\Sigma_{1}^{-1}} \le R$,  $\mathbb{E} \| \widehat{g}_t \|_2^2 \le \mathcal{B}$ . Set $\beta_t = \beta$, then Algorithm~\ref{GF} can achieve
 \begin{equation}
     \resizebox{1\hsize}{!}{ $ \frac{1}{T}\left[  \sum_{t=1}^{T} {\mathbb{E}f({\boldsymbol{\mu}}_t) }  \right] - f({\boldsymbol{\mu}}^*)   \le  \frac{ 2bR + 2b\beta\rho(4B_1 + \beta\mathcal{B}) +   4 B_1 (1 + \log T)   + (1+ \log T) \beta \mathcal{B} }{ 4\beta b T} 
   = \mathcal{O}\left( \frac{\log T}{T} \right)$ }
 \end{equation}
\end{theorem*}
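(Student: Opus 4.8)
The plan is to turn the preceding one-step expectation lemma for Algorithm~\ref{GF} into a telescoping sum over $t=1,\dots,T$ and then convert the resulting control of $\sum_t\beta_t\mathbb{E}(\bar J(\theta_t)-\bar J(\theta^*))$ into a bound on the averaged function-value gap. That lemma supplies, for each $t$,
\begin{align*}
\tfrac12\mathbb{E}\|\mu^*-\mu_{t+1}\|^2_{\Sigma_{t+1}^{-1}}
&\le \tfrac12\mathbb{E}\|\mu^*-\mu_t\|^2_{\Sigma_t^{-1}}
+\beta_t\mathbb{E}(\bar J(\theta^*)-\bar J(\theta_t)) \\
&\quad +\beta_t\mathbb{E}\langle G_t,2\Sigma_t\rangle
+\tfrac{\beta_t^2}{2}\mathbb{E}\|\Sigma_{t+1}\|_2\|\widehat g_t\|_2^2 ,
\end{align*}
which I would rearrange so that $\beta_t\mathbb{E}(\bar J(\theta_t)-\bar J(\theta^*))$ sits against a telescoping difference of squared Mahalanobis distances plus the two ``error'' terms.

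First I would sum over $t$. The distance terms collapse to $\tfrac12\mathbb{E}\|\mu^*-\mu_1\|^2_{\Sigma_1^{-1}}-\tfrac12\mathbb{E}\|\mu^*-\mu_{T+1}\|^2_{\Sigma_{T+1}^{-1}}$; the terminal term is nonnegative and is dropped, while the initial one is $\le R/2$ by hypothesis. This yields
\begin{align*}
\sum_{t=1}^{T}\beta\,\mathbb{E}(\bar J(\theta_t)-\bar J(\theta^*))
&\le \tfrac{R}{2}
+\sum_{t=1}^{T}\beta\,\mathbb{E}\langle G_t,2\Sigma_t\rangle
+\sum_{t=1}^{T}\tfrac{\beta^2}{2}\mathbb{E}\|\Sigma_{t+1}\|_2\|\widehat g_t\|_2^2 .
\end{align*}
To pass to $f$, I would use convexity of $f$ via the convexity lemma and Jensen's inequality: since $f$ is convex, $\bar J(\theta_t)=\mathbb{E}[f(\mu_t+\Sigma_t^{1/2}z)]\ge f(\mu_t)$, while the optimum $\theta^*=\{\mu^*,\boldsymbol{0}\}$ gives $\bar J(\theta^*)=f(\mu^*)$; hence each left-hand summand dominates $\beta(\mathbb{E}f(\mu_t)-f(\mu^*))$.

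It then remains to bound the two error sums. The curvature--trace sum is handled verbatim by Lemma~\ref{innerBound}, giving $\sum_t\beta\,\mathbb{E}\langle G_t,2\Sigma_t\rangle\le 2B_1\bigl(\beta\rho+\tfrac{1+\log T}{2b}\bigr)$. For the variance sum the decisive fact is the monotonicity $\Sigma_{t+1}^{-1}=\Sigma_t^{-1}+2\beta_t\widehat G_t\succeq\Sigma_t^{-1}+2\beta_t b I$, which forces $\lambda_{\min}(\Sigma_{t+1}^{-1})\ge 2b\sum_{i\le t}\beta_i$ and hence $\|\Sigma_{t+1}\|_2\le \tfrac{1}{2t\beta b}$ (the same estimate used inside Lemma~\ref{innerBound}). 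Bounding the $t=1$ factor by $\|\Sigma_1\|_2\le\rho$ and the remaining factors by the decaying estimate, whose harmonic partial sum is at most $\tfrac{1+\log T}{2\beta b}$, and using $\mathbb{E}\|\widehat g_t\|_2^2\le\mathcal B$, gives $\sum_t\tfrac{\beta^2}{2}\|\Sigma_{t+1}\|_2\mathcal B\le \tfrac{\beta^2\rho\mathcal B}{2}+\tfrac{\beta\mathcal B(1+\log T)}{4b}$. Collecting the three pieces and dividing by $\sum_t\beta_t=\beta T$ produces exactly the stated bound, of order $\mathcal O(\log T/T)$.

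The main obstacle is precisely this variance term: absent the adaptive shrinking of the covariance, $\|\Sigma_{t+1}\|_2$ could only be bounded by $\rho$, making the variance sum $\Theta(T)$ and destroying convergence after division by $\beta T$. The crux is therefore establishing that $\lambda_{\min}(\Sigma_t^{-1})$ grows linearly in $t$ (driven by the lower bound $\widehat G_t\succeq bI$), which simultaneously tames both the trace sum and the variance sum into $\log T$ factors. The upper bound $\widehat G_t\preceq\tfrac{\gamma}{2}I$ together with $\gamma$-strong convexity of $\bar J$ is what already let the $\langle\widehat g_t,\mu^*-\mu_t\rangle$ and $\|\mu^*-\mu_t\|^2_{\widehat G_t}$ contributions be absorbed into the descent quantity $\bar J(\theta^*)-\bar J(\theta_t)$ inside the one-step lemma, so at this stage the work is essentially the summation bookkeeping above.
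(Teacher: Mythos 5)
Your proposal is correct and follows essentially the same route as the paper's own proof: you sum the one-step expectation lemma, telescope the Mahalanobis distances (dropping the terminal term, bounding the initial one by $R/2$), invoke the trace-sum bound of Lemma~\ref{innerBound}, control the variance sum via $\lambda_{\min}(\Sigma_{t+1}^{-1})\ge 2t\beta b$ so that $\|\Sigma_{t+1}\|_2\le \frac{1}{2t\beta b}$ with the $t=1$ factor bounded by $\rho$, and finally pass from $\bar J$ to $f$ by Jensen's inequality with $\theta^*=(\boldsymbol{\mu}^*,\boldsymbol{0})$. The bookkeeping reproduces the paper's constants exactly, so there is nothing to correct.
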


\begin{proof}
From Lemma 1 to Lemma 4, we know that
\begin{equation}
     \resizebox{1\hsize}{!}{ $\frac{1}{2} \mathbb{E} \| {\boldsymbol{\mu}}^* - {\boldsymbol{\mu}}_{t+1}    \|^2_{\Sigma_{t+1}^{-1}}  \le \frac{1}{2} \mathbb{E}\| {\boldsymbol{\mu}}^* - {\boldsymbol{\mu}}_{t}    \|^2_{\Sigma_{t}^{-1}}  + \beta_t  \mathbb{E}  (\bar{J}({\boldsymbol{\theta}}^*)-\bar{J}({\boldsymbol{\theta}}_t)) + \beta_t \mathbb{E} \left< G_t,  2{\boldsymbol{\Sigma}}_{t} \right>  + \frac{\beta^2_t}{2}  \mathbb{E} \| \Sigma_{t+1} \|_2 \| \widehat{g}_t \|_2^2$ }
\end{equation}

Sum up both sides from $t=1$ to $t = T$ and rearrange terms,  we get 
\begin{align}
    \sum_{t=1}^{T} { \beta_t \mathbb{E} \left[ \bar{J}({\boldsymbol{\theta}}_t) -\bar{J}({\boldsymbol{\theta}}^**) \right] } & \le \frac{1}{2}  \mathbb{E}\| {\boldsymbol{\mu}}^* - {\boldsymbol{\mu}}_{1}    \|^2_{\Sigma_{1}^{-1}}  - \frac{1}{2}\mathbb{E}\| {\boldsymbol{\mu}}^* - {\boldsymbol{\mu}}_{T+1}    \|^2_{\Sigma_{T+1}^{-1}}   \\ 
    & + \sum_{t=1}^{T} {\beta_t \mathbb{E} \left< G_t,  2{\boldsymbol{\Sigma}}_{t} \right> } + \sum_{t=1}^{T} {  \frac{\beta_t^2}{2}  \mathbb{E} \| \Sigma_{t+1} \|_2 \| \widehat{g}_t \|_2^2 }
\end{align}
Since $\beta_t = \beta$, we can obtain that
\begin{align}
\label{BlackBoundin}
    \frac{1}{T}\left[  \sum_{t=1}^{T} {\mathbb{E} \bar{J}({\boldsymbol{\theta}}_t) }  \right] - \bar{J}({\boldsymbol{\theta}}^*) & \le \frac{\frac{1}{2}  \mathbb{E}\| {\boldsymbol{\mu}}^* - {\boldsymbol{\mu}}_{1}    \|^2_{\Sigma_{1}^{-1}} \!+\!  \sum_{t=1}^{T} {\beta_t \mathbb{E}\left< G_t,  2{\boldsymbol{\Sigma}}_{t} \right> }  \!+\!  \frac{\beta^2}{2} \sum_{t=1}^{T} {    \mathbb{E} \| \Sigma_{t+1} \|_2 \| \widehat{g}_t \|_2^2 } }{T \beta } \\
    & \le \frac{\frac{1}{2}  R +  \sum_{t=1}^{T} {\beta_t  \mathbb{E}\left< G_t,  2{\boldsymbol{\Sigma}}_{t} \right> }  +  \frac{\beta^2}{2}\mathcal{B} \sum_{t=1}^{T} {    \mathbb{E} \| \Sigma_{t+1} \|_2  } }{T \beta } 
      \label{iMinbound}
\end{align}

From Eq.(\ref{SigmaRate}), we know that
\begin{align}
       \|  \Sigma_{t+1} \|_2  \le  \frac{1}{2t\beta b} 
\end{align}
Since $\sum_{t=1}^{T}{\frac{1}{t}}\le 1+ \log T$, we know that $\sum_{t=1}^{T} {    \mathbb{E} \| \Sigma_{t+1} \|_2} \le \| \Sigma_1\|_2 + \frac{1+\log T}{2\beta b} \le \rho  + \frac{1+\log T}{2\beta b}$

In addition, from Lemma~\ref{innerBound}, we know that 
\begin{align}
     \sum_{t=1}^{T} {\beta_t\mathbb{E}\left< G_t,  2{\boldsymbol{\Sigma}}_{t} \right> } \le  2 B_1 \left ( \beta \|\Sigma_1 \|_2 +    \frac{1 + \log T}{2b} \right) \le 2 B_1  \left ( \beta \rho + \frac{1 + \log T}{2b} \right) 
\end{align}

Plug all them into (\ref{iMinbound}), we can get
\begin{align}
    \frac{1}{T}\left[  \sum_{t=1}^{T} {\mathbb{E}\bar{J}({\boldsymbol{\theta}}_t) }  \right] - \bar{J}({\boldsymbol{\theta}}^*) & \le \frac{\frac{1}{2}R +  2 B_1  \left ( \beta \rho + \frac{1 + \log T}{2b} \right) + \frac{\beta^2\rho\mathcal{B}}{2}  + \frac{(1+ \log T)\beta \mathcal{B}}{4b} }{ T\beta} \\
    & =\frac{ 2bR +  8 B_1 b\beta \rho  +    4 B_1 (1 + \log T) + 2 b \beta^2 \rho \mathcal{B}  + (1+ \log T) \beta \mathcal{B} }{ 4\beta b T}\\
    & =  \frac{ 2bR + 2b\beta\rho(4B_1 + \beta\mathcal{B}) +   4 B_1 (1 + \log T)   + (1+ \log T) \beta \mathcal{B} }{ 4\beta b T}   \\
   & = \mathcal{O}\left( \frac{\log T}{T} \right)
\end{align}
Since $f(\boldsymbol{x})$ is a convex function, we know $f(\boldsymbol{\mu}) \le  \bar{J}({\boldsymbol{\mu}},\Sigma) = \mathbb{E}[f(\boldsymbol{x})]$.  Note that  for an optimum point ${\boldsymbol{\mu}}^*$  of $f(\boldsymbol{x})$, ${\boldsymbol{\theta}^*}=({\boldsymbol{\mu}^*},\boldsymbol{0})$ is an optimum of  $\bar{J}({\boldsymbol{\theta}})$, i.e., $f(\boldsymbol{\mu}^*) =\bar{J}({\boldsymbol{\theta}^*})$. Thus, we can obtain that 
\begin{align}
   \frac{1}{T}\left[  \sum_{t=1}^{T} {\mathbb{E}f({\boldsymbol{\mu}}_t) }  \right] - f({\boldsymbol{\mu}}^*)  & \le    \frac{1}{T}\left[  \sum_{t=1}^{T} {\mathbb{E}\bar{J}({\boldsymbol{\theta}}_t) }  \right] \!-\! \bar{J}({\boldsymbol{\theta}}^*) \\ & \le  \frac{ 2bR + 2b\beta\rho(4B_1 + \beta\mathcal{B}) +   4 B_1 (1 + \log T)   + (1+ \log T) \beta \mathcal{B} }{ 4\beta b T}  \\
   & \le \mathcal{O}\left( \frac{\log T}{T} \right)
\end{align}

\end{proof}

\section{Proof of Theorem 5}

\begin{Lemma}
\label{boundZ}
For a $L$-Lipschitz continuous   black box function $f(\boldsymbol{x})$. Let $\widehat{G}_t$ be positive semi-definite matrix such that  $  b I \preceq \widehat{G}_t $ with $b >0$. Suppose  the gradient estimator $\widehat{g}_t$ is defined as 
\begin{align}
    \widehat{g}_t =\Sigma_t^{-\frac{1}{2}}\boldsymbol{z} \left( {f}(\boldsymbol{\mu}_t+\Sigma_t^{\frac{1}{2}}\boldsymbol{z}) -{f}(\boldsymbol{\mu}_t) \right)
\end{align}
where $\boldsymbol{z} \sim \mathcal{N}(\boldsymbol{0},I)$.
Then $\widehat{g}_t$ is an unbiased estimator of $ \nabla _ {\boldsymbol{\mu}} \mathbb{E}_{p}[f({\boldsymbol{x}})] $   and  $ \mathbb{E}  \| \Sigma_{t+1} \|_2 \| \widehat{g}_t \|^2_{2} \le  L^2 \| \Sigma_{t}  \|_2 (d+4)^2  $
\end{Lemma}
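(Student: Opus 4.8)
The plan is to establish the two assertions separately: first the unbiasedness $\mathbb{E}[\widehat{g}_t]=\nabla_{\boldsymbol{\mu}}\mathbb{E}_p[f(\boldsymbol{x})]$, then the second-moment bound. Throughout I treat $\boldsymbol{\mu}_t,\Sigma_t$ as fixed (measurable with respect to the history before step $t$, with $\Sigma_t\in\mathcal{S}^{++}$ so that $\Sigma_t^{-1/2}$ exists) and take expectations only over the fresh sample $\boldsymbol{z}\sim\mathcal{N}(\boldsymbol{0},I)$.

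For unbiasedness I would start from the exact gradient formula of Theorem 2, $\nabla_{\boldsymbol{\mu}}\mathbb{E}_p[f(\boldsymbol{x})]=\mathbb{E}_p[\Sigma_t^{-1}(\boldsymbol{x}-\boldsymbol{\mu}_t)f(\boldsymbol{x})]$, and apply the reparameterization $\boldsymbol{x}=\boldsymbol{\mu}_t+\Sigma_t^{1/2}\boldsymbol{z}$. Then $\Sigma_t^{-1}(\boldsymbol{x}-\boldsymbol{\mu}_t)=\Sigma_t^{-1/2}\boldsymbol{z}$, so the gradient equals $\mathbb{E}_{\boldsymbol{z}}[\Sigma_t^{-1/2}\boldsymbol{z}\,f(\boldsymbol{\mu}_t+\Sigma_t^{1/2}\boldsymbol{z})]$. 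Comparing with $\widehat{g}_t$, the only extra contribution is the baseline term $-\Sigma_t^{-1/2}\boldsymbol{z}\,f(\boldsymbol{\mu}_t)$, whose expectation is $-\Sigma_t^{-1/2}\mathbb{E}[\boldsymbol{z}]\,f(\boldsymbol{\mu}_t)=\boldsymbol{0}$ because $\mathbb{E}[\boldsymbol{z}]=\boldsymbol{0}$. Hence subtracting $f(\boldsymbol{\mu}_t)$ is a pure (variance-reducing) control variate that does not bias the estimator.

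For the second-moment bound I would proceed in three steps. (i) \emph{Monotonicity of the covariance:} from the update $\Sigma_{t+1}^{-1}=\Sigma_t^{-1}+2\beta_t\widehat{G}_t$ and $\widehat{G}_t\succeq bI\succ0$ we obtain $\Sigma_{t+1}^{-1}\succeq\Sigma_t^{-1}$, hence $\Sigma_{t+1}\preceq\Sigma_t$ and $\|\Sigma_{t+1}\|_2\le\|\Sigma_t\|_2$ almost surely; this holds even if $\widehat{G}_t$ shares randomness with $\widehat{g}_t$, and it lets me pull the deterministic factor $\|\Sigma_t\|_2$ outside the expectation. (ii) \emph{Lipschitz control of the increment:} writing $\|\widehat{g}_t\|_2^2=(\boldsymbol{z}^\top\Sigma_t^{-1}\boldsymbol{z})\,\big(f(\boldsymbol{\mu}_t+\Sigma_t^{1/2}\boldsymbol{z})-f(\boldsymbol{\mu}_t)\big)^2$ and using $L$-Lipschitzness, $\big(f(\boldsymbol{\mu}_t+\Sigma_t^{1/2}\boldsymbol{z})-f(\boldsymbol{\mu}_t)\big)^2\le L^2\|\Sigma_t^{1/2}\boldsymbol{z}\|_2^2=L^2\,\boldsymbol{z}^\top\Sigma_t\boldsymbol{z}$. (iii) \emph{Gaussian fourth moment:} in the isotropic case $\Sigma_t=\sigma_t I$ the two quadratic forms collapse, $(\boldsymbol{z}^\top\Sigma_t^{-1}\boldsymbol{z})(\boldsymbol{z}^\top\Sigma_t\boldsymbol{z})=\|\boldsymbol{z}\|_2^4$, so $\|\Sigma_{t+1}\|_2\|\widehat{g}_t\|_2^2\le L^2\|\Sigma_t\|_2\|\boldsymbol{z}\|_2^4$, and taking expectations with the standard bound $\mathbb{E}\|\boldsymbol{z}\|_2^4\le(d+4)^2$ yields exactly $L^2\|\Sigma_t\|_2(d+4)^2$.

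The main obstacle is the coupling between the $\Sigma_t^{-1/2}$ appearing in $\widehat{g}_t$ and the $\Sigma_t^{1/2}$ appearing inside $f$ for a general anisotropic covariance. There the relevant quantity is $\mathbb{E}\big[(\boldsymbol{z}^\top\Sigma_t^{-1}\boldsymbol{z})(\boldsymbol{z}^\top\Sigma_t\boldsymbol{z})\big]=\mathrm{tr}(\Sigma_t^{-1})\,\mathrm{tr}(\Sigma_t)+2d$ (by Isserlis/Wick), which in general exceeds $(d+4)^2$ and carries a condition-number dependence that the factor $\|\Sigma_{t+1}\|_2\le\|\Sigma_t\|_2$ does not remove. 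Consequently the clean bound $L^2\|\Sigma_t\|_2(d+4)^2$ is precisely the isotropic statement (the Fast-INGO regime), and the delicate point in any full-matrix version would be either to restrict to $\Sigma_t\propto I$ or to reweight the Lipschitz estimate so that the $\Sigma_t^{\pm1/2}$ factors genuinely cancel before invoking $\mathbb{E}\|\boldsymbol{z}\|_2^4\le(d+4)^2$.
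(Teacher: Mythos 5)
Your unbiasedness argument coincides with the paper's: reparameterize $\boldsymbol{x}=\boldsymbol{\mu}_t+\Sigma_t^{1/2}\boldsymbol{z}$ so that $\mathbb{E}_{\boldsymbol{z}}[\Sigma_t^{-1/2}\boldsymbol{z}\,f(\boldsymbol{\mu}_t+\Sigma_t^{1/2}\boldsymbol{z})]=\mathbb{E}_{p}[\Sigma_t^{-1}(\boldsymbol{x}-\boldsymbol{\mu}_t)f(\boldsymbol{x})]$, invoke Theorem 2, and note the baseline term vanishes because $\mathbb{E}[\boldsymbol{z}]=\boldsymbol{0}$. For the second moment, your steps (i)--(iii) also track the paper's chain almost line by line: the pointwise Lipschitz bound $\bigl(f(\boldsymbol{\mu}_t+\Sigma_t^{1/2}\boldsymbol{z})-f(\boldsymbol{\mu}_t)\bigr)^2\le L^2\|\Sigma_t^{1/2}\boldsymbol{z}\|_2^2$, the monotonicity $\|\Sigma_{t+1}\|_2\le\|\Sigma_t\|_2$ from $\Sigma_{t+1}^{-1}=\Sigma_t^{-1}+2\beta_t\widehat{G}_t\succeq\Sigma_t^{-1}$, and $\mathbb{E}\|\boldsymbol{z}\|_2^4\le(d+4)^2$.

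The reservation in your last paragraph, however, is exactly where the paper's own proof breaks, and you are right to refuse the cancellation. After reaching $\|\Sigma_{t+1}\|_2\,\|\Sigma_t^{-1}\|_2\,\|\boldsymbol{z}\|_2^2\,L^2\|\Sigma_t^{1/2}\boldsymbol{z}\|_2^2$, the paper drops the product $\|\Sigma_{t+1}\|_2\|\Sigma_t^{-1}\|_2$ citing only $\|\Sigma_{t+1}\|_2\le\|\Sigma_t\|_2$; but that yields $\|\Sigma_{t+1}\|_2\|\Sigma_t^{-1}\|_2\le\|\Sigma_t\|_2\|\Sigma_t^{-1}\|_2=\kappa(\Sigma_t)$, the condition number, which equals $1$ only when $\Sigma_t$ is a multiple of the identity. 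Your Wick identity $\mathbb{E}\bigl[(\boldsymbol{z}^\top\Sigma_t^{-1}\boldsymbol{z})(\boldsymbol{z}^\top\Sigma_t\boldsymbol{z})\bigr]=\mathrm{tr}(\Sigma_t^{-1})\mathrm{tr}(\Sigma_t)+2d\ge d^2+2d$ (equality iff isotropic) shows the obstruction is intrinsic, and it can be sharpened to an outright counterexample to the lemma as stated: take $f(\boldsymbol{x})=L\,\boldsymbol{v}^\top\boldsymbol{x}$ with $\|\boldsymbol{v}\|_2=1$, for which the Lipschitz step is an equality, giving $\mathbb{E}\|\widehat{g}_t\|_2^2=L^2\bigl(\mathrm{tr}(\Sigma_t^{-1})\,\boldsymbol{v}^\top\Sigma_t\boldsymbol{v}+2\bigr)$; with $\Sigma_t=\mathrm{diag}(1,\epsilon,\dots,\epsilon)$, $\boldsymbol{v}=\boldsymbol{e}_1$, $\widehat{G}_t=bI$ and $\beta b$ small, the left-hand side is of order $L^2(d-1)/\epsilon$ while the claimed bound is $L^2(d+4)^2$, since $\|\Sigma_t\|_2=1$. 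So the clean bound $L^2\|\Sigma_t\|_2(d+4)^2$ is, as you say, genuinely the isotropic statement — precisely the regime of the paper's variance-reduction lemma where $\Sigma_t=\sigma_t^2\boldsymbol{I}$ is assumed — and the full-matrix version needs either an extra factor of order $\mathrm{tr}(\Sigma_t^{-1})\|\Sigma_t\|_2$ (or $\kappa(\Sigma_t)$) in the bound, or an additional assumption such as $\lambda_{\min}(\Sigma_{t+1}^{-1})\ge\lambda_{\max}(\Sigma_t^{-1})$. Your proposal is correct in everything it commits to; the gap you flagged lies in the paper's proof, not yours, and it propagates to the constant in Theorem 5 as stated.
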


\begin{proof}
We first show the unbiased estimator.
\begin{align}
    \mathbb{E}[\widehat{g}_t] 
    & = \mathbb{E} \left[ \Sigma_t^{-\frac{1}{2}}\boldsymbol{z} {f}(\boldsymbol{\mu}_t+\Sigma_t^{\frac{1}{2}}\boldsymbol{z}) \right]    - \mathbb{E} \left[ \Sigma_t^{-\frac{1}{2}}\boldsymbol{z} {f}(\boldsymbol{\mu}_t) \right]\\
    & =   \mathbb{E} \left[ \Sigma_t^{-\frac{1}{2}}\boldsymbol{z} f(\boldsymbol{\mu}_t+\Sigma_t^{\frac{1}{2}}\boldsymbol{z}) \right]    \\
    & =  \mathbb{E}_{p(\boldsymbol{\mu}_t,\Sigma_t)} \left[  \Sigma_t^{-1}(\boldsymbol{x} - \boldsymbol{\mu}_t)  f(\boldsymbol{x})   \right] \\
    & = \nabla _ {\boldsymbol{\mu}} \mathbb{E}_{p}[f({\boldsymbol{x}})]
\end{align}
The last equality holds by   Theorem~\ref{theB}.

Now, we prove the bound of $\mathbb{E}_{p}\| \Sigma_{t+1} \|_2 \| \widehat{g}_t \|^2_2$. 
\begin{align}
  \| \Sigma_{t+1} \|_2 \;  \| \widehat{g}_t \|^2_2  & =  \| \Sigma_{t+1} \|_2 \; \| \Sigma_{t}^{-\frac{1}{2}} \boldsymbol{z} \|^2_2\left( {f}(\boldsymbol{\mu}_t+\Sigma_t^{\frac{1}{2}}\boldsymbol{z}) -{f}(\boldsymbol{\mu}_t) \right)^2   \\
  &  \le  \| \Sigma_{t+1} \|_2 \; \| \Sigma_{t}^{-\frac{1}{2}} \boldsymbol{z} \|^2_2 \; L^2  \| \Sigma_{t}^{\frac{1}{2}} \boldsymbol{z} \|^2_2 \\
  & \le \| \Sigma_{t+1} \|_2 \;  \| \Sigma_{t}^{ - \frac{1}{2} } \|_2^2  \; \|  \boldsymbol{z} \|^2_2 \; L^2  \| \Sigma_{t}^{\frac{1}{2}} \boldsymbol{z} \|^2_2  \\
  & = \| \Sigma_{t+1} \|_2 \;  \| \Sigma_{t}^{ -1 } \|_2  \; \|   \boldsymbol{z} \|^2_2 \;\; L^2  \| \Sigma_{t}^{\frac{1}{2}} \boldsymbol{z} \|^2_2
\end{align}
Since $\| \Sigma_{t+1} \|_2  \le   \| \Sigma_{t}^{ } \|_2$ proved in Lemma 4 (below Eq.(\ref{BlackInq}) ), we get that
\begin{align}
    \| \Sigma_{t+1} \|_2 \;  \| \widehat{g}_t \|^2_2  \le   \|  \boldsymbol{z} \|^2_2 \;\;  L^2  \| \Sigma_{t}^{\frac{1}{2}} \boldsymbol{z} \|^2_2  \le L^2  \| \Sigma_{t}  \|_2 \; \|  \boldsymbol{z} \|^4_2
\end{align}
Since $\mathbb{E}\|  \boldsymbol{z} \|^4_2 \le (d+4)^2$  shown in ~\cite{nesterov2017random}, we can obtain that 
\begin{align}
  \mathbb{E}  \| \Sigma_{t+1} \|_2 \| \widehat{g}_t \|^2_{2} \le  L^2 \| \Sigma_{t}  \|_2 (d+4)^2  
\end{align}

\end{proof}

\begin{theorem*}
For a $L$-Lipschitz continuous convex  black box function $f(\boldsymbol{x})$,  define $\bar{J}({\boldsymbol{\theta}}):=\mathbb{E}_{p({\boldsymbol{x}};{\boldsymbol{\theta}})}[f({\boldsymbol{x}})]$ 
 for Guassian distribution with parameter ${\boldsymbol{\theta} } := \{{\boldsymbol{\mu}} , \Sigma^{\frac{1}{2}}\} \in {\boldsymbol{\Theta} } $ and  ${\boldsymbol{\Theta} }: = \{{\boldsymbol{\mu}} , \Sigma^{\frac{1}{2}}  \big|\; {\boldsymbol{\mu}} \in \mathcal{R}^d ,  \Sigma \in \mathcal{S}^{+} \}   $. Suppose $\bar{J}({\boldsymbol{\theta}})$ be  $\gamma$-strongly convex. Let $\widehat{G}_t$ be positive semi-definite matrix such that  $  b  \boldsymbol{I}  \preceq \widehat{G}_t \preceq \frac{\gamma}{2} \boldsymbol{I}$.  Suppose $\Sigma_1 \in \mathcal{S}^{++} $ and $\|\Sigma_1 \|_2 \le \rho$. Assume furthermore  $ \| \nabla _ {\Sigma = \Sigma_t}\bar{J} \|_{tr} \le B_1$ and $ \| {\boldsymbol{\mu}}^* - {\boldsymbol{\mu}}_{1}    \|^2_{\Sigma_{1}^{-1}} \le R$,   . Set $\beta_t = \beta$ and employ estimator $\widehat{g}_t$ in Eq.(\ref{singleg}), then Algorithm~\ref{GF} can achieve
 \begin{align}
 & \frac{1}{T}\left[  \sum_{t=1}^{T} {\mathbb{E}  f({\boldsymbol{\mu}}_t) }  \right] - f({\boldsymbol{\mu}}^*) \\ & \le    \frac{ 2bR + 2b\beta\rho(4B_1 + 2\beta L^2(d+4)^2) +   4 B_1 (1 + \log T)   + (1+ \log T) \beta L^2(d+4)^2 }{ 4\beta b T} \\
   &   = \mathcal{O} \left( \frac{d^2 \log T}{T} \right)
 \end{align}
\end{theorem*}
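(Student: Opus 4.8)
The plan is to reuse almost verbatim the machinery behind Theorem~4 and to replace only the generic variance bound by the dimension-dependent bound tailored to the zeroth-order estimator in Eq.(\ref{singleg}). Concretely, I would first observe that the per-step inequality established in the proof of Theorem~4 (the culmination of its Lemmas) holds for \emph{any} unbiased $\widehat{g}_t$ together with $bI\preceq\widehat{G}_t\preceq\frac{\gamma}{2}I$ and $\gamma$-strongly convex $\bar{J}$, namely
\begin{equation}
\frac{1}{2}\mathbb{E}\|\boldsymbol{\mu}^*-\boldsymbol{\mu}_{t+1}\|^2_{\Sigma_{t+1}^{-1}} \le \frac{1}{2}\mathbb{E}\|\boldsymbol{\mu}^*-\boldsymbol{\mu}_{t}\|^2_{\Sigma_{t}^{-1}} + \beta_t\mathbb{E}(\bar{J}(\boldsymbol{\theta}^*)-\bar{J}(\boldsymbol{\theta}_t)) + \beta_t\mathbb{E}\langle G_t,2\Sigma_t\rangle + \frac{\beta_t^2}{2}\mathbb{E}\|\Sigma_{t+1}\|_2\|\widehat{g}_t\|_2^2 .
\end{equation}
Since Lemma~\ref{boundZ} certifies that the estimator $\widehat{g}_t$ from Eq.(\ref{singleg}) is unbiased, this inequality applies directly to Algorithm~\ref{GF} run with that estimator.

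Next I would telescope this inequality from $t=1$ to $T$ with the constant step-size $\beta_t=\beta$, drop the nonnegative terminal distance term, and divide by $T\beta$, exactly as in the proof of Theorem~4. This leaves three ingredients on the right-hand side: the initial distance $\tfrac12 R$, the sum $\sum_t\beta\,\mathbb{E}\langle G_t,2\Sigma_t\rangle$, and the variance sum $\tfrac{\beta^2}{2}\sum_t\mathbb{E}\|\Sigma_{t+1}\|_2\|\widehat{g}_t\|_2^2$. The middle sum I would bound by Lemma~\ref{innerBound}, giving $2B_1(\beta\rho+(1+\log T)/(2b))$; this part is identical to the general strongly-convex case and needs no change.

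The only genuinely new step is the variance sum, and this is where I expect the main obstacle. Rather than invoking the generic bound $\mathbb{E}\|\widehat{g}_t\|_2^2\le\mathcal{B}$, I would apply Lemma~\ref{boundZ}, whose joint estimate $\mathbb{E}\|\Sigma_{t+1}\|_2\|\widehat{g}_t\|_2^2\le L^2\|\Sigma_t\|_2(d+4)^2$ is the crux. The subtlety it resolves is that $\widehat{g}_t$ carries the factor $\Sigma_t^{-1/2}\boldsymbol{z}$, whose naive second moment blows up as $\Sigma_t$ contracts; one must instead pair it with $\|\Sigma_{t+1}\|_2$, cancel the inverse covariance against the function difference via $L$-Lipschitzness, and then use the eigenvalue monotonicity $\|\Sigma_{t+1}\|_2\le\|\Sigma_t\|_2$ together with the fourth-moment estimate $\mathbb{E}\|\boldsymbol{z}\|_2^4\le(d+4)^2$. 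The factor $(d+4)^2$ is precisely the source of the $d^2$ in the final rate. Combining this with the decay bound $\|\Sigma_{t+1}\|_2\le 1/(2t\beta b)$ and $\sum_{t}1/t\le 1+\log T$ yields $\sum_t\mathbb{E}\|\Sigma_t\|_2\le\rho+(1+\log T)/(2\beta b)$, so the variance sum contributes $\mathcal{O}(d^2\log T)$.

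Finally I would assemble the three bounds into the claimed closed form, after which the dominant term scales like $(d+4)^2\log T / (bT)$, giving the $\mathcal{O}(d^2\log T/T)$ rate. Because $f$ is convex, I would then transfer the guarantee from $\bar{J}$ back to $f$ through Jensen's inequality $f(\boldsymbol{\mu}_t)\le\bar{J}(\boldsymbol{\theta}_t)$ together with $f(\boldsymbol{\mu}^*)=\bar{J}(\boldsymbol{\theta}^*)$ at the degenerate optimum $\boldsymbol{\theta}^*=(\boldsymbol{\mu}^*,\boldsymbol{0})$, which completes the argument.
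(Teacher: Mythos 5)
Your proposal matches the paper's proof essentially step for step: it reuses the per-step inequality and telescoping from Theorem~4, bounds the trace sum via Lemma~\ref{innerBound}, and replaces the generic variance bound $\mathcal{B}$ with the joint estimate $\mathbb{E}\|\Sigma_{t+1}\|_2\|\widehat{g}_t\|_2^2\le L^2\|\Sigma_t\|_2(d+4)^2$ from Lemma~\ref{boundZ}, combined with the decay $\|\Sigma_{t+1}\|_2\le 1/(2t\beta b)$ and the harmonic sum, before transferring from $\bar{J}$ to $f$ by convexity at the degenerate optimum. You also correctly identify the one genuinely new ingredient (pairing $\|\Sigma_{t+1}\|_2$ with the estimator's $\Sigma_t^{-1/2}\boldsymbol{z}$ factor and cancelling via Lipschitzness and the fourth-moment bound), so the argument is correct and coincides with the paper's.
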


\begin{proof}
We are now ready to prove Theorem~\ref{BObound}.

From Lemma~\ref{boundZ}, we know $ \mathbb{E}  \| \Sigma_{t+1} \|_2 \| \widehat{g}_t \|^2_{2} \le  L^2 \| \Sigma_{t}  \|_2 (d+4)^2  $.  Note that $ \|  \Sigma_{t+1} \|_2  \le  \frac{1}{2t\beta b} $ from Eq.(\ref{SigmaRate}), we can obtain that
\begin{align}
    \mathbb{E}  \| \Sigma_{t+1} \|_2 \| \widehat{g}_t \|^2_{2} \le  L^2 \| \Sigma_{t}  \|_2 (d+4)^2 \le \frac{L^2(d+4)^2}{2(t-1)\beta b} 
\end{align}

Plug it into Eq.(\ref{BlackBoundin}), also note that $\| \Sigma_{2} \|_2 \le \| \Sigma_{1} \|_2$, we get that 
\begin{align}
    & \frac{1}{T}\left[  \sum_{t=1}^{T} {\mathbb{E}\bar{J}({\boldsymbol{\theta}}_t) }  \right] - \bar{J}({\boldsymbol{\theta}}^*) \\ & \le \frac{\frac{1}{2}  \mathbb{E}\| {\boldsymbol{\mu}}^* - {\boldsymbol{\mu}}_{1}    \|^2_{\Sigma_{1}^{-1}} +  \sum_{t=1}^{T} {\beta_t \mathbb{E} \left< G_t,  2{\boldsymbol{\Sigma}}_{t} \right> }  + \beta^2 \| \Sigma_1 \|_2 L^2(d+4)^2 + \frac{\beta L^2(d+4)^2}{4b} \sum_{t=1}^{T} { \frac{1}{t} }  }{T \beta } \\
     & \le \frac{\frac{1}{2} R +  \sum_{t=1}^{T} {\beta_t \mathbb{E} \left< G_t,  2{\boldsymbol{\Sigma}}_{t} \right> }  + \beta^2 \| \Sigma_1 \|_2 L^2(d+4)^2 + \frac{\beta L^2(d+4)^2}{4b} (1+\log T)  }{T \beta }
\end{align}

In addition, from Lemma~\ref{innerBound}, we know that 
\begin{align}
     \sum_{t=1}^{T} {\beta_t \mathbb{E}\left< G_t,  2{\boldsymbol{\Sigma}}_{t} \right> } \le  2 B_1 \left ( \beta \|\Sigma_1 \|_2 +    \frac{1 + \log T}{2b} \right) \le 2 B_1  \left ( \beta \rho + \frac{1 + \log T}{2b} \right) 
\end{align}

Then, we can get that
\begin{align}
   &  \frac{1}{T}\left[  \sum_{t=1}^{T} {\mathbb{E}\bar{J}({\boldsymbol{\theta}}_t) }  \right] - \bar{J}({\boldsymbol{\theta}}^*)\\ & \le \frac{\frac{1}{2} R +   2 B_1  \left ( \beta \rho + \frac{1 + \log T}{2b} \right)   + \beta^2 \rho L^2(d+4)^2 + \frac{\beta L^2(d+4)^2}{4b} (1+\log T)  }{T \beta } \\
     & =    \frac{ 2bR + 2b\beta\rho(4B_1 + 2\beta L^2(d+4)^2) +   4 B_1 (1 + \log T)   + (1+ \log T) \beta L^2(d+4)^2 }{ 4\beta b T} \\
     & = \mathcal{O} \left( \frac{d^2 \log T}{T} \right)
\end{align}

Since $f(\boldsymbol{x})$ is a convex function, we know that
\begin{align}
 &  \frac{1}{T}\left[  \sum_{t=1}^{T} {\mathbb{E}f({\boldsymbol{\mu}}_t) }  \right] - f({\boldsymbol{\mu}}^*) \\ &  \le    \frac{1}{T}\left[  \sum_{t=1}^{T} {\bar{J}({\boldsymbol{\theta}}_t) }  \right] - \bar{J}({\boldsymbol{\theta}}^*)   \\
     & \le    \frac{ 2bR + 2b\beta\rho(4B_1 + 2\beta L^2(d+4)^2) +   4 B_1 (1 + \log T)   + (1+ \log T) \beta L^2(d+4)^2 }{ 4\beta b T} \\
     & = \mathcal{O} \left( \frac{d^2 \log T}{T} \right)
\end{align}

\end{proof}

\section{Variance Reduction}

\begin{Lemma}
\label{IboundZ}
For a $L$-Lipschitz continuous   black box function $f(\boldsymbol{x})$. Suppose $\Sigma_t =  \sigma_t^{2} \boldsymbol{I}$  with $\sigma_t >0$ for $t \in \{1,\cdots,T\}$. Suppose  the gradient estimator $\widehat{g}_t$ is defined as 
\begin{align}
\label{orthgonoalg}
    \widehat{g}_t =\frac{1}{N}\sum_{i=1}^{N} {\Sigma_t^{-\frac{1}{2}}\boldsymbol{z}_i \left( {f}(\boldsymbol{\mu}_t+\Sigma_t^{\frac{1}{2}}\boldsymbol{z}_i) -{f}(\boldsymbol{\mu}_t) \right) }
\end{align}
where $\boldsymbol{Z}=[\boldsymbol{z}_1,\cdots,\boldsymbol{z}_N]$ has marginal distribution  $ \mathcal{N}(\boldsymbol{0},\boldsymbol{I})$ and $\boldsymbol{Z}^\top \boldsymbol{Z} = \boldsymbol{I}$.
Then $\widehat{g}_t$ is an unbiased estimator of $ \nabla _ {\boldsymbol{\mu}} \mathbb{E}_{p}[f({\boldsymbol{x}})] $   and  $ \mathbb{E}_{\boldsymbol{Z}}  \| \Sigma_{t+1} \|_2 \| \widehat{g}_t \|^2_{2} \le \frac{ \sigma^2_{t+1} L^2(d+4)^2}{N}   $ for $N \le d$.
\end{Lemma}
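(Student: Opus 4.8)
The plan is to mirror the two-part structure of the single-sample result in Lemma~\ref{boundZ}, upgrading only the second-moment estimate so that it exploits the orthogonality of the batch. For unbiasedness, I would invoke linearity of expectation together with the hypothesis that each column $\boldsymbol{z}_i$ of $\boldsymbol{Z}$ has marginal law $\mathcal{N}(\boldsymbol{0},\boldsymbol{I})$. Then every summand $\Sigma_t^{-\frac{1}{2}}\boldsymbol{z}_i\big(f(\boldsymbol{\mu}_t+\Sigma_t^{\frac{1}{2}}\boldsymbol{z}_i)-f(\boldsymbol{\mu}_t)\big)$ is, by the computation already carried out in Lemma~\ref{boundZ} (which rests only on Theorem~\ref{theB}), an unbiased estimator of $\nabla_{\boldsymbol{\mu}}\mathbb{E}_{p}[f(\boldsymbol{x})]$, so averaging $N$ of them preserves unbiasedness. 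I would emphasize that this step uses only the \emph{marginal} law of each $\boldsymbol{z}_i$, not the joint distribution of $\boldsymbol{Z}$.

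For the second moment I would specialize to $\Sigma_t=\sigma_t^2\boldsymbol{I}$, so that $\Sigma_t^{-\frac{1}{2}}=\sigma_t^{-1}\boldsymbol{I}$, write $g_i:=f(\boldsymbol{\mu}_t+\sigma_t\boldsymbol{z}_i)-f(\boldsymbol{\mu}_t)$, and expand the squared norm as a double sum,
\[ \|\widehat{g}_t\|_2^2 = \frac{1}{N^2\sigma_t^2}\sum_{i=1}^{N}\sum_{j=1}^{N} g_i g_j\, \boldsymbol{z}_i^\top\boldsymbol{z}_j . \]
The decisive step is that the orthogonality of the batch, $\boldsymbol{z}_i^\top\boldsymbol{z}_j=0$ for $i\neq j$ (the isotropic instance of the conjugacy condition $\boldsymbol{z}_i^\top\Sigma_t^{-1}\boldsymbol{z}_j=0$), annihilates every off-diagonal term \emph{pointwise}, not merely in expectation, leaving the diagonal sum $\frac{1}{N^2\sigma_t^2}\sum_{i}g_i^2\|\boldsymbol{z}_i\|_2^2$. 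Next I would use $L$-Lipschitzness in the form $|g_i|\le L\sigma_t\|\boldsymbol{z}_i\|_2$ to get $\|\widehat{g}_t\|_2^2\le \frac{L^2}{N^2}\sum_i\|\boldsymbol{z}_i\|_2^4$, multiply by $\|\Sigma_{t+1}\|_2=\sigma_{t+1}^2$ (a deterministic scalar under the standing assumption $\Sigma_{t+1}=\sigma_{t+1}^2\boldsymbol{I}$, hence it pulls out of the expectation), take expectations, and apply the fourth-moment bound $\mathbb{E}\|\boldsymbol{z}_i\|_2^4\le(d+4)^2$ from~\cite{nesterov2017random} to each marginal. Since there are exactly $N$ surviving diagonal terms, the $N^2$ in the denominator produces the claimed $1/N$ factor, giving $\mathbb{E}_{\boldsymbol{Z}}\|\Sigma_{t+1}\|_2\|\widehat{g}_t\|_2^2\le \frac{\sigma_{t+1}^2 L^2(d+4)^2}{N}$. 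The restriction $N\le d$ enters only to guarantee that $N$ mutually orthogonal directions exist in $\mathbb{R}^d$.

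The main obstacle is conceptual rather than computational: recognizing that orthogonality forces the Gram matrix of the directions to be diagonal, so the cross terms vanish identically and the variance of the averaged estimator genuinely scales like $1/N$ — this is exactly what a naive i.i.d.\ batch cannot deliver. The only point requiring care is that the fourth-moment bound is applied to each $\boldsymbol{z}_i$ separately through its Gaussian marginal, so no joint distributional assumption on $\boldsymbol{Z}$ beyond orthogonality is needed; and that the Lipschitz bound is used to trade the factor $g_i^2$ for $\|\boldsymbol{z}_i\|_2^2$, which combines with the existing $\|\boldsymbol{z}_i\|_2^2$ to yield the controllable fourth moment.
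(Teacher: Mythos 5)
Your proposal matches the paper's own proof essentially step for step: unbiasedness via linearity of expectation and the Gaussian marginals of the columns (reducing to the single-sample Lemma~\ref{boundZ} computation), then the pointwise vanishing of the off-diagonal terms $g_i g_j \boldsymbol{z}_i^\top\boldsymbol{z}_j$ by orthogonality, the Lipschitz bound $|g_i|\le L\sigma_t\|\boldsymbol{z}_i\|_2$, and the marginal fourth-moment bound $\mathbb{E}\|\boldsymbol{z}_i\|_2^4\le(d+4)^2$ yielding the $1/N$ factor from $N$ surviving diagonal terms over $N^2$. Your added remarks (that $\sigma_{t+1}^2$ is deterministic and pulls out of the expectation, and that $N\le d$ is needed only for the existence of orthogonal directions) are accurate clarifications of points the paper leaves implicit, not deviations.
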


\begin{proof}
We first show the unbiased estimator.
\begin{align}
    \mathbb{E}_{\boldsymbol{Z}}[\widehat{g}_t] & = \mathbb{E}_{\boldsymbol{Z}}\left[\frac{1}{N}\sum_{i=1}^{N} {\Sigma_t^{-\frac{1}{2}}\boldsymbol{z}_i \left( {f}(\boldsymbol{\mu}_t+\Sigma_t^{\frac{1}{2}}\boldsymbol{z}_i) -{f}(\boldsymbol{\mu}_t) \right) }\right]  \\
    & = \frac{1}{N}\sum_{i=1}^{N}{ \mathbb{E}_{\boldsymbol{Z}} \left[ \Sigma_t^{-\frac{1}{2}}\boldsymbol{z}_i \left( {f}(\boldsymbol{\mu}_t+\Sigma_t^{\frac{1}{2}}\boldsymbol{z}_i) -{f}(\boldsymbol{\mu}_t) \right)  \right] } \\
    & = \mathbb{E}_{\boldsymbol{z}} \left[ \Sigma_t^{-\frac{1}{2}}\boldsymbol{z} \left( {f}(\boldsymbol{\mu}_t+\Sigma_t^{\frac{1}{2}}\boldsymbol{z}) -{f}(\boldsymbol{\mu}_t) \right)  \right]  \\
    & = \mathbb{E}_{\boldsymbol{z}} \left[ \Sigma_t^{-\frac{1}{2}}\boldsymbol{z} {f}(\boldsymbol{\mu}_t+\Sigma_t^{\frac{1}{2}}\boldsymbol{z}) \right]    - \mathbb{E}_{\boldsymbol{z}} \left[ \Sigma_t^{-\frac{1}{2}}\boldsymbol{z} {f}(\boldsymbol{\mu}_t) \right]\\
    & =   \mathbb{E} \left[ \Sigma_t^{-\frac{1}{2}}\boldsymbol{z} f(\boldsymbol{\mu}_t+\Sigma_t^{\frac{1}{2}}\boldsymbol{z}) \right]    \\
    & =  \mathbb{E}_{p(\boldsymbol{\mu}_t,\Sigma_t)} \left[  \Sigma_t^{-1}(\boldsymbol{x} - \boldsymbol{\mu}_t)  f(\boldsymbol{x})   \right] \\
    & = \nabla _ {\boldsymbol{\mu}} \mathbb{E}_{p}[f({\boldsymbol{x}})]
\end{align}
The last equality holds by   Theorem~\ref{theB}.

Now, we prove the bound of $\mathbb{E}_{p}\| \Sigma_{t+1} \|_2 \| \widehat{g}_t \|^2_2$. 
\begin{align}
&  \;\;\;\; \| \Sigma_{t+1} \|_2 \;  \| \widehat{g}_t \|^2_2 \\ & =   \sigma^2_{t+1} \left\|\frac{1}{N}\sum_{i=1}^{N} {\sigma_t^{-1}\boldsymbol{z}_i \left( {f}(\boldsymbol{\mu}_t+\sigma_t^{}\boldsymbol{z}_i) -{f}(\boldsymbol{\mu}_t) \right) }  \right\|_2^2 \\
  & =  \frac{\sigma^2_{t+1}}{N^2}\sum_{i=1}^{N} { \left\| {\sigma_t^{-1}\boldsymbol{z}_i \left( {f}(\boldsymbol{\mu}_t+\sigma_t^{}\boldsymbol{z}_i) -{f}(\boldsymbol{\mu}_t) \right) }  \right\|_2^2 } \\ &  + \frac{\sigma^2_{t+1}\sigma_t^{-2}}{N^2}\sum_{i=1}^{N}{\sum_{i\ne j}^N{ \boldsymbol{z}_i^\top \boldsymbol{z}_j ( {f}(\boldsymbol{\mu}_t+\sigma_t^{}\boldsymbol{z}_i) \!-\!{f}(\boldsymbol{\mu}_t) )( {f}(\boldsymbol{\mu}_t+\sigma_t^{}\boldsymbol{z}_j) \!-\!{f}(\boldsymbol{\mu}_t) )   }}  \\
  & =  \frac{\sigma^2_{t+1}}{N^2}\sum_{i=1}^{N} { \left\| {\sigma_t^{-1}\boldsymbol{z}_i \left( {f}(\boldsymbol{\mu}_t+\sigma_t^{}\boldsymbol{z}_i) -{f}(\boldsymbol{\mu}_t) \right) }  \right\|_2^2 }   \\
  & \le \frac{\sigma^2_{t+1}\sigma_t^{-2} \sigma_t^{2} L^2}{N^2}\sum_{i=1}^{N} {\|\boldsymbol{z}_i\|_2^4}   = \frac{ \sigma^2_{t+1} L^2}{N^2}\sum_{i=1}^{N} {\|\boldsymbol{z}_i\|_2^4}
\end{align}
Thus,we know that
\begin{align}
    \mathbb{E}_{\boldsymbol{Z}}\left[ \| \Sigma_{t+1} \|_2   \| \widehat{g}_t \|^2_2\right] \le \frac{ \sigma^2_{t+1} L^2}{N^2}\mathbb{E}_{\boldsymbol{Z}}\sum_{i=1}^{N} {\|\boldsymbol{z}_i\|_2^4} = \frac{ \sigma^2_{t+1} L^2}{N} \mathbb{E}_{\boldsymbol{z}}[\|\boldsymbol{z}\|_2^4]
\end{align}
Since $\mathbb{E}_{\boldsymbol{z}}\|  \boldsymbol{z} \|^4_2 \le (d+4)^2$  shown in ~\cite{nesterov2017random}, we can obtain that 
\begin{align}
  \mathbb{E}_{\boldsymbol{Z}}  \| \Sigma_{t+1} \|_2 \| \widehat{g}_t \|^2_{2} \le \frac{ \sigma^2_{t+1} L^2(d+4)^2}{N}  
\end{align}

\end{proof}

\begin{theorem*}
For a $L$-Lipschitz continuous convex  black box function $f(\boldsymbol{x})$,  define $\bar{J}({\boldsymbol{\theta}}):=\mathbb{E}_{p({\boldsymbol{x}};{\boldsymbol{\theta}})}[f({\boldsymbol{x}})]$ 
 for Guassian distribution with parameter ${\boldsymbol{\theta} } := \{{\boldsymbol{\mu}} , \sigma_t\boldsymbol{I} \in {\boldsymbol{\Theta} } $ and  ${\boldsymbol{\Theta} }: = \{{\boldsymbol{\mu}} , \Sigma^{\frac{1}{2}}  \big|\; {\boldsymbol{\mu}} \in \mathcal{R}^d ,  \Sigma \in \mathcal{S}^{+} \}   $. Suppose $\bar{J}({\boldsymbol{\theta}})$ be  $\gamma$-strongly convex. Let $\widehat{G}_t = b\boldsymbol{I}$ with $b\le \frac{\gamma}{2}$.  Suppose $\|\Sigma_1 \|_2  \le \rho = \frac{1}{d} $. Assume furthermore  $ \| \nabla _ {\Sigma = \Sigma_t}\bar{J} \|_{tr} \le B_1$ and $ \| {\boldsymbol{\mu}}^* - {\boldsymbol{\mu}}_{1}    \|^2_{\Sigma_{1}^{-1}} \le R$,   . Set $\beta_t = \beta$ and employ orthogonal estimator $\widehat{g}_t$ in Eq.(\ref{orthgonoalg}) with $N=d$, then Algorithm~\ref{GF} can achieve
 \begin{align}
 & \frac{1}{T}\left[  \sum_{t=1}^{T} {\mathbb{E}f({\boldsymbol{\mu}}_t) }  \right] - f({\boldsymbol{\mu}}^*) \\ & \le    \frac{ 2bR + 2b\beta(4B_1/d + 2\beta L^2(d+4)^2/d) +   4 B_1 (1 + \log T)   + (1+ \log T) \beta L^2(d+4)^2/d }{ 4\beta b T} \\
  &   = \mathcal{O} \left( \frac{d \log T}{T} \right)
 \end{align}
\end{theorem*}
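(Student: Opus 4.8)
The plan is to mirror the proof of Theorem~\ref{BObound} almost line for line, the only change being that the single-sample variance bound of Lemma~\ref{boundZ} is replaced by the variance-reduced bound of Lemma~\ref{IboundZ}, and that the isotropic structure lets the factor $1/N$ with $N=d$ together with the choice $\rho=1/d$ propagate through every estimate. First I would check that all the structural hypotheses needed downstream actually hold. Since $\widehat{G}_t=b\boldsymbol{I}$ and $\Sigma_1=\sigma_1^2\boldsymbol{I}$, the update rule $\Sigma_{t+1}^{-1}=\Sigma_t^{-1}+2\beta b\boldsymbol{I}$ preserves isotropy, giving $\Sigma_t=\sigma_t^2\boldsymbol{I}$ with $\sigma_{t+1}^{-2}=\sigma_t^{-2}+2\beta b$ for all $t$; this is exactly what Lemma~\ref{IboundZ} requires. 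Moreover $b\boldsymbol{I}\preceq\widehat{G}_t\preceq\frac{\gamma}{2}\boldsymbol{I}$ holds because $b\le\gamma/2$, so the hypotheses of the chain of lemmas leading to the telescoped master inequality Eq.(\ref{BlackBoundin}) are all met, and that inequality is available unchanged.

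Next I would bound the two non-trivial sums appearing in Eq.(\ref{BlackBoundin}). For the trace term I invoke Lemma~\ref{innerBound} with $\rho=1/d$ to obtain $\sum_t\beta_t\mathbb{E}\langle G_t,2\Sigma_t\rangle\le 2B_1(\beta/d+\frac{1+\log T}{2b})$. For the gradient-noise term I apply Lemma~\ref{IboundZ}, which yields $\mathbb{E}\|\Sigma_{t+1}\|_2\|\widehat{g}_t\|_2^2\le\sigma_{t+1}^2L^2(d+4)^2/N=\sigma_{t+1}^2L^2(d+4)^2/d$ when $N=d$; combining this with $\|\Sigma_{t+1}\|_2=\sigma_{t+1}^2\le\frac{1}{2t\beta b}$ from Eq.(\ref{SigmaRate}) and the bound $\|\Sigma_2\|_2\le\|\Sigma_1\|_2\le\rho$ for the first term, summing over $t$ produces a $\frac{1}{d}$-scaled version of the corresponding term in Theorem~\ref{BObound}. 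Substituting both sums into Eq.(\ref{BlackBoundin}) and collecting everything over the common denominator $4\beta b T$ gives the stated bound, whose dominant contribution is $\frac{(1+\log T)\beta L^2(d+4)^2/d}{4\beta b T}=\mathcal{O}(d\log T/T)$ rather than the $\mathcal{O}(d^2\log T/T)$ of Theorem~\ref{BObound}.

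Finally I would transfer the guarantee from $\bar{J}$ to $f$ exactly as in the earlier proofs: by convexity of $f$ we have $f(\boldsymbol{\mu}_t)\le\bar{J}(\boldsymbol{\theta}_t)$, and since $(\boldsymbol{\mu}^*,\boldsymbol{0})$ minimizes $\bar{J}$ we have $f(\boldsymbol{\mu}^*)=\bar{J}(\boldsymbol{\theta}^*)$, which delivers the claim. The only genuinely new ingredient over Theorem~\ref{BObound} is the $1/N$ reduction in the variance, and this rests entirely on the orthogonality $\boldsymbol{Z}^\top\boldsymbol{Z}=\boldsymbol{I}$ annihilating the cross terms $\boldsymbol{z}_i^\top\boldsymbol{z}_j$ in Lemma~\ref{IboundZ}. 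Because that lemma may be assumed, the main obstacle here is not analytic but bookkeeping: verifying that the isotropic structure genuinely persists at every iteration (so that Lemma~\ref{IboundZ} applies) and tracking that both the factor $1/d$ from $N=d$ and the choice $\rho=1/d$ enter the correct terms, so that the rate improves from $\mathcal{O}(d^2\log T/T)$ to $\mathcal{O}(d\log T/T)$.
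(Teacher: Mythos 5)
Your proposal is correct and follows essentially the same route as the paper: plug the variance-reduced bound of Lemma~\ref{IboundZ} (with $N=d$) and the decay $\sigma_{t+1}^2\le\frac{1}{2t\beta b}$ from Eq.~(\ref{SigmaRate}) into the master inequality Eq.~(\ref{BlackBoundin}), bound the trace term via Lemma~\ref{innerBound} with $\rho=1/d$, and transfer from $\bar{J}$ to $f$ by convexity. Your explicit verification that the update $\Sigma_{t+1}^{-1}=\Sigma_t^{-1}+2\beta b\boldsymbol{I}$ preserves isotropy (so Lemma~\ref{IboundZ} applies at every iteration) is a hypothesis the paper leaves implicit, but it is bookkeeping rather than a different argument.
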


\begin{proof}
The proof is similar to the proof of Theorem~\ref{BObound}, 

From Lemma~\ref{IboundZ} and $N=d$, we know $ \mathbb{E}  \| \Sigma_{t+1} \|_2 \| \widehat{g}_t \|^2_{2} \le  \frac{ \sigma^2_{t+1} L^2(d+4)^2}{d}   $.  Note that $\sigma^2_{t+1}= \|  \Sigma_{t+1} \|_2  \le  \frac{1}{2t\beta b} $ from Eq.(\ref{SigmaRate}), we can obtain that
\begin{align}
    \mathbb{E}  \| \Sigma_{t+1} \|_2 \| \widehat{g}_t \|^2_{2} \le \frac{ \sigma^2_{t+1} L^2(d+4)^2}{d} \le \frac{L^2(d+4)^2}{2t\beta b d} 
\end{align}

Plug it into Eq.(\ref{BlackBoundin}), we get that 
\begin{align}
  &   \frac{1}{T}\left[  \sum_{t=1}^{T} {\mathbb{E}\bar{J}({\boldsymbol{\theta}}_t) }  \right] - \bar{J}({\boldsymbol{\theta}}^*) \\ & \le \frac{\frac{1}{2}  \mathbb{E}\| {\boldsymbol{\mu}}^* - {\boldsymbol{\mu}}_{1}    \|^2_{\Sigma_{1}^{-1}} +  \sum_{t=1}^{T} {\beta_t\mathbb{E} \left< G_t,  2{\boldsymbol{\Sigma}}_{t} \right> }  + \beta^2 \| \Sigma_1 \|_2 L^2(d+4)^2 + \frac{\beta L^2(d+4)^2}{4bd} \sum_{t=1}^{T} { \frac{1}{t} }  }{T \beta } \\
     & \le \frac{\frac{1}{2} R +  \sum_{t=1}^{T} {\beta_t \mathbb{E}\left< G_t,  2{\boldsymbol{\Sigma}}_{t} \right> }  + \beta^2 \| \Sigma_1 \|_2 L^2(d+4)^2 + \frac{\beta L^2(d+4)^2}{4bd} (1+\log T)  }{T \beta }
\end{align}

In addition, from Lemma~\ref{innerBound}, we know that 
\begin{align}
     \sum_{t=1}^{T} {\beta_t\mathbb{E}\left< G_t,  2{\boldsymbol{\Sigma}}_{t} \right> } \le  2 B_1 \left ( \beta \|\Sigma_1 \|_2 +    \frac{1 + \log T}{2b} \right) \le 2 B_1  \left ( \beta \rho + \frac{1 + \log T}{2b} \right) 
\end{align}

Then, we can get that
\begin{align}
  &   \frac{1}{T}\left[  \sum_{t=1}^{T} {\mathbb{E}\bar{J}({\boldsymbol{\theta}}_t) }  \right] - \bar{J}({\boldsymbol{\theta}}^*) \\ & \le \frac{\frac{1}{2} R +   2 B_1  \left ( \beta \rho + \frac{1 + \log T}{2b} \right)   + \beta^2 \rho L^2(d+4)^2 + \frac{\beta L^2(d+4)^2}{4bd} (1+\log T)  }{T \beta } \\
     & =    \frac{ 2bR + 2b\beta\rho(4B_1 + 2\beta L^2(d+4)^2) +   4 B_1 (1 + \log T)   + (1+ \log T) \beta L^2(d+4)^2/d }{ 4\beta b T} \\
      & =    \frac{ 2bR + 2b\beta(4B_1/d + 2\beta L^2(d+4)^2/d) +   4 B_1 (1 + \log T)   + (1+ \log T) \beta L^2(d+4)^2/d }{ 4\beta b T} \\
     & = \mathcal{O} \left( \frac{d \log T}{T} \right)
\end{align}

Since $f(\boldsymbol{x})$ is a convex function, we know that
\begin{align}
   \frac{1}{T}\left[  \sum_{t=1}^{T} {\mathbb{E}f({\boldsymbol{\mu}}_t) }  \right] - f({\boldsymbol{\mu}}^*) &  \le    \frac{1}{T}\left[  \sum_{t=1}^{T} {\mathbb{E}\bar{J}({\boldsymbol{\theta}}_t) }  \right] - \bar{J}({\boldsymbol{\theta}}^*)   
      \le \mathcal{O} \left( \frac{d \log T}{T} \right)
\end{align}

\end{proof}

\section{Discrete Update}
\label{DiscreteUpdateD}

For function $f(\boldsymbol{x})$ over binary variable $\boldsymbol{x} \in \{0,1\}^d$, 
we employ Bernoulli distribution with parameter $\boldsymbol{\boldsymbol{p}}= [p_1, \cdots, p_d  ]^\top$ as the underlying distribution,  where $p_i$ denote the probability of  $x_i=1$. 
The gradient of $ \mathbb{E}_{p}[f({\boldsymbol{x}})]$ w.r.t ${\boldsymbol{p}}$ can be derived as follows:
\begin{align}
    \nabla _ {\boldsymbol{p}} \mathbb{E}_{p}[f({\boldsymbol{x}})] & =  \mathbb{E}_{p}[f({\boldsymbol{x}})\nabla _ {\boldsymbol{p}} \log (p({\boldsymbol{x}};\boldsymbol{p}))   ]  \\ 
    & = \sum_{\boldsymbol{x} \in \{0,1\}^d} {\prod_{i=1}^d{p_i^{\boldsymbol{1}(x_i=1)}(1-p_i)^{\boldsymbol{1}(x_i=0)}} f(\boldsymbol{x}) \nabla _ {\boldsymbol{p}}\log \big( \prod_{i=1}^d{p_i^{\boldsymbol{1}(x_i=1)}(1-p_i)^{\boldsymbol{1}(x_i=0)}} \big)  }   \\
    & = \mathbb{E}_{p}\left[ f(\boldsymbol{x})\nabla _ {\boldsymbol{p}}\big(\sum_{i=1}^d{{\boldsymbol{1}(x_i=1)}\log p_i + {\boldsymbol{1}(x_i=0)}\log(1-p_i) }\big)   \right] \\
    & = \mathbb{E}_{p}\left[  f({\boldsymbol{x}})\boldsymbol{h} \right]
\end{align}
where $\boldsymbol{h}_i = \frac{1}{p_i}\boldsymbol{1}(\boldsymbol{x}_i=1) - \frac{1}{1-p_i}\boldsymbol{1}(\boldsymbol{x}_i=0) $.

For  function $f(\boldsymbol{x})$ over  discrete variable $\boldsymbol{x} \in \{1,\cdots,K\}^d$, 
we employ categorical distribution with parameter $\boldsymbol{\boldsymbol{P}}= [\boldsymbol{p}_1, \cdots, \boldsymbol{p}_d  ]^\top$ as the underlying distribution, where the ${ij}$-th element of $\boldsymbol{P}$ (i.e., $\boldsymbol{P}_{ij}$) denote the probability of  $\boldsymbol{x}_i=j$. 
The gradient of $ \mathbb{E}_{p}[f({\boldsymbol{x}})]$ w.r.t ${\boldsymbol{P}}$ can be derived as follows:
\begin{align}
    \nabla _ {\boldsymbol{P}} \mathbb{E}_{p}[f({\boldsymbol{x}})] & =  \mathbb{E}_{p}[f({\boldsymbol{x}})\nabla _ {\boldsymbol{P}} \log (p({\boldsymbol{x}};\boldsymbol{P}))   ]  \\ 
    & = \sum_{\boldsymbol{x} \in \{1, \cdots, K\}^d} {\prod_{i=1}^d{ \prod_{j=1}^K {\boldsymbol{P}_{ij}^{\boldsymbol{1}(\boldsymbol{x}_i=j)}} } f(\boldsymbol{x}) \nabla _ {\boldsymbol{P}}\log \big( \prod_{i=1}^d{ \prod_{j=1}^K {\boldsymbol{P}_{ij}^{\boldsymbol{1}(\boldsymbol{x}_i=j)}} }  \big)  }   \\
    & = \mathbb{E}_{p}\left[ f(\boldsymbol{x})\nabla _ {\boldsymbol{P}}\big(\sum_{i=1}^d{ \sum_{j=1}^K {\boldsymbol{1}(\boldsymbol{x}_i=j) \log \boldsymbol{P}_{ij} }  }\big)   \right] \\
    & = \mathbb{E}_{p}\left[  f({\boldsymbol{x}})\boldsymbol{H} \right]
\end{align}
where $\boldsymbol{H}_{ij} = \frac{1}{\boldsymbol{P}_{ij}}\boldsymbol{1}(\boldsymbol{x}_{i}=j)$.

\section{Test Problems}

\begin{table*}[h]
\centering
\caption{Test functions }
\label{testF}
\begin{tabular}{ll}
\hline
name                          & function     \\ \hline
 Ellipsoid &  $ f({\boldsymbol{x}}):=\sum\nolimits_{i = 1}^d {10^{\frac{6(i-1)}{d-1}}x_i^2} $   \\
 
Discus  &  $f({\boldsymbol{x}}):= 10^6x_1 + \sum\nolimits_{i = 2}^d {x_i^2} $  \\

$\ell_1$-Ellipsoid &  $ f({\boldsymbol{x}}):=\sum\nolimits_{i = 1}^d {10^{\frac{6(i-1)}{d-1}}|x_i|} $   \\

$\ell_\frac{1}{2}$-Ellipsoid &  $ f({\boldsymbol{x}}):=\sum\nolimits_{i = 1}^d {10^{\frac{6(i-1)}{d-1}}|x_i|^{\frac{1}{2}}} $   \\

Levy                          & $f({\boldsymbol{x}}):= \begin{array}{l}
{\sin ^2}(\pi {w_1}) + \sum\limits_{i = 1}^{d - 1} {{{({w_i} - 1)}^2}(1 + 10{{\sin }^2}(\pi {w_i} + 1))}  + {({w_d} - 1)^2}(1 + {\sin ^2}(2\pi {w_d}))\\
{\rm{where}} \; {w_i} = 1 + ({x_i} - 1)/4,\;i \in \{ 1,...,d\} 
\end{array}$       \\

Rastrigin10 & $f({\boldsymbol{x}}):= 10d + \sum\limits_{i = 1}^{d } { (10^{\frac{i-1}{d-1}}x_i)^2 -10\cos{\big( 2\pi 10^{\frac{i-1}{d-1}}x_i } \big) } $  

\\ \hline
\end{tabular}
\end{table*}

\section{Evaluation of ES and Standard Reinforce Gradient Descent for Black-box Optimization }

In the experiments in section~\ref{experiments} in the paper, ES, CMAES, and our INGO employ the same initialization of mean and variance (default initialization in CMAES). In all experiments, we employ the default step-size of ES in  (Salimans et al.) , i.e.,  0.01.  
 We further provide ES with different step-size (1, 0.1,  0.01, 0.001, 0.0001) on minimization of benchmark test  functions in  the  Figure~\ref{ESstep}.  The dimension is set to d=100. The experimental results (mean value over 20 independent runs) are shown in Figure~\ref{ESstep}. We can see that a small stepsize ($10^{-4}$) converge slowly, while a large stepsize ($10^{0}$) may lead to diverge. The default stepsize ($10^{-2}$) used in the paper seems to be a reasonable choice.

The black-box benchmark test functions are ill-conditioned or nonconvex. For the ill-conditioned ellipsoid test functions, the condition number is $10^6$, and different variables have different scales.  The experimental results show that it is challenging to optimize them well without adaptively update covariance and mean.

We further evaluate the ordinary gradient descent with gradients  estimated by function queries, i.e., standard reinforce type descent. We use the diagonal covariance matrix same as our fast algorithm. All employ the same initialization of mean and variance (default initialization in CMAES) same as setting in the paper. The dimension is also set to d=100.  We evaluate different stepsizes. 
The experimental results  (mean value over 20 independent runs) are shown in Figure~\ref{RGD}. It shows that directly updating with estimated gradients does not work well on test functions. It tends to diverge even for stepsize $10^{-6}$.

This is because the gradient estimator of mean proportional to the inverse of the standard deviation, i.e., $\widehat{g}=f(x)(x-\mu) / \sigma^2 = f(x)z / \sigma$ (elementwise for diagonal case).  Similarly, the update of the variance is also proportional to the inverse of the variance. As a result, the smaller the variance, the larger the update. Thus, directly updating with the estimated gradient is unstable when variance is small. For black-box optimization, a large variance means that more focus on exploration, while a small variance means that more focus on exploitation. An adaptive method adaptively controls the balance between exploration and exploitation. The unstable problem of directly updating with the estimated gradient prevent exploitation (high precision search). The unstable problem of directly applying reinforce type descent for black-box optimization is also discussed in \cite{NES}.

In contrast, the update of mean in our method (e.g., Alg.2) is properly scaled by $\sigma^2$. Moreover, our method updates the  inverse of variance instead of variance itself, which provides us a stable update of variance independent of the scale of variance. Thus, our method can update properly when the algorithm adaptively reduces variance for high precision search.


\begin{figure*}[t]
\centering
\subfigure[ES on Ellipsoid]{
\includegraphics[width=0.3\linewidth]{./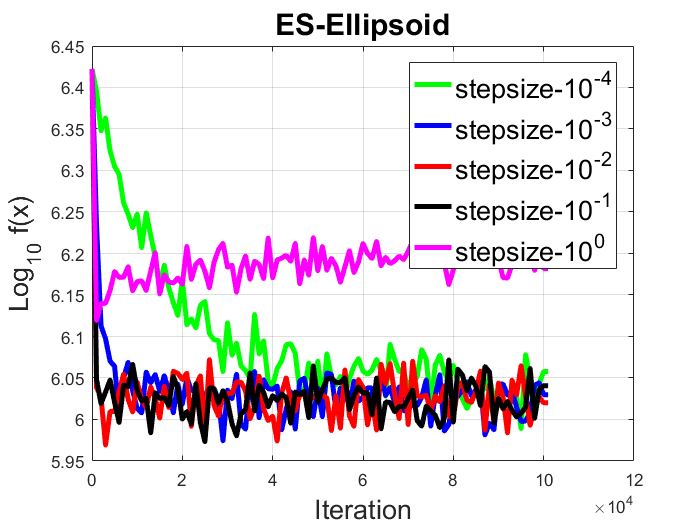}}
\subfigure[ES on $l_1$ Ellipsoid]{
\includegraphics[width=0.3\linewidth]{./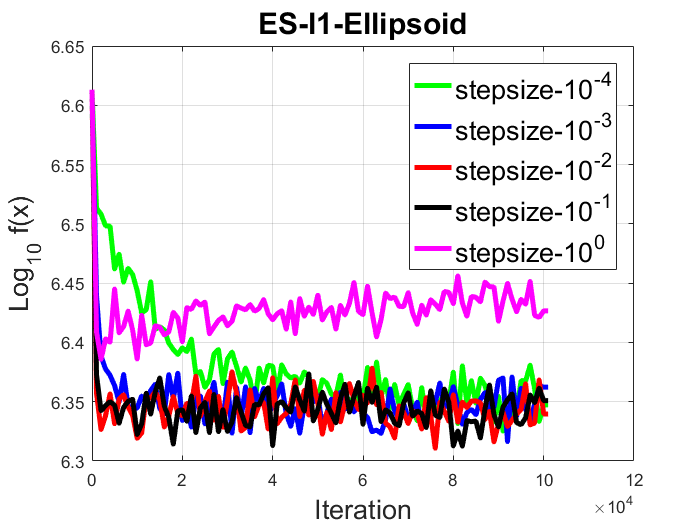}}
\subfigure[ES on $l_{\frac{1}{2}}$ Ellipsoid]{
\includegraphics[width=0.3\linewidth]{./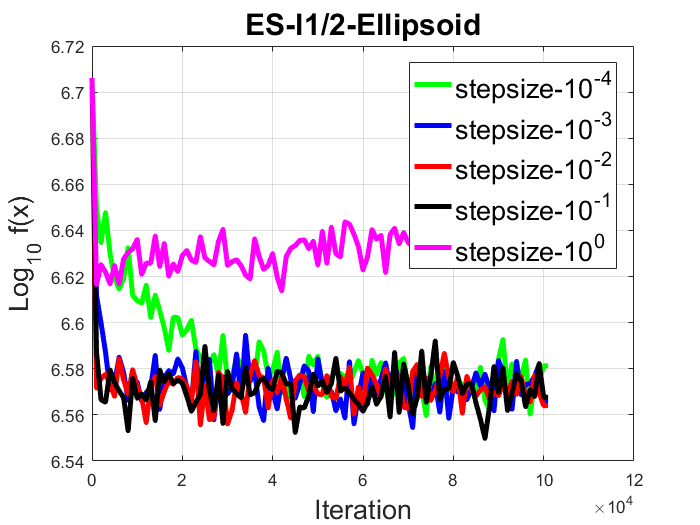}}
\subfigure[ES on Discus]{
\includegraphics[width=0.3\linewidth]{./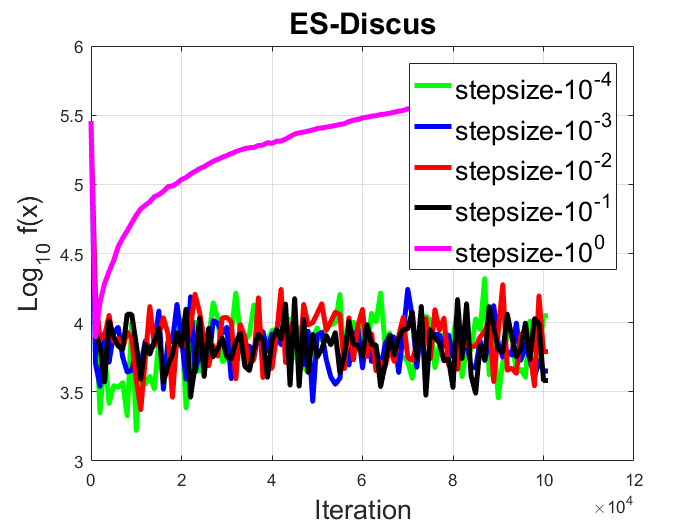}}
\subfigure[ES on Levy]{
\includegraphics[width=0.3\linewidth]{./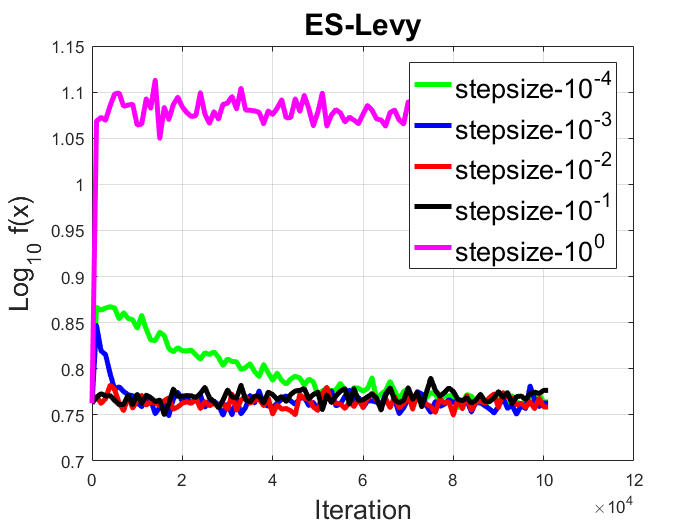}}
\subfigure[ES on Rastrigin10]{
\includegraphics[width=0.3\linewidth]{./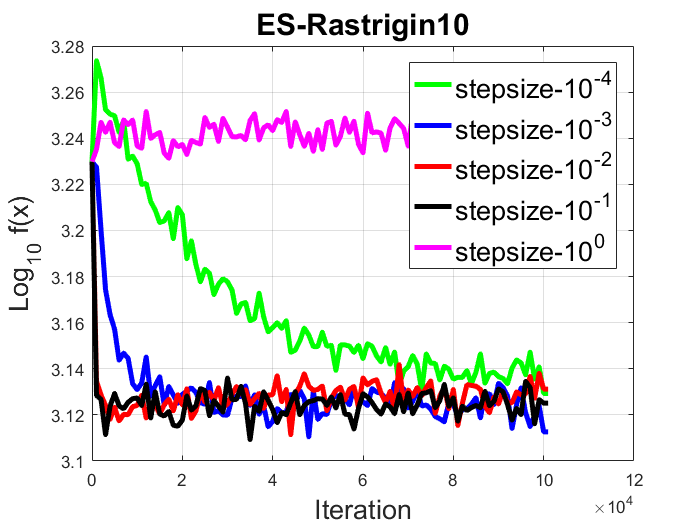}}
\caption{ES with different stepsize on different test functions}
\label{ESstep}
\end{figure*}

\begin{figure*}[t]
\centering
\subfigure[ Reinforce GD on Ellipsoid]{
\label{Ellipsoid}
\includegraphics[width=0.3\linewidth]{./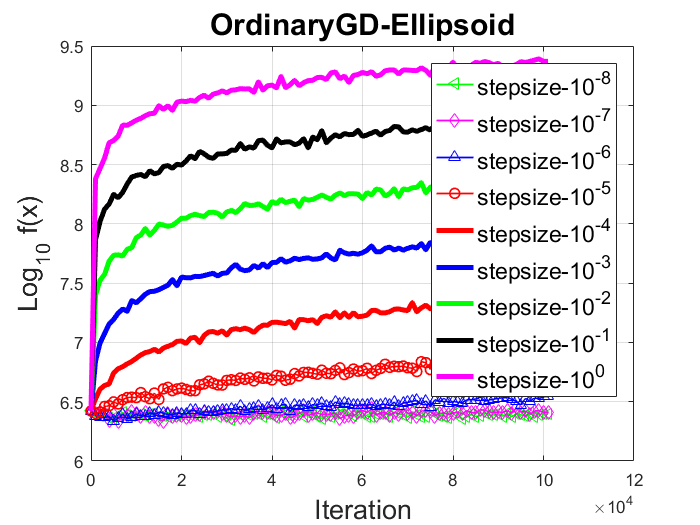}}
\subfigure[ Reinforce GD on $l_1$ Ellipsoid]{
\label{l1Ellipsoid}
\includegraphics[width=0.3\linewidth]{./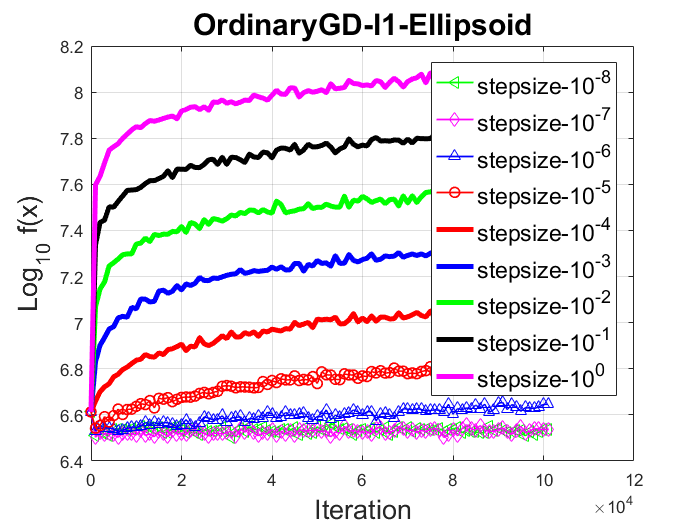}}
\subfigure[ Reinforce GD on $l_{\frac{1}{2}}$ Ellipsoid]{
\includegraphics[width=0.3\linewidth]{./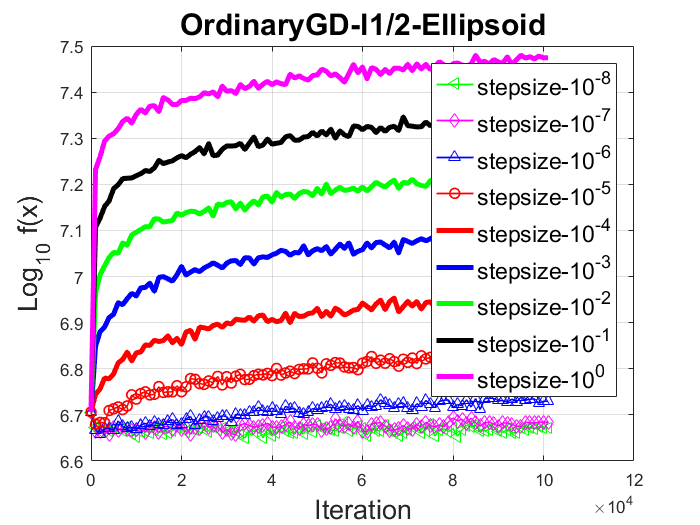}}
\subfigure[ Reinforce GD on Discus]{
\includegraphics[width=0.3\linewidth]{./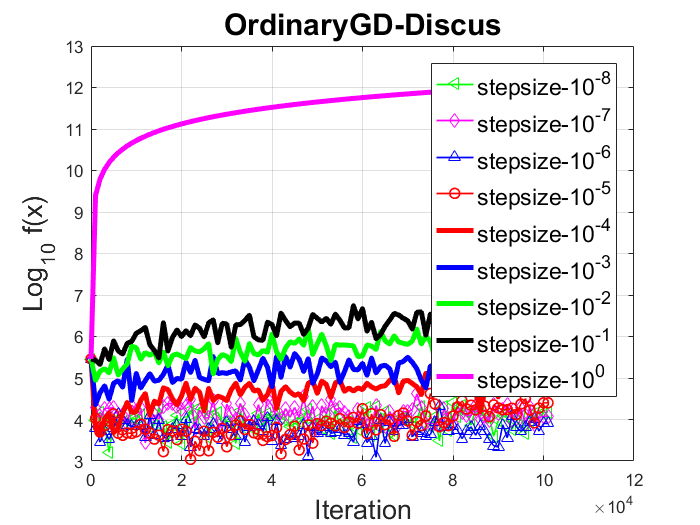}}
\subfigure[ Reinforce GD on Levy]{
\label{Levy}
\includegraphics[width=0.3\linewidth]{./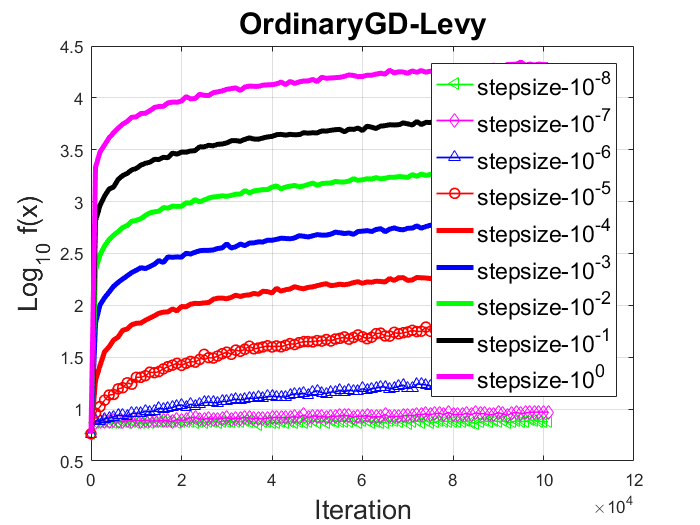}}
\subfigure[ Reinforce GD on Rastrigin10]{
\label{Rastrigin10}
\includegraphics[width=0.3\linewidth]{./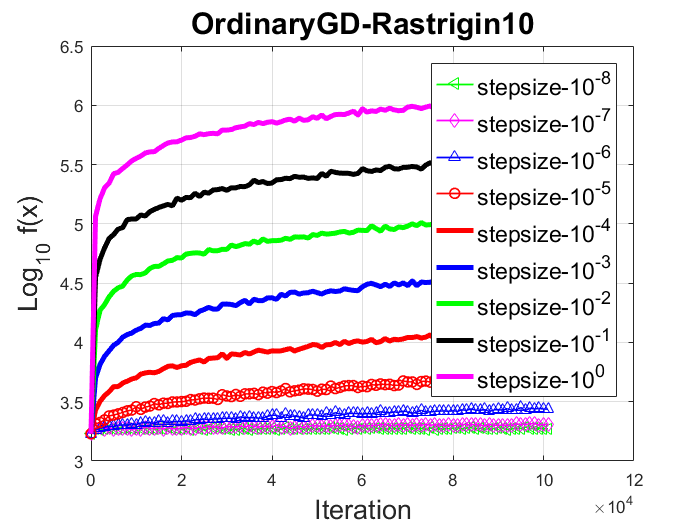}}
\caption{Standard Reinforce Gradient Descent with different stepsize on different test functions}
\label{RGD}
\end{figure*}

\hfill


\end{document}